\newcounter{change}
\newtheorem{defn}{Definition}
\newtheorem{rem}[defn]{Remark}
\newtheorem{lem}[defn]{Lemma}
\newtheorem{prop}[defn]{Proposition}
\newtheorem{assum}[defn]{Assumption}
\newtheorem{thm}[defn]{Theorem}
\newtheorem{cor}[defn]{Corollary}
\newtheorem{problem}[defn]{Problem}
\providecommand{\R}{\ensuremath \mathbb{R}}
\providecommand{\N}{\ensuremath \mathbb{N}}
\newcommand{\Xtrack}{Z}
\newcommand{\Xplan}{P}
\newcommand{\Workspace}{W}
\newcommand{\K}{K}
\newcommand{\Xgoal}{\Xplan\lbl{G}}
\newcommand{\U}{U}
\newcommand{\Eset}{E}
\newcommand{\Rbrs}{\Omega}
\newcommand{\Obs}{\mathcal{O}}
\newcommand{\Sap}{S}
\newcommand{\Poly}{P}
\newcommand{\inset}{X}
\newcommand{\Xpoly}{X}
\newcommand{\Voro}{V}
\newcommand{\Voror}{\bar{\Voro}}
\newcommand{\Ravd}{\Lambda}
\newcommand{\regtext}[1]{\mathrm{\textnormal{#1}}}
\newcommand{\ol}[1]{\overline{#1}}
\newcommand{\tss}[1]{\textsuperscript{#1}}
\newcommand{\vc}[1]{\mathbf{#1}}
\newcommand{\lbl}[1]{_{\regtext{#1}}}
\newcommand{\itv}[1]{\tilde{#1}}
\newcommand{\zeros}{\mathbf{0}}
\newcommand{\eye}{\mathbf{I}}
\newcommand{\proj}[2][]{\regtext{proj}_{#1}\!\left(#2\right)}
\newcommand{\norm}[1]{\left\Vert#1\right\Vert}
\newcommand{\abs}[1]{\left\vert#1\right\vert}
\newcommand{\diag}[1]{\regtext{diag}\!\left(#1\right)}
\newcommand{\union}{\bigcup}
\newcommand{\trans}{^\top}
\newcommand{\tp}{^\intercal} 
\newcommand{\dyn}{\vc{f}}
\newcommand{\dyng}{\vc{g}}
\newcommand{\final}{\lbl{f}}
\newcommand{\other}{\lbl{other}}
\newcommand{\noneti}{\lbl{nonETI}}
\newcommand{\track}{\lbl{track}}
\newcommand{\plan}{\lbl{plan}}
\newcommand{\eti}{\lbl{ETI}}
\newcommand{\plant}{_{\regtext{plan}}}
\newcommand{\feedback}{\lbl{fb}}
\newcommand{\hpoly}{\mathcal{P}}
\newcommand{\conv}[1]{\regtext{conv}\!\left(#1\right)}
\newcommand{\slice}[2][]{\regtext{slice}_{#1}\!\left(#2\right)}
\newcommand{\slicedInt}[1]{\regtext{slicedTraj}\!\left(#1\right)}
\newcommand{\brsfunction}{\mathcal{B}}
\newcommand{\brs}[1]{\brsfunction\!\left(#1\right)}
\newcommand{\hatmap}{\regtext{hat}}
\newcommand{\interior}[1]{\regtext{int}\!\left(#1\right)}
\newcommand{\ts}{t}
\newcommand{\is}{i}
\newcommand{\js}{j}
\newcommand{\ks}{k}
\newcommand{\gams}{\gamma}
\newcommand{\sap}{s}
\newcommand{\tf}{\ts\final}
\newcommand{\es}{e}
\newcommand{\xv}{\vc{x}}
\newcommand{\yv}{\vc{y}}
\newcommand{\wv}{\vc{w}}
\newcommand{\trackv}{\vc{z}}
\newcommand{\planv}{\vc{p}}
\newcommand{\paramv}{\vc{k}}
\newcommand{\Acon}{\vc{A}}
\newcommand{\bcon}{\vc{b}}
\newcommand{\Ccon}{\vc{C}}
\newcommand{\dcon}{\vc{d}}
\newcommand{\uv}{\vc{u}}
\newcommand{\px}{p_x}
\newcommand{\py}{p_y}
\newcommand{\pz}{p_z}
\newcommand{\pth}{\theta}
\newcommand{\pthx}{\theta_x}
\newcommand{\pthy}{\theta_y}
\newcommand{\des}{_\regtext{des}}
\newcommand{\vx}{v_x}
\newcommand{\vy}{v_y}
\newcommand{\vz}{v_z}
\newcommand{\vth}{\omega}
\newcommand{\vthx}{\omega_x}
\newcommand{\vthy}{\omega_y}
\newcommand{\bound}{_\regtext{bound}}
\newcommand{\dl}{\delta}
\newcommand{\fxf}{{f_{x,\regtext{f}}}}
\newcommand{\fxr}{{f_{x,\regtext{r}}}}
\newcommand{\fyf}{{f_{y,\regtext{f}}}}
\newcommand{\fyr}{{f_{y,\regtext{r}}}}
\newcommand{\La}{L_a}
\newcommand{\Lb}{L_b}
\newcommand{\Iz}{I_z}
\newcommand{\mc}{m}
\newcommand{\Bc}{B}
\newcommand{\vel}{v}
\newcommand{\sideslip}{\beta}
\newcommand{\cparam}{c}
\newcommand{\dparam}{d}
\newcommand{\mq}{m}
\newcommand{\radq}{r}
\newcommand{\Iq}{I}
\newcommand{\flf}{{F\lbl{l}}}
\newcommand{\frt}{{F\lbl{r}}}
\newcommand{\nhkt}{k_T} 
\newcommand{\nhn}{n_0}
\newcommand{\nhd}{d_0}
\newcommand{\nhdd}{d_1}
\newcommand{\nhax}{\alpha_x}
\newcommand{\nhay}{\alpha_y}
\newcommand{\nhaz}{\alpha_z}
\newcommand{\cf}{c}
\newcommand{\pk}{\regtext{pk}}
\newcommand{\ka}{k_a}
\newcommand{\kv}{k_v}
\newcommand{\kpk}{k_{pk}}
\newcommand{\kax}{k_{ax}}
\newcommand{\kvx}{k_{vx}}
\newcommand{\kpkx}{k_{\pk x}}
\newcommand{\kay}{k_{ay}}
\newcommand{\kvy}{k_{vy}}
\newcommand{\kpky}{k_{\pk y}}
\newcommand{\kaz}{k_{az}}
\newcommand{\kvz}{k_{vz}}
\newcommand{\kpkz}{k_{\pk z}}
\newcommand{\tpk}{\ts_{\pk}}
\newcommand{\vv}{\vc{v}}
\newcommand{\Rm}{\vc{R}}
\newcommand{\eunit}{\vc{e}_3}
\newcommand{\mass}{m}
\newcommand{\grav}{g}
\newcommand{\thrust}{\tau}
\newcommand{\moi}{\vc{J}}
\newcommand{\angv}{\boldsymbol{\omega}}
\newcommand{\ndim}{{n}}
\newcommand{\pdim}{{p}}
\newcommand{\qdim}{{q}}
\newcommand{\nstate}{\ndim_z}
\newcommand{\nparam}{\ndim_k}
\newcommand{\nlow}{\ndim_x}
\newcommand{\nwork}{\ndim_w}
\newcommand{\nplan}{\ndim_p}
\newcommand{\npwat}{\ndim_{\regtext{PWA}, \ts}}
\newcommand{\nhp}{\ndim\lbl{h}}
\newcommand{\nobs}{\ndim_{\Obs}}
\newcommand{\goal}{\regtext{goal}}
\newcommand{\sample}{\regtext{sample}}
\newcommand{\data}{\regtext{data}}
\newcommand{\mini}{\regtext{min}}
\newcommand{\maxi}{\regtext{max}}
\newcommand{\lin}{^{*}}
\title{\LARGE \bf
Goal-Reaching Trajectory Design Near Danger with \\ Piecewise Affine Reach-avoid Computation
}
\author{Long Kiu Chung*, Wonsuhk Jung*, Chuizheng Kong, and Shreyas Kousik
\thanks{
\textbf{*} indicates the equal contribution.
All authors are with the School of Mechanical Engineering at the Georgia Institute of Technology, Atlanta, GA.
(e-mail: \texttt{\{lchung33, wonsuhk.jung, ckong35\}@gatech.edu; shreyas.kousik@me.gatech.edu}).
\textbf{Website:} \href{https://saferoboticslab.me.gatech.edu/research/parc/}{https://saferoboticslab.me.gatech.edu/research/parc/},    $\quad\quad\quad\quad$ $\quad \quad$    \textbf{GitHub:} \quad \href{  https://github.com/safe-robotics-lab-gt/PARC}{https://github.com/safe-robotics-lab-gt/PARC}
}}
\begin{document}

\maketitle

\thispagestyle{plain}
\pagestyle{plain}

\begin{abstract}
Autonomous mobile robots must maintain safety, but should not sacrifice performance, leading to the classical \textit{reach-avoid} problem: find a trajectory that is guaranteed to \textit{reach} a goal and \textit{avoid} obstacles.
This paper addresses the \textit{near danger} case, also known as a \textit{narrow gap}, where the agent starts near the goal, but must navigate through tight obstacles that block its path.
The proposed method builds off the common approach of using a simplified planning model to generate plans, which are then tracked using a high-fidelity tracking model and controller.
Existing approaches use reachability analysis to overapproximate the error between these models and ensure safety, but doing so introduces numerical approximation error conservativeness that prevents goal-reaching.
The present work instead proposes a Piecewise Affine Reach-avoid Computation (PARC) method to tightly approximate the reachable set of the planning model.
PARC significantly reduces conservativeness through a careful choice of the planning model and set representation, along with an effective approach to handling time-varying tracking errors.
The utility of this method is demonstrated through extensive numerical experiments in which PARC outperforms state-of-the-art reach avoid methods in near-danger goal reaching.
Furthermore, in a simulated demonstration, PARC enables the generation of provably-safe extreme vehicle dynamics drift parking maneuvers.
A preliminary hardware demo on a TurtleBot3 also validates the method.
\end{abstract}

\section{Introduction}

The increasing deployment of autonomous robots in daily tasks necessitates addressing complex operational challenges.
Many of these tasks can be conceptualized as \textit{reach-avoid} problems, wherein a robot must steer its states into a set of goal states without violating safety-critical constraints \cite{fisac2015reach}. 
Achieving both \textit{liveness}---the assurance of goal fulfillment \cite{alpern1985defining}---and adherence to safety constraints is pivotal in these scenarios.
In this paper, we are particularly interested in collision-free goal reaching \textit{near danger}, meaning that the agent starts near the goal, but must navigate past tightly-spaced obstacles, typically less than the width of a robot's body from the goal.
We assume an alternative approach (e.g., \cite{kousik2019safe,kousik2020bridging,chen2021fastrack}) has safely brought the robot near the goal but cannot complete the final goal-reaching maneuver.
As we will show, this problem setup is very difficult for existing robust planning and control approaches to provide reach-avoid guarantees on general dynamical systems due to over-conservativeness.
We note that the specific ``near danger'' condition where these planners and controllers fail is task-dependent.
Some examples include flying a quadrotor drone through a narrow gap, or parallel parking an autonomous car.

\subsection{Reach-Avoid Approaches and Challenges}
The reach-avoid problem is typically solved with one of two approaches, both based on reachability analysis or computing reachable sets of a system's trajectories.
Note that an extended literature review is presented in Appendix \ref{app:related_work}.
\begin{figure}[t]
\centering
\begin{subfigure}[t]{0.99\columnwidth}
    \centering
    \includegraphics[width=0.99\columnwidth]{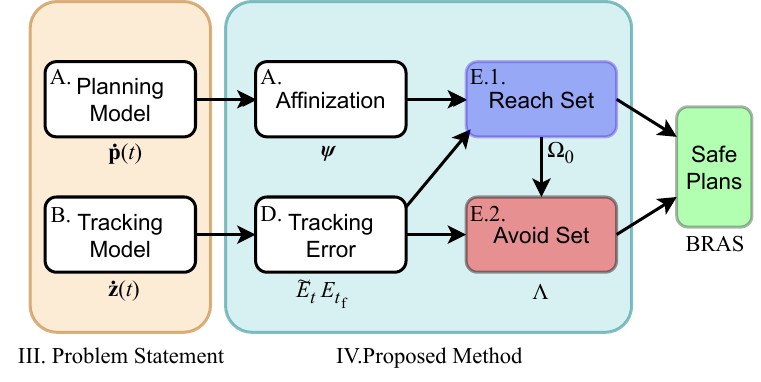}
    \caption{}
    \label{fig:flow_chat}
\end{subfigure}
\begin{subfigure}[t]{0.99\columnwidth}
    \includegraphics[width=0.99\columnwidth]{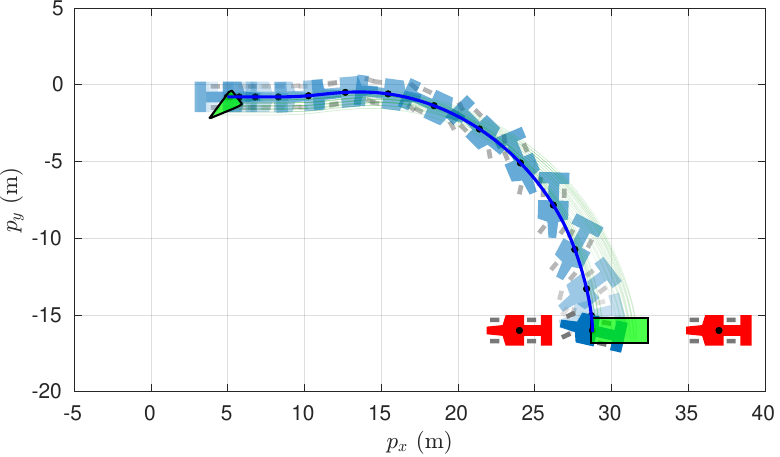}
    \caption{}
    \label{fig:drift_BRAS}
\end{subfigure}
\caption{
Our overall goal is to find safe goal-reaching motion plans near danger.
In (a), we show the components of our proposed Piecewise Affine Reach-avoid Computation (PARC) method, with the paper section number on the top left, and the corresponding symbol on the bottom.
This method enables a new state-of-the-art for safe reach-avoid problems, as evidenced by (b), which shows examples of extreme vehicle maneuvers.
These planned drift parking trajectories are sampled from the Backward Reach-Avoid Set (BRAS, green triangle on the top left) using PARC.
Our vehicle is shown with a blue timelapse.
The two red cars are obstacles. 
The goal set for the center of mass is the green rectangle.
The small size of the BRAS indicates the challenge of finding safe trajectories.
}
\label{fig:front_figure}
\end{figure}
The first approach is \textit{control intervention}, wherein a motion planner proposes actions, while a safety filter \cite{wabersich2023data,hsu2023safety} prevents the robot from executing actions for which its reachable set would intersect obstacles.
This approach often requires compensating for the worst-case behavior of the motion planner \cite{chen2021fastrack,garg2021multi}, which we find leads to conservativeness (i.e., sacrificing liveness) in our experiments (Section \ref{sec:experiments}).
The second approach is a \textit{planner-tracker} framework, wherein a motion planner proposes plans that are safe by leveraging the knowledge of a low-level controller's tracking ability.
This controller can be synthesized \cite{majumdar2017funnel,tedrake2010lqr} or hand-engineered \cite{kousik2020bridging,kousik2019safe}; the key difference is that the planner need not be treated as adversarial, which reduces conservativeness.
However, this approach typically limits the space of possible plans due to the need to compute the robot's tracking ability.
Furthermore, one must compensate for \textit{tracking error} between the simplified \textit{planning model} and the high-fidelity \textit{tracking model} (i.e., the robot cannot perfectly track plans).

Both control intervention and planner-tracker approaches suffer from challenges in numerical representation due to the need to \textit{underapproximate} the set of states that reach the target set and \textit{overapproximate} those that avoids obstacles when guaranteeing safety and liveness in implementation.
Implementations such as sums-of-squares polynomials \cite{majumdar2017funnel,kousik2020bridging,landry2018reach}, grid-based partial differential equation (PDE) solutions \cite{chen2021fastrack}, or set-based methods \cite{kousik2019safe,althoff2021set} all introduce additional conservativeness purely due to compensating for numerical representation error.
This challenge is exacerbated for systems with high-dimensional, nonlinear dynamics or observations; machine learning can address some of these concerns \cite{bansal2021deepreach,michaux2023reachability,dawson2022safe,xiao2023barriernet,selim2022safe}, at the expense of losing guarantees.

\subsection{Proposed Method Summary and Context}
In this work, we leverage two key insights to perform goal-reaching near danger.
First, we follow the literature \cite{kousik2020bridging,kousik2019safe} in using a planner-tracker framework to reduce conservativeness by treating the planner and tracking controller as cooperative, as opposed to adversarial.
Second, by representing our motion planner as a \textit{piecewise affine} (PWA) system \cite{christophersen2007optimal}, we \textit{tightly approximate} its reachable set, with minimal conservativeness from numerical approximation error.
We demonstrate how this unlocks an efficient and less conservative approach to a variety of reach-avoid problems from the robotics literature, and provides the solution to a reachable set computation for drift parking vehicle dynamics.
An overview and example of our approach is shown in Fig.~\ref{fig:front_figure}.

Our method fits in the context of both control and planning.
We build off of a long history of discrete-time PWA system analysis in optimal control  (e.g., \cite{thomas2006robust,kerrigan2002optimal,rakovic2005robust}), in which we extend to the continuous-time reach-avoid problem for robot motion planning.
Furthermore, our approach is complementary to sampling-based motion planners such as rapidly exploring random tree (RRT) \cite{lavalle1998rapidly}; our approach could be applied in the future to compute a safety funnel around a sampled motion plan, or we could leverage sampling to seed our reachability analysis.

\subsection{Contributions}
\begin{enumerate}
    \item We propose an efficient, parallelizable method for Piecewise Affine Reach-avoid Computation (PARC) to underapproximate the Backwards Reach-Avoid Set (BRAS) of a planning model represented as a low-dimensional discrete-time, time-variant PWA system.
    Our method extends to discrete-time, time-variant affine systems as a particular case.
    
    \item We propose a scheme to incorporate tracking error into PARC to compute a subset of the \textit{reach set} and overapproximate the \textit{avoid set} for a trajectory planning model.
    This allows us to represent the BRAS that compactly represents all trajectory plans that reach the goal and avoid obstacles when tracked by the robot, represented by a high-fidelity dynamical model.
    
    \item We stress-test PARC on examples drawn from the safe control and planning literature: a 4-D TurtleBot, a 6-D planar quadrotor, a 10-D near-hover quadrotor, and a 13-D general quadrotor.
    We show that PARC outperforms state-of-the-art methods such as Fast and Safe Tracking (FaSTrack) \cite{chen2021fastrack}, Neural Control Lyapunov Barrier Function (Neural CLBF) \cite{dawson2022safe}, Reachability-based Trajectory Design (RTD) \cite{kousik2020bridging,kousik2019safe}, and KinoDynamic motion planning via Funnel control (KDF) \cite{verginis2022kdf} in \textit{near danger} reach-avoid tasks due to reduced conservativeness from the combination of planning model and set representation choice and better implementation of tracking error.

    \item We demonstrate PARC on a simulated 6-D vehicle model to plan safe, extreme drift parallel parking maneuvers, and on a 4-D physical TurtleBot as a preliminary hardware experiment.
\end{enumerate}

\textit{Paper Organization:}
Next, we introduce mathematical objects used throughout our formulation (Section \ref{sec:preliminaries}).
We then present our problem formulation (Section \ref{sec:problem_formulation}) and proposed approach (Section \ref{sec:method}).
We assess PARC with numerical experiments (Section \ref{sec:experiments}), then demonstrate a drift parking extreme vehicle dynamics example in simulation, extending beyond the state of the art in reachability-based safe motion-planning and control methods (Section \ref{sec:drift_demo}).
We also showcase PARC's ability to extend to real robots with a preliminary hardware experiment (Section \ref{sec:hardware_demo}).
Finally, we discuss takeaways and limitations in Section \ref{sec:conclusion}.
For presentation's sake, we move most proofs and experiment details to appendices, and call out \textbf{key results} in the text.
\section{Preliminaries}\label{sec:preliminaries}
We now introduce our notation conventions and define H-polytopes and PWA system, which we use to represent sets and dynamics of planning model throughout the paper.
We then present a one-step Backward Reachable Set (BRS) computation that is the core of our proposed PARC method.

\subsection{Notation}
The real numbers are $\R$ and the natural numbers are $\N$.
Scalars are lowercase italic and sets are in uppercase italic (e.g., $a \in A \subset \R^n$).
Vectors are lowercase bold, and matrices are uppercase bold (e.g., $\vc{y} = \vc{A}\vc{x}$).
An $n\times m$ matrix of zeros is $\zeros_{n\times m}$, and an $n\times n$ identity matrix is $\eye_{n}$.
The $p^{\regtext{th}}$ to $q^{\regtext{th}}$ dimensions of vector $\xv$ and the submatrix of $\vc{A}$ defined by row dimensions $p_1$:$q_1$ and column dimensions $p_2$:$q_2$ are denoted as $(\vc{x})_{p:q}$ and $(\vc{A})_{p_1:q_1, p_2:q_2}$, respectively.

\subsection{H-Polytopes}

In this work, we represent all sets as H-polytopes or unions of H-polytopes, which have extensive algorithm and toolbox support \cite{herceg2013multi,althoff2015introduction}.
An $\ndim$-dimensional H-polytope $\hpoly(\Acon, \bcon) \subset \R^{\ndim}$ is a convex set parameterized by $\nhp$ linear constraints $\Acon \in \R^{\nhp \times \ndim}$ and $\bcon \in \R^{\nhp}$:
\begin{align}
    \hpoly(\Acon, \bcon) &= \left\{\xv\ |\ \Acon\xv\leq\bcon\right\}.
\end{align}
Unless otherwise stated, we assume all H-polytopes are compact.

Consider a pair of H-polytopes $\Poly_1 = \hpoly(\Acon_1, \bcon_1)$, $\Poly_2 = \hpoly(\Acon_2, \bcon_2)$.
We make use of intersections $\Poly_1 \cap \Poly_2$, Minkowski sums ($\Poly_1 \oplus \Poly_2 = \left\{\xv + \yv\ |\ \xv \in \Poly_1, \yv \in \Poly_2\right\}$), Pontryagin differences ($\Poly_1 \ominus \Poly_2 = \left\{\xv \in \Poly_1\ |\ \xv + \yv \in \Poly_1\ \forall\ \yv \in \Poly_2\right\}$), set differences $\Poly_1 \setminus \Poly_2$, Cartesian products $\Poly_1 \times \Poly_2$, and convex hulls $\conv{\Poly_1,\Poly_2}$.
We also project an $n$-dimensional polytope $\Poly$ into dimensions $p:q$ as $\proj[\pdim:\qdim]{\Poly}$, and slice it in the dimensions $p:q$ with respect to some constant $\xv_0\in\R^{q-p+1}$ as $\slice[\pdim:\qdim]{\Poly, \xv_0}$.
These operations are detailed in Appendix \ref{app:h_poly_ops}.

\subsection{PWA Systems}
In this section, we formally define the discrete-time, time-variant PWA system, which becomes the main objective of our reachability analysis.

The state of the system, $\xv(\ts) \in \inset \subset \R^{\ndim}$, evolves at discrete times $\ts = 0, \Delta\ts, \cdots, \tf - \Delta\ts$ according to the difference equation:
\begin{align}\label{eq:pwa_dyn}
\xv(\ts + \Delta\ts) = \Ccon_{\ell,\ts} \xv(\ts) + \dcon_{\ell,\ts},
\end{align}
where $\Ccon_{\ell,\ts} \in \R^{\ndim \times \ndim}$ and $\dcon_{\ell,\ts} \in \R^{\ndim}$ define the dynamics of the $\ell^{\regtext{th}}$ PWA region at time $\ts$.
The index $\ell$ is given by
\begin{align} \label{eq:pwa_idx}
    \ell &= \min \{\is \ |\ \xv(\ts) \in \hpoly(\Acon_{\is,\ts}, \bcon_{\is,\ts}), \is=1,\dots,\npwat\},
\end{align}
where $\hpoly(\Acon_{\is,\ts}, \bcon_{\is,\ts})$ is the $\is^{\regtext{th}}$ of the $\npwat$ PWA regions at time $\ts$.
We ensure that the PWA regions form a \textit{tessellation} of the domain $\inset$ at each time step $\ts$, meaning their union covers the entire domain and their interiors do not intersect:
\begin{align}\label{eq:bound_con}
    \inset = \union_{\is = 1}^{\npwat} &\hpoly(\Acon_{\is,\ts}, \bcon_{\is,\ts}) \\
    \left\{\xv\ |\ \Acon_{\is, \ts} \xv < \bcon_{\is, \ts}\right\} \cap &\hpoly(\Acon_{\js, \ts}, \bcon_{\js, \ts}) = \emptyset,
\end{align}
for all $\ts = 0, \Delta\ts, \cdots, \tf - \Delta\ts$ and $\is, \js \in \{1,\dots,\npwat\}$, $\is \neq \js$.
With this condition and the decision rule \eqref{eq:pwa_idx}, the PWA-system has well-defined dynamics on the domain $\inset$.
Hence, for a given initial state $\xv_0$, the PWA system has a unique state at each time $\ts$, which we denote as $\xv(\ts; \xv_0)$.
We drop $\xv_0$ and simplify it to $\xv(\ts)$ when it is contextually evident.

We say that the state $\xv(\ts)$ has the \textit{mode} $\sap_\ts = \ell$ if $\xv(\ts) \in \hpoly(\Acon_{\ell, \ts}, \bcon_{\ell, \ts})$.
Further, if the mode of $\xv(0), \xv(\Delta\ts), \cdots, \xv(\tf - \Delta\ts)$ is $\sap_0, \sap_{\Delta\ts}, \cdots, \sap_{\tf - \Delta\ts}$ for some input $\xv(0) \in \inset$, we denote the \textit{mode sequence} for $\xv(0)$ as:
\begin{align}\label{eq:mode_sequence}
    \Sap = (\sap_0, \sap_{\Delta\ts}, \cdots, \sap_{\tf - \Delta\ts}).
\end{align}
For all $\xv(0)$ that share the same mode sequence,
the dynamics reduce from time-variant PWA to time-variant affine up to time $\tf$.
Mathematically, the discrete-time, time-variant affine dynamics with respect to the mode sequence \eqref{eq:mode_sequence} is:
\begin{align}\begin{split}\label{eq:time_variant_affine}
    &\xv(\ts + \Delta\ts) = \Ccon_{\sap_{\ts},\ts} \xv(\ts) + \dcon_{\sap_{\ts},\ts}\\
    &\forall\ \xv(\ts) \in \hpoly(\Acon_{\sap_{\ts},\ts}, \bcon_{\sap_{\ts},\ts}),
\end{split}\end{align}
where $\ts = 0, \Delta\ts, \cdots, \tf - \Delta\ts$.

The reduction of a PWA system to specific affine dynamics is the core idea that enables the efficient underapproximating BRS computation of PARC.
Note, taking advantage of fixed mode sequences is well-established in robotics and controls (e.g., \cite{thomas2006robust,desimini2020robust,kerrigan2002optimal,wensing2023optimization}), but we believe we are the first to show its utility in the safe reach-avoid computation of robot planning models.

Finally, to ensure safety for realistic systems, we must extend our system definition from discrete to continuous time.
We do so with linear interpolation:
at each $\ts = 0, \Delta\ts, \cdots, \tf - \Delta\ts$, $\xv(\ts')$ for $\ts < \ts' < \ts + \Delta\ts$ is defined by
\begin{align}\label{eq:cont_time_approx}
    \xv(\ts') = \xv(\ts) + (\xv(\ts + \Delta\ts) - \xv(\ts))(\tfrac{\ts' - \ts}{\Delta\ts}).
\end{align}
As will be shown in Section \ref{subsec:parc_avoid}, the use of linear interpolation to bridge between discrete and continuous time is necessary to enable avoid set computation with H-polytopes.
Similar reachability techniques to ours have been shown to work with Bernstein polynomials \cite{michaux2023can}, but only for forward reachability, to the best of our knowledge.
Nonetheless, it may still be possible for PARC to handle alternative interpolation methods such as cubic splines or B\'ezier curves by using other set representations such as polynomial zonotopes \cite{kochdumper2020sparse}.
In this paper, we opted for the most straightforward approach of linear interpolation and leave investigation of other interpolation methods as future work.

\subsection{One Step BRS via Inverse Affine Map}\label{subsec:one_step_brs}

Finally, we take advantage of the H-polytope analytic solution for inverse affine maps to enable BRS computation using the following well-known lemma (c.f., \cite{thomas2006robust,vincent2021reachable,herceg2013multi}). 
\begin{lem}\label{lem:one_step_BRS}
Consider an affine system $\xv(\ts+\Delta\ts) = \Ccon\xv(\ts)+\dcon$.
Define the one-step BRS as the map
$\brs{\hpoly(\Acon, \bcon), \Ccon, \dcon} = \left\{\xv |\ \Acon\xv_{\regtext{next}}\leq\bcon,\ \xv_{\regtext{next}} = \Ccon\xv+\dcon\right\}$.
This one-step BRS is in fact an H-polytope:
\begin{align}\label{eq:1stepbrs}
\begin{split}
    \brs{\hpoly(\Acon, \bcon), \Ccon, \dcon} = \hpoly(\Acon\Ccon, \bcon-\Acon\dcon).
\end{split}
\end{align}
\end{lem}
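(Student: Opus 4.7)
The plan is to prove this by direct substitution, showing that the defining constraint on the pre-image point $\xv$ is equivalent to a linear inequality system, which by the definition of an H-polytope gives the claimed representation. Since the statement is a set equality, I will verify it by showing both inclusions simultaneously through logical equivalence of the membership conditions.

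First, I would unpack the definition of $\brs{\hpoly(\Acon, \bcon), \Ccon, \dcon}$: a point $\xv$ lies in this set if and only if there exists $\xv_{next}$ such that $\xv_{next} = \Ccon\xv + \dcon$ and $\Acon\xv_{next} \leq \bcon$. Because $\xv_{next}$ is uniquely determined by $\xv$ through the affine map, the existential quantifier collapses, and membership reduces to the single condition $\Acon(\Ccon\xv + \dcon) \leq \bcon$.

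Next, I would distribute the matrix multiplication and rearrange:
\begin{align*}
\Acon(\Ccon\xv + \dcon) \leq \bcon
\quad\Longleftrightarrow\quad
(\Acon\Ccon)\xv \leq \bcon - \Acon\dcon.
\end{align*}
The right-hand side is precisely the defining inequality system of $\hpoly(\Acon\Ccon, \bcon - \Acon\dcon)$, so the two sets have identical membership conditions and are therefore equal. This also shows the BRS is an H-polytope in the claimed form, completing the proof.

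There is no real obstacle here; the argument is essentially a one-line substitution. The only subtlety worth noting is that the inverse affine map need not be well-defined pointwise (if $\Ccon$ is singular, multiple $\xv$ can map to the same $\xv_{next}$), but this does not affect the argument because the BRS is defined as a pre-image rather than through an explicit inverse, so no invertibility assumption on $\Ccon$ is required. If desired, I would include a brief remark that $\Acon\Ccon \in \R^{\nhp \times \ndim}$ and $\bcon - \Acon\dcon \in \R^{\nhp}$ have the correct dimensions for $\hpoly(\cdot,\cdot)$ to be well-formed.
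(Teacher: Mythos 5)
Your proof is correct and follows essentially the same route as the paper: substitute $\xv_{next} = \Ccon\xv + \dcon$ into the constraint $\Acon\xv_{next}\leq\bcon$ and rearrange to obtain the H-polytope $\hpoly(\Acon\Ccon, \bcon-\Acon\dcon)$. Your added remark that no invertibility of $\Ccon$ is required matches the observation the paper itself makes immediately after the lemma.
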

\begin{proof}
\begin{align}
    \brs{\hpoly(\Acon, \bcon), \Ccon, \dcon}
    &= \left\{\xv |\ \Acon(\Ccon\xv+\dcon)\leq\bcon\right\},\\
    &= \hpoly(\Acon\Ccon, \bcon-\Acon\dcon).
\end{align}
\end{proof}
\noindent We take advantage of the fact that repeated applications of Lemma \ref{lem:one_step_BRS} to an H-polytope still produce an H-polytope.
Notice that \eqref{eq:1stepbrs} does not require invertibility of $\Ccon$, conferring an advantage over other set polytope representations such as constrained zonotopes \cite{scott2016constrained}.
This makes H-polytopes an excellent choice for representing the BRS.
\section{Problem Statement}
\label{sec:problem_formulation}

Our overall goal is to design a reach-avoid trajectory using a simple \textit{planning model}, which enables computational efficiency while also accounting for the tracking error when the plan is tracked by a robot, represented by a \textit{tracking model}.
We now describe each of these models and the problem we seek to solve.

\subsection{Planning Model}
A planning model provides a low-fidelity but easy-to-compute model of the robot's dynamics.
In this work, we use parameterized reference trajectories, which can allow quick computation \textit{and} strong safety guarantees \cite{kousik2020bridging,kousik2019safe,hess2014formal}. 

Before defining the model, we define a workspace state $\wv \in \Workspace \subset \R^{\ndim_w}$ (i.e., $\R^2$ or $\R^3$) that represents the position of the robot. The planning state which we denote as $\planv \in \Xplan \subset \R^{\nplan}$ is chosen to include the workspace state, such that $\planv = [\wv\tp, \planv\other\tp]\tp$ where $\planv\other \in \Xplan\other \subset \R^{{\nplan}\other}$ represents states such as heading, velocity, etc.

We then define the planning model as
\begin{equation}
    \label{eq:planning-model}
    \dot\planv(\ts) = \dyn\plan(\ts, \planv(\ts), \paramv), \quad \planv(0) = \planv_0
\end{equation}
where $\planv_0$ $\in$ $\Xplan \subset \R^{\nplan}$ defines the \textit{initial planning state}, $\paramv \in \K \subset \R^{\nparam}$ represents a \textit{trajectory parameter}, and $\ts \in [0,\tf]$ is the time horizon. 
A solution $\planv: [0, \tf] \rightarrow \Xplan$ is referred to as a \textit{plan}, or equivalently, \textit{reference trajectory}.
For a given initial planning state and trajectory parameter, the plan at time $\ts$ is $\planv(\ts; \planv_0, \paramv)$.
We restrict the class of planning models to enable computing reachable sets for parameterized planning models:
\begin{assum}[Extended Translation Invariance (ETI) in Workspace]
\label{ass:eti}
Changing the workspace states either does not affect the planning dynamics $\dyn\plan$ in the workspace or else the dynamics are constant:
\begin{align}
    \norm{\tfrac{\partial\dyn_{\wv}}{\partial\wv}\dyn_{\wv}}_2 = 
    \norm{\tfrac{\partial\dyn_{\wv}}{\partial\wv}\tfrac{\partial\dyn_{\wv}}{\partial\planv}}\lbl{F} = 
    \norm{\tfrac{\partial\dyn_{\wv}}{\partial\wv}\tfrac{\partial\dyn_{\wv}}{\partial\paramv}}\lbl{F} = 0
\end{align}
where $\dyn_{\wv} = ({\dyn\plan})_{1:\ndim_w}$ and $\norm{\cdot}\lbl{F}$ indicates the Frobenius norm.
\end{assum}
\noindent This is not overly restrictive; a variety of workspace-ETI planning models are shown in Section \ref{sec:experiments}.

\subsection{Tracking Model}

The tracking model is a high-fidelity model of robot dynamics, denoted
\begin{equation}
    \label{eq:tracking-model}
    \dot{\trackv}(\ts) = \dyn\track({\trackv(\ts)},{\uv\feedback(\ts)}), ~~~ \trackv(0)=\trackv_0,
\end{equation}
where $\trackv_0 \in \Xtrack \subset \R^{\nstate}$ is the initial state.
The tracking state $\trackv$ includes the planning state $\planv$, i.e., $\trackv=[\planv\tp, \trackv\other\tp]\tp$.
Given a plan $\planv$ parameterized by $\paramv$, we use a feedback tracking controller $\uv\feedback(\ts, \trackv(t); \paramv) \in \U$ (e.g., proportional–integral–derivative (PID) or model predictive control (MPC)) to control the tracking model.
We assume $\Xtrack$ and $\U$ are compact and the dynamics are Lipschitz such that trajectories uniquely exist \cite{grant2014theory}.
For a particular $\trackv_0$, we denote the resulting \textit{realized  trajectory} (i.e., solution to \eqref{eq:tracking-model}) as $\trackv(\ts; \trackv_0, \paramv)$ at time $\ts$.

\begin{rem}
    Designing the feedback controller $\uv\feedback$ is not the focus of this work.
    Conveniently, specifying a parameterized (i.e., restricted) space of plans $\planv$ simplifies controller design, as we find experimentally in Section~\ref{exp:time-varying}.
    Our overall goal is to compensate for the fact that the controller is imperfect.
\end{rem}

\subsection{Reach-Avoid Problem}
We now define our reach-avoid problem: find all plans for which the tracking model safely reaches a goal set at time $\tf$.
We denote the goal $\ol{\Xgoal} \subset \Workspace$ and obstacles $\ol{\Obs} \subset \Workspace$; the overline indicates these are restricted to the workspace, which we modify later to enable the proposed method.

Note, we only consider static obstacles, because our goal is to plan a robot's motion near danger.
The danger resulting from moving or adversarial obstacles depends upon accurately predicting obstacle motion, which introduces an independent variable unrelated to our purpose.
We leave an extension to dynamic scenes for future work, but we anticipate that it can be accomplished by adapting reachability-based methods such as \cite{vaskov2019towards,vaskov2019not}.

\begin{problem}[Backward Reach-Avoid Set (BRAS)]
\label{prob:reach-avoid-set}
Given the the planning model \eqref{eq:planning-model}, tracking model \eqref{eq:tracking-model},  goal set $\ol{\Xgoal}$, obstacles $\ol{\Obs}$, and a fixed time $\tf$, find the set of planning models states and trajectory parameters for which the robot reaches $\ol{\Xgoal}$ exactly at time $\tf$ while avoiding collisions with obstacles $\ol{\Obs}$ :
\begin{equation}\label{eq:bras_formulation}
\begin{split}
\regtext{BRAS}(\tf, \ol{\Xgoal}, \ol{\Obs}) =
        \{&(\planv_0, \paramv) \in \Xplan \times \K\ \mid
        \forall~\trackv_0~\regtext{s.t.}\\
    &(\trackv_0)_{1:\nplan} = \planv_0, \\
    &(\trackv(\tf; \trackv_0, 
    \paramv))_{1:\ndim_w} \in \ol{\Xgoal},~\regtext{and} \\
    &(\trackv(\tau; \trackv_0, \paramv))_{1:\ndim_w}\notin \ol{\Obs} \quad \forall\tau \in [0, \tf]\}
\end{split}
\end{equation}
\end{problem}

Several remarks are in order to discuss our fixed time horizon, choice of goals and obstacles, and choice of tracking error model.

\begin{rem}[Fixed Final Time]
\label{rem:fixed final time}
    A fixed time horizon is reasonable because we seek to compute a trajectory near danger; the robot is already near its goal, but reaching the goal is difficult.
    The BRAS is ideally a large set of initial conditions far from obstacles that can be used as a target set for other methods to reach safely.
\end{rem}

\begin{rem}[Goal and Obstacles]
\label{rem:goal set expansion}
    We restrict the goal and obstacles to the workspace only to simplify exposition; we demonstrate more general goals and obstacles, such as in the robot's orientation, in Sections \ref{sec:drift_demo}.
\end{rem}

\begin{rem}[Disturbances to the System]
To simplify exposition, we treat tracking error as a lumped uncertainty representation between planning and control, encompassing multiple error sources---disturbances, state estimation, object detection, and trajectory prediction.
Separately analyzing these sources would introduce confounding variables, complicating the interpretation of our results.
Instead, we consolidate them into a single metric of tracking error.
For example, one can design a robust tracking controller that yields a tracking error bound, robust to external disturbances and model uncertainties \cite{chen2021fastrack, bechlioulis2014low}.
Our planning level only considers the result of those errors as a tracking error bound without separately accounting for each.
Alternatively, if the bounds of inertial uncertainties \cite{michaux2023can}, dynamic disturbance \cite{selim2022safe, alanwar2023data}, and perception uncertainty \cite{kousik2020bridging} are known (which are strong assumptions), then planner-tracker frameworks similar to ours have been shown to maintain reach-avoid guarantees in the past by ``buffering'' the bounds into the tracking error \cite{kousik2020bridging,vaskov2019towards,selim2022safe,chen2021fastrack}.
In Section \ref{sec:hardware_demo}, we show the preliminary results of a hardware experiment that successfully includes localization drift from the inertial measurement unit (IMU) and the encoders.\end{rem}

\subsection{Example System: TurtleBot}\label{subsec:turtlebot_example}

We use the TurtleBot unicycle model as an illustrative example from the safe planning and control literature \cite{chen2021fastrack,kousik2020bridging}.
Our planning model is a Dubins car:
\begin{align} \label{eq:dubinscar}
    \dot\planv = \frac{d}{dt}\begin{bmatrix}
        \px \\ \py \\ \theta
    \end{bmatrix}
    = \begin{bmatrix}
        v\des\cos\theta \\
        v\des\sin\theta \\
        \omega\des
    \end{bmatrix},
\end{align}
with workspace (position) states $\wv = [\px,\py]\tp$, other (heading) state $\planv\other = \theta$, and trajectory parameters $\paramv = [v\des,\omega\des]\tp$ representing desired linear velocity and angular velocity.
Note that a Dubins car model \eqref{eq:dubinscar} is ETI in the workspace (Assum. \ref{ass:eti}) since changing the workspace states does not affect the planning dynamics (i.e. $\norm{\tfrac{\partial\dyn_{\wv}}{\partial\wv}\dyn_{\wv}}_2 = 
    \norm{\tfrac{\partial\dyn_{\wv}}{\partial\wv}\tfrac{\partial\dyn_{\wv}}{\partial\planv}}\lbl{F} = 
    \norm{\tfrac{\partial\dyn_{\wv}}{\partial\wv}\tfrac{\partial\dyn_{\wv}}{\partial\paramv}}\lbl{F} = 0$.)

We use the following tracking model:
\begin{align}
\dot\trackv = \frac{d}{dt}\begin{bmatrix}
    p_x \\
    p_y \\
    \theta \\
    v
\end{bmatrix} = \begin{bmatrix}
    v\cos \theta \\
    v\sin \theta \\
    u_\omega \\
    u_v
\end{bmatrix},
\end{align}
where $\trackv\other = [\theta, v]\tp$ are heading and linear velocity, and $\uv\feedback = [u_\omega, u_v]\tp$ is implemented as an iterative linear quadratic regulator (iLQR) tracking controller \cite{kong2021ilqr} that commands angular velocity and linear acceleration.

We define a goal set as a unit box, $\ol{\Xgoal} = \{\wv \mid p_x, p_y \in [-1,1]\}$, and an obstacle near the goal, $\ol{\Obs} = \{\wv \mid p_x \in [-1.75, -1.25], p_y \in [-0.25, 0.25] \}$ (see Fig. \ref{fig:turtlebot-reachset} and \ref{fig:avoid_set_turtlebot}).
\section{Proposed Method}\label{sec:method}

In this section, we detail our approach to solving the BRAS problem.
First, we convert the planning model to a PWA system to enable our Piecewise Affine Reach-Avoid Computation (PARC).
We then adapt the method in \cite{thomas2006robust} to compute a subset of the reach set of the PWA planning model \textit{without} tracking error, to simplify exposition.
Next, we propose a novel approach to \textit{overapproximate} the \textit{continuous time} avoid set of the PWA system without tracking error (\textbf{Key result:} Theorem \ref{thm:avoid_set}).
To enable incorporating tracking error between the planned and realized trajectories, we adapt the methods in \cite{kousik2019safe,kousik2020bridging} to estimate time-varying tracking error.
Finally, we propose PARC to incorporate tracking error and compute the BRAS (\textbf{Key result:} Theorem \ref{thm:avoid_set_real}), enabling provably-safe goal-reaching.

\subsection{Reformulation to Enable PARC}\label{subsec:PWA_convert}

\subsubsection{Augmented State, Goal, and Obstacles}
Before proceeding, to enable PARC, we define an \textit{augmented planning state} $\xv(\ts) \in \Xpoly = \Workspace\times\K\times\Xplan\other \subset \R^{\nwork + \nparam + {\nplan}\other} = \R^{\nlow}$ at time $\ts$ as:
\begin{align}\label{eq:x_order}
    \xv(\ts) = \begin{bmatrix}
        \wv(\ts)\\
        \paramv\\
        \planv\other(\ts)
    \end{bmatrix}.
\end{align}
We perform this reordering and stacking because, as we will see later in Lem. \ref{lem:pwa_eti}, $\wv$ and $\paramv$ are ETI states, so indexing them as the first states of $\xv$ will simplify avoid set computation.
We similarly augment the goal set $\ol{\Xgoal}$ and the obstacles $\ol{\Obs}=\union_{i=1}^{\ndim_\Obs}\ol{\Obs}_\is$, which we assume are union of H-polytopes:
\begin{align}
\begin{split}
    \Xgoal &= \ol{\Xgoal}\times\K\times \Xplan\other
    \\
    \Obs_\is &= \ol{\Obs}_\is \times \K\times \Xplan\other,
\end{split}
\end{align}
where $\is = 1, \cdots, \ndim_\Obs$.

\subsubsection{PWA Planning Model}\label{subsec:pwa_planning_model_conversion}
We approximate the planning model $\dyn\plan$ as a discrete-time, time-variant PWA to enable computation.
First, we define linearization points $\xv\lin_{\ts, \is} \in \Xpoly$, $\is = 1, \cdots, \ndim\lin_\ts$ for each $\ts = 0, \Delta\ts, \cdots, \tf - \Delta\ts$, which can be generated by gridding or uniform sampling.
We define the PWA regions as the Voronoi partition of the linearization points \cite{casselman2009new} (see Appendix \ref{app:voronoi_affinization}).
Each $\is$\tss{th} region is an H-polytope $\hpoly(\Acon_{\is,\ts},\bcon_{\is,\ts})$.
Then, in each region, at each time $\ts = 0, \Delta\ts, \cdots, \tf - \Delta\ts$, we define the dynamics $\Ccon_{\is,\ts}$ and $\dcon_{\is,\ts}$ by Taylor expanding about the linearization points:

\begin{subequations}\label{eq:pwa_conversion}
\begin{align}
    \xv(\ts+\Delta\ts) &= \Ccon_{\is, \ts}\xv(\ts) + \dcon_{\is,\ts},\\
    \Ccon_{\is,\ts} &= \eye_{\nlow} + \Delta\ts\frac{\partial\dyng\plant(\ts, \xv)}{\partial\xv} \Bigr|_{{\xv=\xv\lin_{\is,\ts}}},\\
    \dcon_{\is,\ts} &= \Delta\ts(\dyng\plant(\ts, \xv\lin_{\is,\ts}) - \frac{\partial\dyng\plant(\ts, \xv)}{\partial\xv} \Bigr|_{{\xv=\xv\lin_{\is,\ts}}}\xv\lin_{\is, \ts}),\\
    \dyng\plant(\ts, \xv) &= \begin{bmatrix}
        \dyn_{\wv}\\
        \zeros_{\nparam} \\
        ({\dyn\plan})_{(\nwork+1):\nplan}\\
    \end{bmatrix} 
\end{align}
\end{subequations}
Depending on the planning model, there may be some PWA regions with identical dynamics.
If the union of the regions is convex, we combine them into a single PWA region (note, the linearization points are discarded).

From this point onwards, we will be using the PWA system defined by \eqref{eq:pwa_conversion} as the planning model instead of \eqref{eq:planning-model}.
That is, the tracking error, BRAS, and reach-avoid guarantees will be computed with respect to the tracking model \eqref{eq:tracking-model} and the planning model \eqref{eq:pwa_conversion}.
Thus, PARC is concerned with the tracking error between \eqref{eq:tracking-model} and \eqref{eq:pwa_conversion}, but not the error induced by affinizing \eqref{eq:planning-model} to \eqref{eq:pwa_conversion}.

\begin{rem}
Choosing a larger $\Delta\ts$ would induce higher mismatch between the tracking model and the planning model, thus increasing the conservativeness of the method.
In scenarios with highly dynamic systems or cluttered environments, selecting a smaller $\Delta \ts$ reduces conservativeness, as demonstrated in Fig.~\ref{fig:avoid_approx}.
As will be shown in the rest of this section, PARC's reach-avoid guarantees will still be maintained regardless of the choice of $\Delta \ts$, and its computation time is inversely proportional to $\Delta \ts$.
\end{rem}

\subsection{PARC: Reach Set without Tracking Error}\label{subsec:parc_reach}
Without accounting for tracking error, the reach set of a planning model is the set of initial conditions $\xv_0$ of trajectories that are \textit{guaranteed} to reach the goal region $\Xgoal$ at time $\tf$.
Instead of naively computing the \textit{full} reach set, which scales exponentially in memory, computation time, and number of set representations with the number of timesteps, PARC is \textit{planning aware}: given an \textit{expert plan}, we compute only a local, underapproximated BRS efficiently for the neighborhood that shares the same mode sequence as the \textit{expert plan}.
Fixing the mode sequence reduces the PWA system to a time-variant affine system, so the BRS is a \textit{single} H-polytope using the inverse affine map of the goal (see \eqref{eq:1stepbrs}), with computation time scaling linearly with the number of timesteps and quadratically with dimensions from matrix multiplication.

\begin{assum}[Expert Plan w/o Tracking Error]\label{as:expert_traj}
We have access to at least one goal-reaching initial condition $\xv_0$ for which the PWA system reaches the goal \emph{without necessarily avoiding obstacles}:
\begin{align}
    \xv(\tf; \xv_0) \in \Xgoal.
\end{align}
\end{assum}
\noindent Obtaining expert plans is system-dependent, but is not difficult for short-horizon and low-dimensional PWA models, because we do not require the expert plan to avoid obstacles.
Thus, one option is to solve a two-point boundary value problem using nonlinear MPC \cite{bonalli2019gusto} or mixed-integer linear program (MILP) \cite{lofberg2004yalmip}, both of which are \textit{complete}.
In practice, it is often sufficient to uniformly sample trajectory parameters $\paramv$ and roll out the PWA planning model, since the robot starts out near the goal.
This assumption is even easier to fulfill if the planning model is a time-variant affine (which we use for a general 3D quadrotor and drift vehicle planning in Section \ref{sec:experiments}, \ref{sec:drift_demo}) since plans have only one mode.

If Assumption \ref{as:expert_traj} is satisfied, we can obtain $\xv_0$'s mode sequence $\Sap = (\sap_0, \cdots, \sap_{\tf - \Delta\ts})$ using \eqref{eq:pwa_idx}.
That is,
\begin{align}\label{eq:ms_identify}
    \sap_\ts &= \min \{\is \ |\ \xv(\ts; \xv_0) \in \hpoly(\Acon_{\is,\ts}, \bcon_{\is,\ts}), \is=1,\dots,\npwat\},
\end{align}
for all $\ts = 0, \Delta\ts, \cdots, \tf - \Delta\ts$.
Then, we can compute the exact $\tf$-time BRS using the method in \cite{thomas2006robust}:
\begin{prop}[Reach Set w/o Tracking Error]
\label{prop:reach_set}
    Consider a mode sequence $\Sap = (\sap_0, \cdots, \sap_{\tf - \Delta\ts})$.
    Then the \textit{exact} $\tf$-time BRS $\ol{\Rbrs}_0$ of the goal for the affine dynamics associated with $\Sap$ is
    \begin{align}
    \begin{split}\label{eq:brs_no_error}
    \ol{\Rbrs}_{\tf} &= \Xgoal,\\
    \ol{\Rbrs}_{\ts} &= \brs{\ol{\Rbrs}_{\ts+\Delta\ts}, \Ccon_{\sap_{\ts},\ts}, \dcon_{\sap_{\ts},\ts}} \cap \hpoly(\Acon_{\sap_{\ts},\ts}, \bcon_{\sap_{\ts},\ts}),
    \end{split}
    \end{align}
    for all $\ts = 0,\Delta\ts,\cdots,\tf - \Delta\ts$.
\end{prop}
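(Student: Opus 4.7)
The plan is to prove the proposition by backward induction on the time index $\ts$, using Lemma \ref{lem:one_step_BRS} as the inductive engine. The crucial observation is that once the mode sequence $\Sap$ is fixed, the PWA planning dynamics reduce to a time-variant \emph{affine} system $\xv(\ts+\Delta\ts) = \Ccon_{\sap_\ts,\ts}\xv(\ts) + \dcon_{\sap_\ts,\ts}$, so at every step the one-step BRS operator $\brsfunction$ is applied to a genuine affine map, which is exactly the setting in which Lemma \ref{lem:one_step_BRS} is valid.

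For the base case $\ts = \tf$, set $\ol{\Rbrs}_{\tf} = \Xgoal$; this is the defining condition of a $\tf$-time BRS of the goal, and by construction $\Xgoal = \ol{\Xgoal}\times\K\times\Xplan\other$ is an H-polytope (Cartesian product of H-polytopes), so the representation assumed by Lemma \ref{lem:one_step_BRS} holds. For the inductive step, suppose that $\ol{\Rbrs}_{\ts + \Delta\ts}$ is exactly the set of states $\xv(\ts+\Delta\ts)$ from which the affine dynamics associated with $\Sap$ drive the state into $\Xgoal$ at time $\tf$, and that $\ol{\Rbrs}_{\ts+\Delta\ts}$ is an H-polytope. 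Then by the definition of $\brsfunction$ and Lemma \ref{lem:one_step_BRS},
\begin{align*}
\brs{\ol{\Rbrs}_{\ts+\Delta\ts}, \Ccon_{\sap_\ts,\ts}, \dcon_{\sap_\ts,\ts}}
= \left\{\xv \,\middle|\, \Ccon_{\sap_\ts,\ts}\xv + \dcon_{\sap_\ts,\ts} \in \ol{\Rbrs}_{\ts+\Delta\ts}\right\}
\end{align*}
is again an H-polytope, and by the inductive hypothesis it is precisely the set of $\xv(\ts)$ whose affine one-step image lies in $\ol{\Rbrs}_{\ts+\Delta\ts}$, i.e., the set of $\xv(\ts)$ reaching $\Xgoal$ at $\tf$. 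Iterating this $\tf/\Delta\ts$ times closes the induction.

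The only real obstacle I anticipate is a bookkeeping one, namely making clear what ``exact'' means: the iteration yields the exact BRS \emph{of the affine system obtained by freezing the mode sequence to $\Sap$}, not of the full PWA system. A point $\xv(0) \in \ol{\Rbrs}_0$ reaches the goal under the PWA dynamics only if its actual realized mode sequence coincides with $\Sap$; that coincidence is not enforced by the iteration in \eqref{eq:brs_no_error}. This is exactly why PARC is described as an \emph{underapproximation} when viewed against the true PWA BRS: one could optionally intersect each $\ol{\Rbrs}_\ts$ with the PWA region $\hpoly(\Acon_{\sap_\ts,\ts},\bcon_{\sap_\ts,\ts})$ to certify the mode sequence, but the exactness claim in the proposition itself refers only to the frozen affine system and therefore follows directly from the induction above without any such intersection. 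I would emphasize this distinction in the write-up to avoid confusing the per-step exactness (Lemma \ref{lem:one_step_BRS}) with the global underapproximation property that the surrounding text advertises.
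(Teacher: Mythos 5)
Your proof is correct and follows essentially the same route as the paper, which simply notes that the result follows by repeated application of Lemma \ref{lem:one_step_BRS}; your backward induction just makes that iteration explicit. Your closing remark correctly captures the intended reading of ``exact'' (exact for the affine dynamics under the frozen mode sequence $\Sap$, hence an underapproximation of the full PWA BRS), which matches the paper's framing.
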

\begin{proof}
    This follows directly from \eqref{eq:time_variant_affine} and Lemma \ref{lem:one_step_BRS}.
\end{proof}
\noindent 
This means the reference trajectories generated from any augmented planning state within $\ol{\Rbrs}_0$ are guaranteed to reach $\Xgoal$ at time $\tf$ with mode sequence $\Sap$.
This process, along with an expert plan, is illustrated in Fig.~\ref{fig:turtlebot-reachset}.

Traditionally, methods such as the \texttt{reachableSet} function in MPT3 \cite{herceg2013multi} compute the \textit{full} BRS of the PWA system instead of that of only a particular mode sequence.
While computing the full BRS would return all possible reachable states and do not require knowledge of an expert plan, the computation time grows exponentially with the number of BRS steps and quickly becomes intractable due to the large number of polytopes, as shown in Table \ref{table:mpt3vsparc}.
Instead, Proposition \ref{prop:reach_set} enables a much more reasonable computation time by isolating the affine dynamics of a single mode sequence and keeping only one polytope regardless of the number of BRS steps.

\begin{figure}[!htb]
\centering
\includegraphics[width=1\columnwidth]{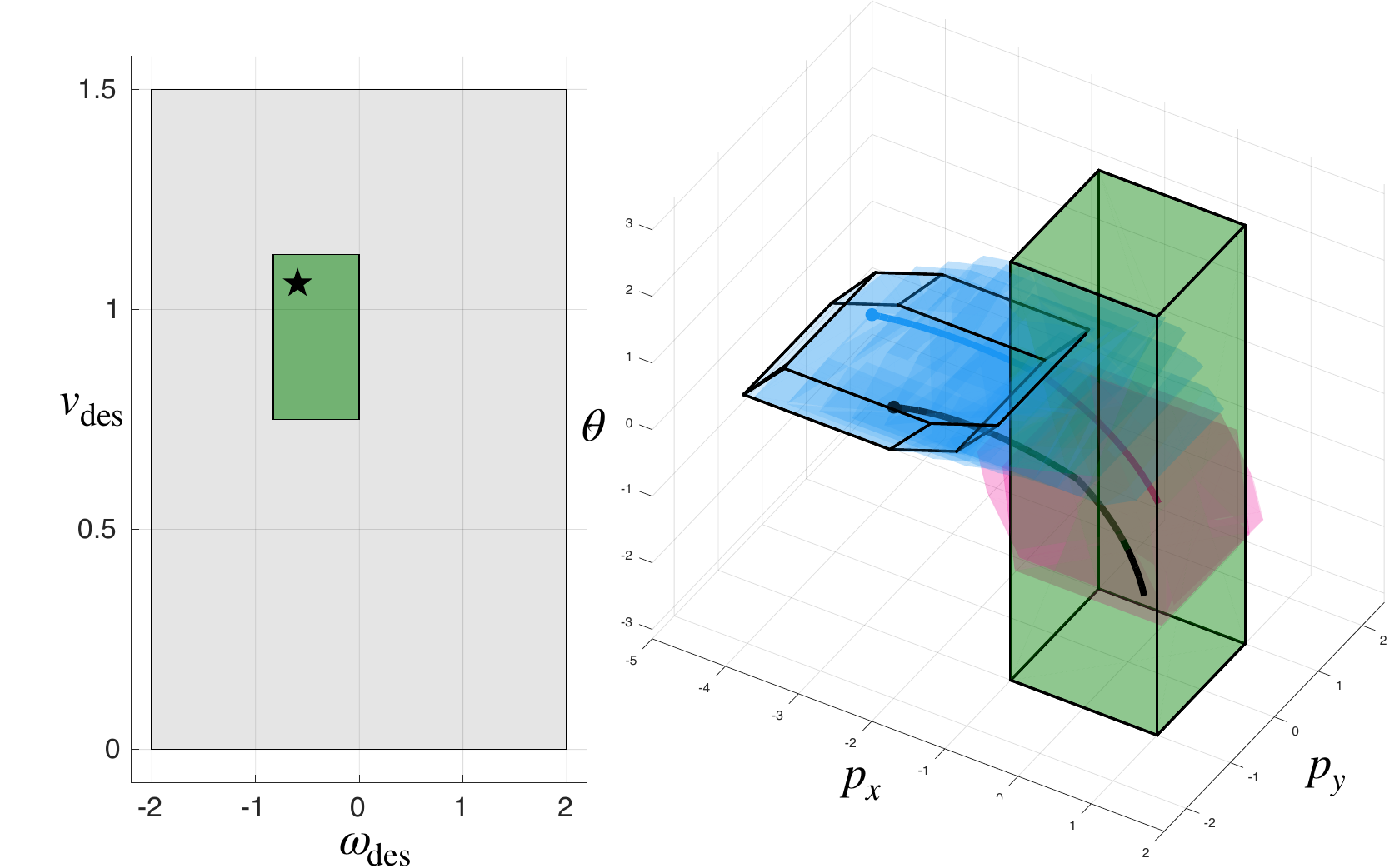}
\vspace{-1em}
\caption{(Right) 
Visualization of the reach set computation in Section \ref{subsec:parc_reach} for the TurtleBot example (Section \ref{subsec:turtlebot_example}), projected onto the planning state space $\Xplan$, when $\planv_0=[-4,0,\pi/5]\tp$, $\tf=4(\unit{s})$.
The goal set is the green polytope extended to all $\theta \in [-\pi,\pi]$.
An expert plan, computed by solving a two-point boundary value problem, passes through two different types of modes (blue and magenta line).
We compute the corresponding $\tf$-time BRS $\ol{\Rbrs}_{0}$ (blue polytope with solid outline).
Intermediate BRSs $\ol{\Rbrs}_{\ts}$ are the polytopes without outlines, colored by the type of mode.
(Left)
Illustration of the trajectory parameter space; the grey region represents $\K$, the green region represents the projection of $\ol{\Rbrs}_{0}$ onto $\K$.
After reach-set computation, the state $\xv\lbl{sample}=[\planv\lbl{sample}\tp, \paramv\lbl{sample}\tp]\tp$ is sampled from $\ol{\Rbrs}_{0}$: black dot (right) denotes the $\planv\lbl{sample}$ and black star (left) denotes the $\paramv\lbl{sample}$. The resulting trajectory is denoted as the black line which reaches the goal.}
\label{fig:turtlebot-reachset}
\vspace{-1em}
\end{figure}

\begin{table}[!ht]
\centering
\begin{tabular}{r | r r | r r }
     & \multicolumn{2}{|c|}{\textbf{Computation Time (s)}} & \multicolumn{2}{|c}{\textbf{Number of Polytopes}}\\
     \hline
     \textbf{BRS Steps} & \textbf{MPT3} & \textbf{Prop. \ref{prop:reach_set}} & \textbf{MPT3} & \textbf{Prop. \ref{prop:reach_set}}\\
     \hline
     1 & 0.03 & 0.0015 & 16 & 1\\
     2 & 0.27 & 0.0022 & 100 & 1\\
     3 & 1.45 & 0.0038 & 484 & 1\\
     4 & 6.63 & 0.0036 & 2116 & 1\\
     5 & 29.46 & 0.0027 & 8836 & 1\\
     26 & timeout & 0.0155 & --- & 1
\end{tabular}
\caption{Computation time and number of polytopes for 1, 2, 3, 4, 5, and 26-step BRS for the TurtleBot example (Section \ref{subsec:turtlebot_example}) using the \texttt{reachableSet} function in MPT3 \cite{herceg2013multi} and Proposition \ref{prop:reach_set}.
Since MPT3 computes the full BRS of the PWA system, the computation time quickly becomes intractable due to the large number of polytopes.
In contrast, Proposition \ref{prop:reach_set} maintains a reasonable computation time by keeping only one polytope at each step, characterized by the affine dynamics corresponding to the mode sequence of the expert plan.}
\label{table:mpt3vsparc}
\end{table}

\subsection{PARC: Avoid Set without Tracking Error}\label{subsec:parc_avoid}
Without accounting for tracking error, the avoid set is the set of pairs of initial planning model state and trajectory parameter such that any plans generated from \textit{outside} the avoid set are guaranteed to not collide with the obstacles at all \textit{continuous} time before $\tf$.

Before we can overapproximate the avoid set between two-time steps, we must adapt our notion of extended translation invariance to the PWA planning model:
\begin{lem}[PWA-ETI in Workspace]\label{lem:pwa_eti}
    Suppose $\dyn\plan$ is an ETI system in the workspace as in Assum. \ref{ass:eti}.
    Also suppose we convert it to a PWA system as in Section \ref{subsec:pwa_planning_model_conversion} and \eqref{eq:pwa_conversion}.
    Then in mode $\is$ at time $\ts$, each state $\xv$ of the PWA system obeys
    \begin{align}\label{eq:eti_pwa_def}
        \norm{\hat{\Ccon}_{i,t}^{\Workspace \times \K}{(\hat{\Ccon}_{i,t}\xv +\dcon_{i,t})_{1:(\nwork+\nparam)}}}\lbl{2} = 0 \quad \forall \xv\in \Xpoly
    \end{align}
    where $\hat{\Ccon}_{i,t}=\Ccon_{i,t}-\eye_{\nlow}$ and $\hat{\Ccon}^{\Workspace \times \K}_{i,t}$ denotes the $(\nwork + \nparam) \times (\nwork+\nparam)$ upper-left submatrix of $\hat{\Ccon}_{i,t}$.
\end{lem}
\begin{proof}
    This follows directly from Assum. \ref{ass:eti} and \eqref{eq:pwa_conversion}.
\end{proof}
\noindent To alleviate the conservativeness of our approach, it is beneficial to extend the ETI property to subspaces beyond the workspace $\Workspace$.
If the planning model is ETI in subspace $\Xpoly\eti \subset \R^{{\nlow}\eti}$, it must fulfill the following conditions:
\begin{subequations} \label{eq:eti_pwa_reorder}
\begin{align}
\norm{\hat{\Ccon}_{i,t}^{\Xpoly\eti}(\hat{\Ccon}_{i,t}\xv+\dcon_{i,t})_{1:{\nlow}\eti}}\lbl{2} &= 0 \quad \forall \xv \in \Xpoly \\
\exists \Xpoly\noneti \quad \regtext{s.t.} \quad \Xpoly &= \Xpoly\eti \times \Xpoly\noneti.
\end{align}
\end{subequations}
\noindent where $\hat{\Ccon}^{\Xpoly\eti}_{i,t}$ denotes the ${\nlow}\eti \times {\nlow}\eti$ upper-left submatrix of $\hat{\Ccon}_{i,t}$.
We refer to the first ${\nlow}\eti$ states of $\xv \in \Xpoly$ as \textit{ETI states}.
Note that $\wv$ and $\paramv$ are ETI states because of Lemma \ref{lem:pwa_eti} and $\Xpoly = \Workspace \times \K \times \Xplan\other$ from \eqref{eq:x_order}.

To lower PARC's conservativeness, we reorder $\xv(\ts)$ such that the most ETI states are placed at the beginning of $\planv\other$, as identified by \eqref{eq:eti_pwa_reorder}.
To simplify exposition for avoid-set computation, we now denote $\xv(\ts)$ as:
\begin{subequations}
\begin{align}
    \xv(\ts) &= \begin{bmatrix}
        \xv\eti(\ts)\\
        \xv\noneti(\ts)
    \end{bmatrix},\\
    \xv\eti(\ts) &= \begin{bmatrix}
        \wv(\ts) \\
        \paramv\\
        {\planv\other}\eti(\ts)
    \end{bmatrix}
\end{align}
\end{subequations}
where ${\planv\other}\eti(\ts)$ are the first ${\nlow}\eti - \nwork - \nparam$ states in $\planv\other(\ts)$ that are ETI states, and $\xv\noneti(\ts)\in \Xpoly\noneti \subset \R^{{\nlow}\noneti}$ are the remaining states in $\planv\other(\ts)$.
Thus, ${\nlow}\noneti = \nlow - {\nlow}\eti$.
Note that all sets and matrices including $\Obs$, and $\Xgoal$ are relabeled accordingly.

We are now ready to state our first key contribution: overapproximating the avoid set without tracking error.
The following theorem overapproximates the continuous-time BRS of an obstacle and intersects it with the discrete-time reach-set of the PWA planning model to identify unsafe plans:

\begin{thm}[Avoid Set w/o Tracking Error] \label{thm:avoid_set}
Consider the obstacle $\Obs_\is$, some time $\ts \in \{0, \Delta\ts, \cdots, \tf - \Delta\ts\}$, and some mode sequence $\Sap = (\sap_0, \cdots, \sap_\ts, \sap_{\ts+\Delta\ts}, \cdots, \sap_{\tf-\Delta\ts})$.
Suppose we compute the $(\tf-\ts)$-time BRS $\ol{\Rbrs}_\ts$ as in \eqref{eq:brs_no_error}.
Let $A = \brs{\proj[1:{\nlow}\eti]{\Obs_i} \times \R^{{\nlow}\noneti}, \Ccon_{\sap_\ts, \ts}, \dcon_{\sap_\ts, \ts}}$, and
define the intermediate avoid set $\ol{\Ravd}_{\is, \ts, \ts} \subset \ol{\Rbrs}_\ts$ as:
\begin{align}\label{eq:int_avoid_set}
    \ol{\Ravd}_{\is, \ts, \ts} = \conv{\proj[1:{\nlow}\eti]{A}\times\Xpoly\noneti, \Obs_\is}\cap\ol{\Rbrs}_\ts
\end{align}
Then, for any $\xv(\ts) \in \ol{\Rbrs}_\ts \setminus \ol{\Ravd}_{\is, \ts, \ts}$, $\xv(\ts+\Delta\ts) = \Ccon_{\sap_\ts, \ts}\xv(\ts) + \dcon_{\sap_\ts, \ts}$,
\begin{align}\label{eq:int_avoid_con}
    \left\{\xv(\ts) + \gams(\xv(\ts+\Delta\ts) - \xv(\ts))\ |\ 0\leq\gams\leq1\right\}\cap\Obs_\is = \emptyset.
\end{align}
\end{thm}
\begin{proof}
See Appendix \ref{app:extended_proof}.    
\end{proof}

In essence, the intermediate avoid set from \eqref{eq:int_avoid_set} overapproximates the avoid set contained in each intermediate BRS.
Under specific circumstances, the intermediate avoid set can be computed without convex hull or projection.
See Appendix \ref{subsec:avoid_set_nohull} for details.

\begin{rem}
We note that PARC can be readily extended to moving obstacles by replacing $\Obs_\is$ in \eqref{eq:int_avoid_set} with the over-approximated swept volume of the obstacle between $\ts$ and $\Delta\ts$, assuming its motion is known or can be over-approximated.
This strategy has shown to be effective on other set-based planner-tracker frameworks \cite{vaskov2019towards,vaskov2019not}.
For ease of exposition, we consider only static obstacles in this paper, as many of the methods we are comparing PARC against do not handle moving obstacles \cite{chen2021fastrack, dawson2022safe, verginis2022kdf}, and reach-avoid for static obstacles in near-danger setups is already a difficult problem for existing literature.
\end{rem}

We now propose a straightforward scheme to construct the avoid set using Theorem \ref{thm:avoid_set}, where reference trajectories generated from outside the avoid set are guaranteed to avoid the obstacle:
\begin{cor}[Avoid Set Computation]
\label{cor:avoid_set_computation}
    We compute the avoid set $\ol{\Ravd}_{\is, \ts, 0}$ by computing the $\ts$-time BRS of $\ol{\Ravd}_{\is, \ts, \ts}$.
    Mathematically,
    \begin{align}\label{eq:avoid_brs}
        \ol{\Ravd}_{\is, \ts, \js} &= \brs{\ol{\Ravd}_{\is, \ts, \js+\Delta\ts}, \Ccon_{\sap_{\js}, \js}, \dcon_{\sap_{\js}, \js}}\cap\hpoly(\Acon_{\sap_{\js}, \js}, \bcon_{\sap_{\js}, \js}) ,
    \end{align}
    for all $\js = 0, \Delta\ts, \cdots, \ts-\Delta\ts$.
    Then, the final, overapproximated avoid set $\ol{\Ravd}$ can be stored as a union of H-polytopes as:
    \begin{align}
        \ol{\Ravd} = \union_{\is=1}^{\nobs} \union_{\ts=0}^{\tf-\Delta\ts}{\ol{\Ravd}_{\is, \ts, 0}}.
    \end{align}
\end{cor}
\begin{proof}
    This follows as a consequence of Theorem \ref{thm:avoid_set} and the BRS operation in \eqref{eq:avoid_brs}, which is exact for an affine system defined for a mode sequence (Lemma \ref{lem:one_step_BRS}).
\end{proof}
To sample from within the reach set but not from the avoid set, one can either rejection-sample, sample via random-walk algorithms \cite{kannan2009random, chen2018fast}, or compute the set difference between the reach set and the avoid set \cite{baotic2009polytopic}, and then sample via solving a linear program (LP).
An example of the reach set and the avoid set is shown in Fig. \ref{fig:avoid_set_turtlebot}; notice that the avoid set is a union of polytopes, hence can be non-convex.

\begin{figure}
    \centering
    \includegraphics[width=1\columnwidth]{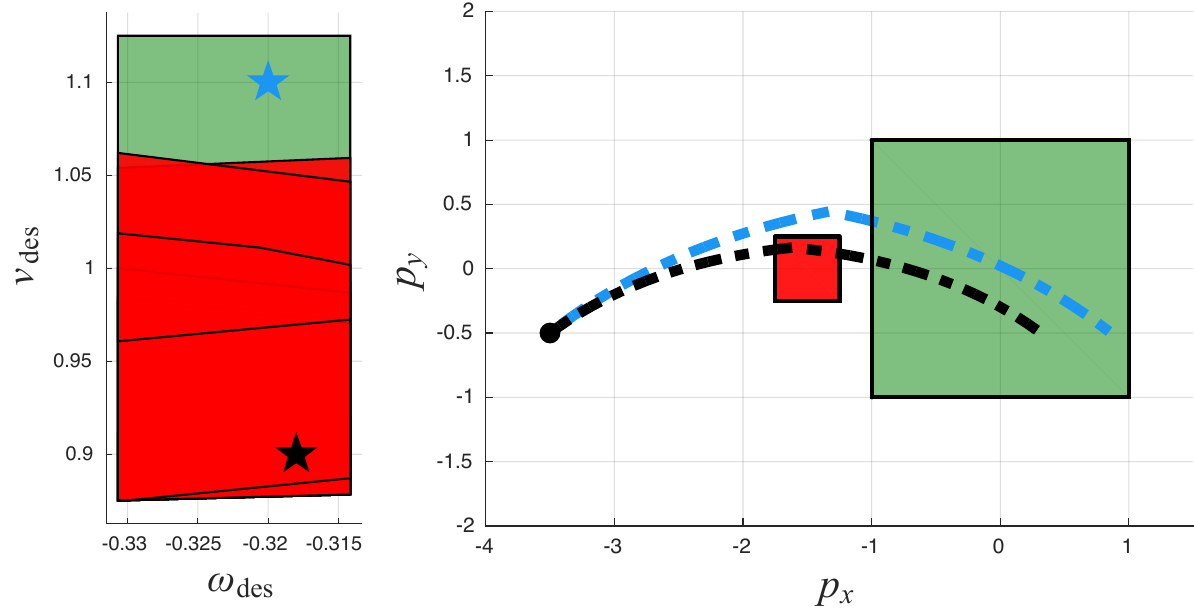}
    \caption{
    (Left) Visualization of the reach set (green) and the avoid set (red) computation from Section \ref{subsec:parc_avoid} for the TurtleBot example (Section \ref{subsec:turtlebot_example}), sliced in the first three dimensions with respect to the initial condition $\planv_0=[-3.5,-0.5,\pi/5]\tp$.
    The black star denotes the expert plan with $\paramv = [-0.318, 0.9]\tp$ and the blue star denotes a safe plan with $\paramv = [-0.320, 1.1]\tp$.
    (Right) Illustration of the resulting trajectory in $\px$ and $\py$, with the expert plan shown as a dashed black line and the safe plan shown as a dashed blue line.
    While the expert plan collides with the obstacle (red), PARC is still able to compute a safe plan with the same mode sequence that safely reaches the goal set (green).
    }
    \label{fig:avoid_set_turtlebot}
\end{figure}

\begin{figure}
    \centering
    \includegraphics[width=1\columnwidth]{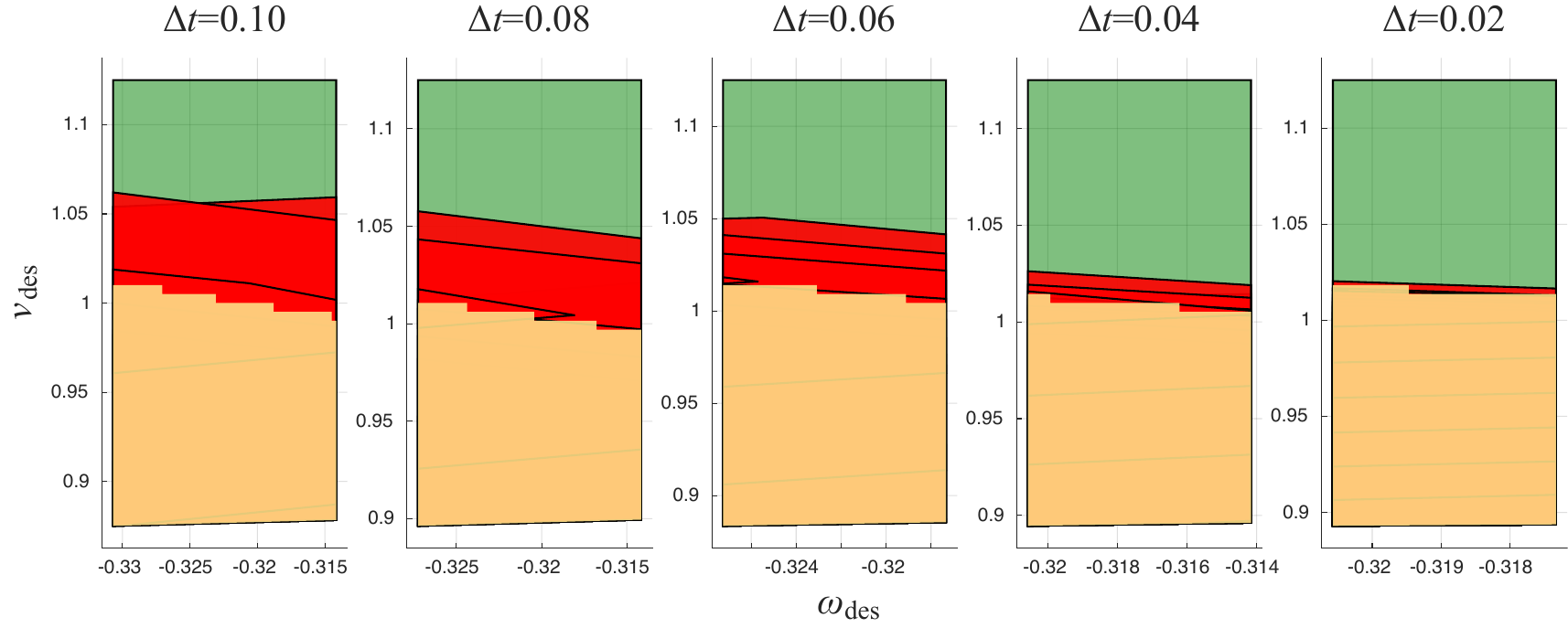}
    \caption{
    Visualization of the reach set (green), the avoid set (red), and a grid-based over-approximation of the true avoid set (yellow) for the TurtleBot example (Section \ref{subsec:turtlebot_example}) for different values of the timestep $\Delta\ts$, with the same problem setup and slicing as Fig. \ref{fig:avoid_set_turtlebot}.
    The grid-based over-approximation was obtained by a slower method detailed in Appendix \ref{subsec:grid_avoid_set} with forward reachable set (FRS) verification on the reach set, divided into a $50\times50$ grid.
    Both the reach set and the numerical overapproximation error gets smaller as $\Delta\ts$ decreases.
    }
    \label{fig:avoid_approx}
\end{figure}

Disregarding tracking error and perception uncertainties, which all planning and control methods suffer from, PARC's only source of conservativeness (from computing the BRAS with respect to the expert plan's mode sequence) comes from the convex hull approximation for linearly interpolated trajectories and the projection operation to account for non-ETI states.
We illustrate the extent of this overapproximation in Fig. \ref{fig:avoid_approx}.
Note that the amount of overapproximation depends heavily on the system, the problem setup, and the expert plan, but can generally be alleviated by choosing smaller timesteps at the expense of computation time.

\subsection{Estimating Tracking Error}\label{subsec:tracking_error}

Now that we have a basis for computing the BRAS, we will model tracking error such that, when incorporated into the PARC framework, can produce plans that, \textit{when followed by the tracking model}, guaranteed to reach the goal region while avoiding all obstacles.

Our method is based on \cite{kousik2019safe,shao2021reachability}. 
We sample $(\trackv_{0, \js}, \paramv_\js) \in \itv{\Xtrack}\times\itv{\K}\subset \Xtrack\times\K$ for $\js = 1, \cdots, \ndim_\sample$ uniformly, where $\itv{\Xtrack}\times\itv{\K}$ is a hyperrectangle created by partitioning $\Xtrack\times\K$; we do not index this partition to ease notation.
We then have the robot start at $\trackv_{0, \js}$ to track the plans parameterized by initial planning state $\planv_{0, \js} \triangleq (\trackv_{0, \js})_{1:\nplan}$ and trajectory parameter $\paramv_\js$, and compute the error between the planned and realized trajectories in the workspace states (recall that we assume the goal and obstacles are subsets of workspace).

We consider two types of errors: the \textit{maximum final error} $\es_{\tf, \is} \in \R$ for goal-reaching, and \textit{maximum interval error} $\itv{\es}_{\ts, \is} \in \R$ for obstacle avoidance, defined for the discrete-time at time $\ts$ and the $\is^\regtext{th}$ dimension of $\trackv$ and $\planv$.
PARC require two types of errors because it provides continuous-time guarantees for obstacle avoidance, but gives guarantees at the final timestep only for goal-reaching.

For each workspace coordinate $\is = 1, \cdots, \nwork$, we define the maximum final error $\es_{\tf, \is}$ as:
\begin{align}\label{eq:max_final_error}
\es_{\tf, \is} = \max_\js
        \abs{\planv_\is(\tf; \planv_{0, \js}, \paramv_\js) - \trackv_\is(\tf; \trackv_{0, \js}, \paramv_\js)},
\end{align}
where $\planv_\is(\cdot)$ and $\trackv_\is(\cdot)$ are the $\is^\regtext{th}$ element of $\planv(\cdot)$ and $\trackv(\cdot)$.
In plain words, $\es_{\ts, \is}$ represents the maximum error observed at the final time $\tf$ for the $\is^\regtext{th}$ workspace dimension.

For $\ts = 0, \Delta\ts, \cdots, \tf-\Delta\ts$, $\is = 1, \cdots, \nwork$,
we define the maximum interval error $\itv{\es}_{\ts, \is}$ as:
\begin{align}\label{eq:max_int_error}
    \itv{\es}_{\ts, \is} = \max_{\js,\ \ts' \in [\ts,\ts+\Delta\ts]}\abs{\planv_\is(\ts'; \planv_{0, \js}, \paramv_\js) - \trackv_\is(\ts'; \trackv_{0, \js}, \paramv_\js)}
\end{align}
which is the maximum error observed \textit{between} each discrete timestep for the $\is^\regtext{th}$ workspace dimension.

We assume this approach provides an upper bound for feasible tracking error, which can be proven for simple planning models \cite{kousik2019safe, shao2021reachability}.
\begin{assum}[Maximum Sampled Error Contains Maximum Error]\label{as:RTD_assum}
    We assume that, for a large enough $\ndim_\sample$,
    \begin{subequations}
    \begin{align}
        \es_{\tf, \is} &\geq \max_{\trackv_{0}\in\itv{\Xtrack},\,\paramv\in\itv{\K}}
            \abs{\planv_\is(\tf; \planv_{0}, \paramv) - \trackv_\is(\tf; \trackv_{0}, \paramv)},\\
        \itv{\es}_{\ts, \is} &\geq \max_{\trackv_{0}\in\itv{\Xtrack},\,\paramv\in\itv{\K},\,\ts' \in [\ts,\ts+\Delta\ts]}
            \abs{\planv_\is(\ts'; \planv_0, \paramv) - \trackv_\is(\ts'; \trackv_0, \paramv)},
    \end{align}
    \end{subequations}
    for all $\ts = 0, \Delta\ts, \cdots, \tf-\Delta\ts$, where $\planv_0 = (\trackv_0)_{1:\nplan}$.
\end{assum}
\noindent
This assumption is clearly a potential weakness of the method, but we find that it holds in practice because the sampling process produces adversarial $(\trackv_0,\paramv)$ pairs.
Furthermore, we can refine the tracking error estimate by increasing the fineness of the partition of $\Xtrack\times\K$ into subsets $\itv{\Xtrack}\times\itv{\K}$ \cite{kousik2019safe}.
Note there exist alternative methods to collect or construct tracking error functions in the literature \cite{verginis2022kdf,chen2021fastrack}.
If desired, they could be swapped out with \eqref{eq:max_final_error} and \eqref{eq:max_int_error} and PARC would provide guarantees with respect to the new tracking error functions instead.
We demonstrate PARC's modularity to the tracking error model in Section \ref{exp:time-varying}.

To incorporate these errors into the geometric computation of PARC, we express them as the \textit{maximum final error set} $\Eset_{\tf} \subset \R^{\nlow}$ and the \textit{maximum interval error set} $\itv{\Eset}_{\ts} \subset \R^{\nlow}$:
\begin{align}
    \Eset_{\tf} &=  [-\es_{\tf, 1}, \es_{\tf, 1}]\times\cdots\times[-\es_{\tf, \nwork}, \es_{\tf, \nwork}] \times\underbrace{\{0\}\times\cdots\times\{0\}}_{({\nplan}\other + \nparam)\regtext{ times}},\\
    \itv{\Eset}_{\ts} &= [-\itv{\es}_{\ts, 1}, \itv{\es}_{\ts, 1}]\times\cdots\times[-\itv{\es}_{\ts, \nwork}, \itv{\es}_{\ts, \nwork}]\times\underbrace{\{0\}\times\cdots\times\{0\}}_{({\nplan}\other + \nparam)\regtext{ times}}.
\end{align}
It is straightforward to express these intervals as H-polytopes \cite{herceg2013multi}.
The Cartesian product with singletons of zeros is necessary to augment the states for Minkowski sum and Pontryagin difference operations with the obstacles and the goal.

We note that collection of the tracking error only requires forward simulation or rollouts of the system and knowledge of the input and output states, so our approach could still apply if the tracking model is a ``\textit{black box}'' by adapting methods such as \cite{selim2022safe, alanwar2023data}.
In this paper, we consider ``\textit{white-box}'' dynamics in our experiments to treat them as a control condition to compare against the literature.
While the forward simulation and collision checking of trajectories may sometimes be computationally expensive, we consider the tradeoff acceptable as the tracking error of PARC is expected to be collected offline.
An example of computed tracking error for a near-hover quadrotor dynamical model is shown in Fig.~\ref{fig:nh_tracking_error}.

\subsection{Incorporating Tracking Error into PARC}

To conclude our proposed method, we incorporate tracking error into PARC's BRAS computation, thereby transferring guarantees for the planning model to the tracking model.
In short, we get reach-avoid guarantees that incorporate tracking error by simply Minkowski summing the error with obstacles and Pontryagin differencing from the goal set.

\subsubsection{PARC: Reach Set with Tracking Error}\label{subsec:parc_reach_real}

From Assumption \ref{as:RTD_assum}, we assume the sampled tracking error is valid only over the compact hyperrectangle $\itv{\Xtrack}\times\itv{\K}$.
Thus, we must define a valid input region $\itv{\Xpoly}\subset \Xpoly$ for which the computed reach set will have guarantees over:
\begin{align}
    \itv{\Xpoly} = \proj[1:\nwork]{\itv{\Xtrack}}\times\itv{\K}\times\proj[(\nwork+1):\nplan]{\itv{\Xtrack}}.
\end{align}
The projection and Cartesian product are necessary to reorder $\itv{\Xtrack}$ and $\itv{\K}$ into that of \eqref{eq:x_order}.

To incorporate tracking error, we first tighten our assumption on the expert plan to find a mode sequence:

\begin{assum}[Expert Plan with Tracking Error]\label{as:expert_traj_error}

We assume that a goal-reaching initial condition $\xv_0$ that defines an expert plan is known and given for the PWA planning model in Section \ref{subsec:PWA_convert}.
Particularly, $\xv_0$ is any point that satisfies:
\begin{align}
    \xv(\tf; \xv_0) \in \Xgoal \ominus \Eset_{\tf}.
\end{align}
\end{assum}
\noindent Intuitively, $\Eset_{\tf, \is}$ captures the maximum tracking error at time $\tf$: if the plans can arrive at the smaller goal region $\Xgoal \ominus \Eset_{\tf}$, then the robot should be guaranteed to arrive at the original, bigger goal region $\Xgoal$ at time $\tf$.

With a given expert plan, we can obtain $\xv_0$'s mode sequence using \eqref{eq:ms_identify}, then compute the underapproximated reach set $\Rbrs_0$ of the goal similar to Proposition \ref{prop:reach_set}.
\begin{prop}[Reach Set with Tracking Error]
\label{prop:reach_set_real}
    Consider a mode sequence $\Sap = (\sap_0, \cdots, \sap_{\tf - \Delta\ts})$.
    Then we can underapproximate the reach set $\Rbrs_{0}$ corresponding to this mode sequence by:
    \begin{subequations}\label{eq:reach_yeserror}
    \begin{align}
    \Rbrs_{\tf} = &\Xgoal\ominus \Eset_{\tf},\\
    \Rbrs_{\ts} = &\brs{\ol{\Rbrs}_{\ts+\Delta\ts}, \Ccon_{\sap_{\ts},\ts}, \dcon_{\sap_{\ts},\ts}}\cap\hpoly(\Acon_{\sap_{\ts},\ts}, \bcon_{\sap_{\ts},\ts})\notag\\
    &\forall \ts = \Delta\ts,\cdots,\tf - \Delta\ts,\\
    \Rbrs_{0} = &\brs{\ol{\Rbrs}_{\Delta\ts}, \Ccon_{\sap_{\Delta\ts},\Delta\ts}, \dcon_{\sap_{\Delta\ts},\Delta\ts}}\cap\itv{\Xpoly},
    \end{align}
    \end{subequations}
\end{prop}
\begin{proof}
    This follows from Lemma \ref{lem:one_step_BRS}, Assumption \ref{as:RTD_assum}, and Proposition \ref{prop:reach_set}.
\end{proof}
\noindent This extends Proposition \ref{prop:reach_set} by shrinking the goal set and intersection with the tracking error data coverage domain to include tracking error.
With $\Rbrs_{0}$ computed, if the robot tracks any plan generated from an initial condition within $\Rbrs_{0}$, it is guaranteed to reach the goal at time $\tf$, but may not be safe; next we guarantee safety by computing the avoid set.

\subsubsection{PARC: Avoid Set with Tracking Error}\label{subsec:parc_avoid_real}

Consider the obstacle $\Obs_\is$ and some time $\ts \in \{0, \Delta\ts, \cdots, \tf - \Delta\ts\}$.
We define the $\is^\regtext{th}$ buffered obstacle $\tilde{\Obs}_{\ts,\is}$ between time $\ts$ and $\ts+\Delta\ts$ as:
\begin{align}
    \tilde{\Obs}_{\ts,\is} &= \Obs_\is \oplus \itv{\Eset}_{\ts}.
\end{align}

Assuming $\itv{\Eset}_{\ts}$ captures the maximum tracking error from $\ts$ to $\ts+\Delta\ts$, by buffering $\Obs_\is$ with $\itv{\Eset}_{\ts}$, if the trajectory does not collide with $\tilde{\Obs}_{\ts,\is}$, then the robot following the trajectory should never collide with $\Obs_{\ts, \is}$.
Thus, incorporating tracking errors into the PARC formulation for avoid set follows closely to that of Section \ref{subsec:parc_avoid}, but with $\tilde{\Obs}_{\ts,\is}$ instead of $\Obs_{\ts, \is}$ as an obstacle.


We can finally state our second key contribution: overapproximating the avoid set with tracking error (i.e., computing the BRAS):

\begin{thm}[Avoid set w/ Tracking Error]
\label{thm:avoid_set_real}
Consider the buffered obstacle $\tilde{\Obs}_{\ts,\is}$ for some time $\ts \in \{0, \Delta\ts, \cdots, \tf - \Delta\ts\}$, and from \eqref{eq:reach_yeserror}, the $(\tf-\ts)$-time BRS ${\Rbrs}_\ts$ of the mode sequence $\Sap = (\sap_0, \cdots, \sap_\ts, \sap_{\ts+\Delta\ts}, \cdots, \sap_{\tf-\Delta\ts})$.
Let $A = \brs{\proj[1:{\nlow}\eti]{\tilde\Obs_{\ts, \is}} \times \R^{{\nlow}\noneti}, \Ccon_{\sap_\ts, \ts}, \dcon_{\sap_\ts, \ts}}$.
Define the intermediate avoid set ${\Ravd}_{\is, \ts, \ts} \subset {\Rbrs}_\ts$ as:
\begin{align}\label{eq:int_avoid_set_real}
    {\Ravd}_{\is, \ts, \ts} = \conv{\proj[1:{\nlow}\eti]{A}\times\Xpoly\noneti, \tilde{\Obs}_{\ts,\is}}\cap{\Rbrs}_\ts.
\end{align}
Then, for any $\xv(\ts) \in {\Rbrs}_\ts \setminus {\Ravd}_{\is, \ts, \ts}$, $\xv(\ts+\Delta\ts) = \Ccon_{\sap_\ts, \ts}\xv(\ts) + \dcon_{\sap_\ts, \ts}$,
\begin{align}
    \left\{\xv(\ts) + \gams(\xv(\ts+\Delta\ts) - \xv(\ts))\ |\ 0\leq\gams\leq1\right\}\cap\tilde{\Obs}_{\ts,\is} = \emptyset.
\end{align}
\end{thm}
\begin{proof}
    Follows from Theorem \ref{thm:avoid_set}.
\end{proof}

We can adapt Corollary \ref{cor:avoid_set_computation} to compute the avoid set ${\Ravd}_{\is, \ts, 0}$ by computing the $\ts$-time BRS of ${\Ravd}_{\is, \ts, \ts}$ as
\begin{align}\label{eq:avoid_set_chain}
    {\Ravd}_{\is, \ts, \js} &= \brs{{\Ravd}_{\is, \ts, \js+1}, \Ccon_{\sap_{\js}, \js}, \dcon_{\sap_{\js}, \js}}\ \forall \js = 0, \Delta\ts, \cdots, \ts-\Delta\ts.
\end{align}
Then, the final overapproximated avoid set ${\Ravd}$ can be stored as a union of H-polytopes as:
\begin{align}
    {\Ravd} = \union_{\is=1}^{\nobs} \union_{\ts=0}^{\tf-\Delta\ts}{{\Ravd}_{\is, \ts, 0}}.
\end{align}

With the avoid set and the reach set from Section \ref{subsec:parc_reach_real} computed, if the robot tracks any plans starting in $\Rbrs_{0} \setminus \Ravd$, it is guaranteed to reach the goal region while avoiding all obstacles.
Thus, the set $\Rbrs_{0}\setminus\Ravd$ is the desired object $\regtext{BRAS}(\tf, \ol{\Xgoal}, \ol{\Obs})$.
Next, we demonstrate PARC's BRAS computation on a variety of numerical examples.
\section{Experiments}\label{sec:experiments}

We now assess the utility of our proposed PARC method for generating safe trajectory plans near danger.
We seek to understand the impact of planning model design (Section \ref{exp:planning_model_design}), cooperative vs. adversarial planner-trackers (Section \ref{exp:coop_impact}), tight approximation of planning model reachable sets (Section \ref{exp:exact_plan_reach_impact}), and respecting time-varying tracking error in planning level (Section \ref{exp:time-varying}).
We also assess the utility of a learning-based reach-avoid method in Section \ref{exp:planning_model_design} and Section \ref{exp:coop_impact}.

To ensure fair assessment, we apply PARC to robots from the safe motion planning and control literature: a near-hover quadrotor in 2-D \cite{dawson2022safe} and 3-D \cite{chen2021fastrack}, and a more general quadrotor model in 3-D \cite{kousik2019safe}.
Implementation details are reported in Appendix \ref{app:experiment}.
All experiments, and training of learning-based methods, were run on a desktop computer with a 24-core i9 CPU, 32 GB RAM, and an Nvidia RTX 4090 GPU.

\textbf{Key result:}
PARC outperforms other methods in goal reaching and safety in near-danger scenarios but requires careful choice of planning model.
While it can still guarantee reach-avoid in non-near-danger scenarios, its performance can be worse than existing methods.
Most experiments are summarized in Table~\ref{table:experiments}.

\subsection{Impact of Planning Model Design}\label{exp:planning_model_design}

This experiment seeks to understand the impact of choosing a planning model for a given tracking model.
We deployed PARC on a tracking model and environment that have been successfully solved by a Neural CLBF \cite{dawson2022safe} approach.

\textbf{Key result:}
Na\"ive design of PARC's planning model can impact performance negatively.
We confirm that a better choice of planning model solves the same task with higher performance in Appendix \ref{app:additional_exp}.


\subsubsection{Experiment Setup}
To examine the importance of the planning model, we employed the \textit{same dynamics} for both the planning and tracking model, namely 2-D near-hover quadrotor dynamics:
\begin{align}\label{eq:2dhover_dyn}
    \dot{\planv} = \dot{\trackv}
        = \frac{d}{dt}\begin{bmatrix}
            \px \\ \pz \\ \theta \\ \vx \\ \vz \\ \omega
        \end{bmatrix}
        = \left[\begin{array}{l}
            \vx \\
            \vz \\
            \omega \\
            \tfrac{1}{\mq} \sin(\pth) (\flf+\frt) \\
            \tfrac{1}{\mq} \cos(\pth) (\flf+\frt) - g \\
            \tfrac{\radq}{\Iq} (\flf-\frt)
        \end{array}\right]
\end{align}
The workspace states are $\wv = [\px,\py]\tp$, and the inputs are left- and right-side propeller forces 
\begin{align*}
\uv = \paramv &= [\flf, \frt]\tp,
\end{align*}
which were held constant for $[0,\tf]$ to generate plans.
The feedback controller $\uv\feedback$ produces time-varying left- and right-side propeller forces using an iLQR controller \cite{kong2021ilqr}.

The gravitational constant is $g = 9.81$\unit{m/s^2}. 
The quadrotor's physical parameters [$\Iq, \mq, \radq$] are obtained from \cite{dawson2022safe}.

We used the obstacle and goal environment from \cite[Sec. 6.3]{dawson2022safe}, wherein the quadrotor must navigate over an obstacle and through a small gap to a goal region (see the left side of Fig. \ref{fig:quad2d_noerror} and Fig. \ref{fig:quad2d_yeserror}).

We used PARC to compute a BRAS for this scenario with an expert plan, accounting and without accounting for tracking error.
The planning model piecewise-affinized \eqref{eq:2dhover_dyn} with 80 PWA regions, timestep $\Delta\ts = 0.1$ \unit{s}, and final time $\tf = 2$ \unit{s}.
We assessed the computation time, as well as the size of the BRAS to investigate the impact of planning model design.

\subsubsection{Hypothesis}
We expected PARC to be able to compute plans to the goal reliably, with no safety violations during $[0,\tf]$.
However, due to the na\"ive choice of planning model, we expected the avoid set to be overly conservative, especially since \eqref{eq:2dhover_dyn} has three non-ETI states ($\vx, \vz, \omega$) in its PWA formulation.

\subsubsection{Results}
Fig. \ref{fig:quad2d_noerror} and Fig. \ref{fig:quad2d_yeserror} show PARC's attempt to construct a BRAS with and without accounting for tracking error.
Surprisingly, despite the high-dimensional planning model, PARC only took 5.08 \unit{s} to compute the BRAS with tracking error and 4.62 \unit{s} without.
By contrast, the Neural CLBF took 3 \unit{h} to train.
After accounting for tracking error, PARC was no longer able to compute trajectories traveling from the left of the leftmost obstacle.
No crashes or failure to goal-reach were reported when the tracking agent followed the trajectories generated from within the BRAS.

\begin{figure}
\centering
\includegraphics[width=0.8\columnwidth]{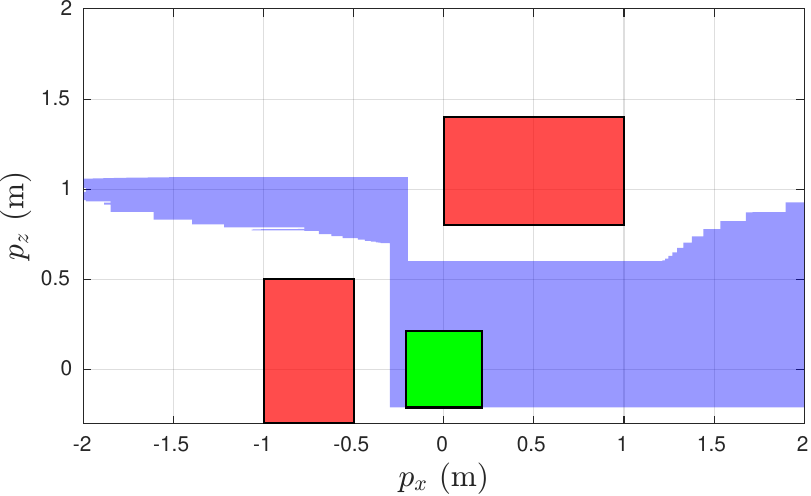}
\caption{Results of PARC on a na\"ive choice of planning model, without accounting for tracking error. Green indicates the goal region, red indicates the obstacles with the volume of the drone accounted for, and blue shows the reach set set-differenced with the avoid set without tracking error. Safe trajectories were successfully generated from the left of the leftmost obstacle.
}
\label{fig:quad2d_noerror}
\end{figure}
\begin{figure}
\centering
\includegraphics[width=0.8\columnwidth]{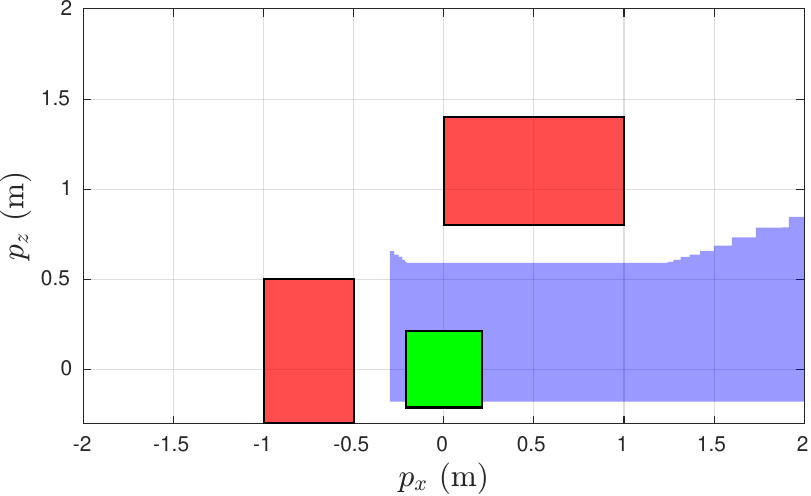}
\caption{Results of PARC on a na\"ive choice of planning model after accounting for tracking error. Green indicates the goal region, red indicates the obstacles with the volume of the drone accounted for, and blue shows the computed BRAS. There were no longer any safe trajectories generating from the left of the leftmost obstacle, which the Neural CLBF was able to accomplish.
}
\label{fig:quad2d_yeserror}
\end{figure}

\subsubsection{Discussion}
The decrease in BRAS volume was expected due to the many non-ETI states, leading to overconservativeness in avoid set computation.
Moreover, as the problem framework requires the trajectory parameters $\flf, \frt$ to be constant, the quadrotor shall continue accelerating after reaching the goal region, leading to instability should $\flf, \frt$ be tracked.

In Appendix \ref{app:additional_exp} and the next few examples, we show how a careful design of the planning model leads to much better performance on more complicated scenarios, as well as enabling stabilizing behavior such as braking without explicitly modeling velocity as a goal state.

\subsection{Impact of Planner-Tracker Cooperation}\label{exp:coop_impact}

We now study cooperative vs. adversarial planning and tracking models.
We also further study the Neural CLBF \cite{dawson2022safe} learning-based reach-avoid approach.
For this experiment, we use a 3-D near-hover quadrotor introduced in FaSTrack \cite{chen2021fastrack}, a reachability-based planner tracker that uses adversarial planning and tracking models.

\textbf{Key result:}
PARC takes advantage of cooperative planner-tracker behavior to reduce conservativeness.

\subsubsection{Experiment Setup}
The tracking model and PARC's planning model are listed in Appendix \ref{app:experiment}.
The quadrotor must navigate a narrow gap (width 1.8 m, compared to drone width 0.54 m) to reach its goal, as shown in Fig. \ref{fig:nearhover-single-comparison}.
We simulated 8,100 trials from different initial conditions.
We measured the number of trials that reached the goal, and the number that crashed.
We also considered the total computation time of each method.
Further details are in Appendix \ref{app:exp_add_details}.

\begin{figure}[!htb]
\centering
\includegraphics[width=1\columnwidth]{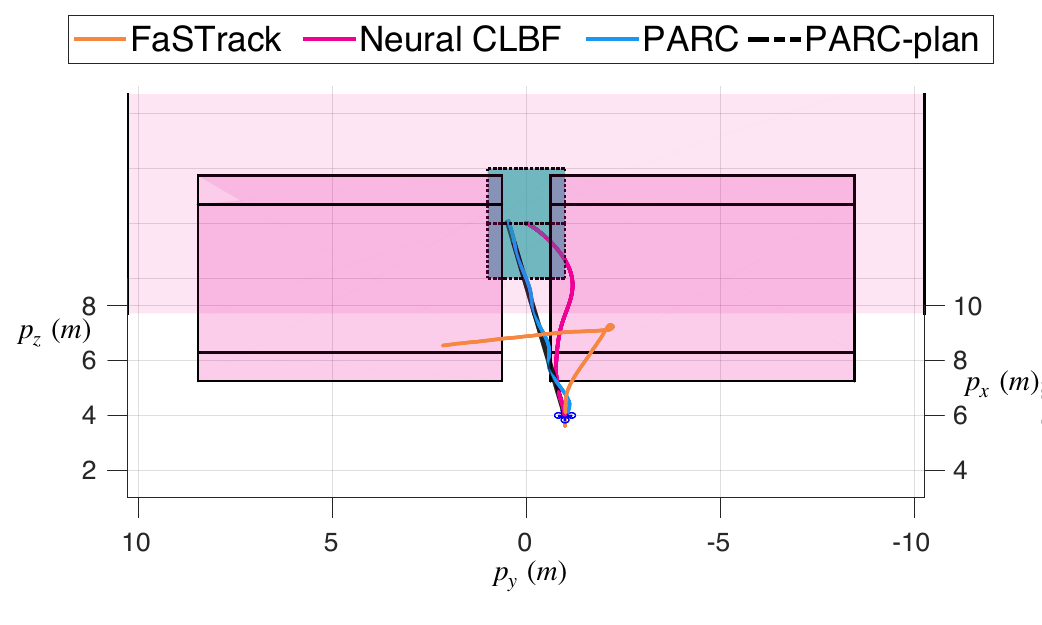}
\vspace{-2em}
\caption{Realized trajectory of the 10D quadrotor in the environment for 10 \unit{s} using three methods: PARC (blue), FaSTrack \cite{chen2021fastrack} (orange), and Neural CLBF \cite{dawson2022safe} (pink). The black dashed line denotes the plan computed using PARC.
The goal is represented as a green cube and the obstacles are the pink boxes.
The environment has a boundary that is not depicted in the figure.
}
\label{fig:nearhover-single-comparison}
\vspace{-1em}
\end{figure}

\subsubsection{Hypotheses}
We anticipated that our PARC planner would always reach the goal safely.
We also anticipated faster computation time in our reachability analysis compared to the other two approaches, although FaSTrack does not need to recompute the reachable set for every obstacle.
Since FaSTrack provides a formal proof of safety, we anticipate it would have no crashes.
However, since it treats the planner and tracker as adversarial, we expected it to struggle to reach the goal.
On the other hand, since CLBFs specifically balance safety and liveness, we expected it to reach the goal more often than FaSTrack but at the expense of crashing due to the high chance of learning an incorrect representation of safety when forced to operate near obstacles.

\subsubsection{Results}
The overall results are reported in Table \ref{table:experiments}.
Fig. \ref{fig:nearhover-single-comparison} illustrates results for all methods with an initial position of $[4, -1, 3]\tp$. 
FaSTrack (orange) fails to find a feasible plan that reaches the goal, the trajectory rolled out by Neural CLBF (pink) violates safety, while the trajectory rolled out by PARC (blue) finds a plan that reaches the goal safely despite tracking error.

Fig. \ref{fig:nearhover-multi-comparison} illustrates the trajectories simulated on the grid of 8,100 initial positions using each method.
Neural CLBF found 576 successful trajectories (7.1\%), while safety was violated for the rest.
FaSTrack found 0 successful trajectories but never violated safety.
However, PARC found the plan to reach the goal without collision for 2,592 initial states (32\%) and never violated safety.

FaSTrack took 1 \unit{h} to compute its Tracking Error Bound (TEB) (which only needs to be done once offline), while Neural CLBF took 5 \unit{h} to train for this obstacle configuration.
On the other hand, PARC computed the BRAS in 15.34 \unit{s}.

\begin{figure}[!htb]
\centering
\includegraphics[width=1\columnwidth]{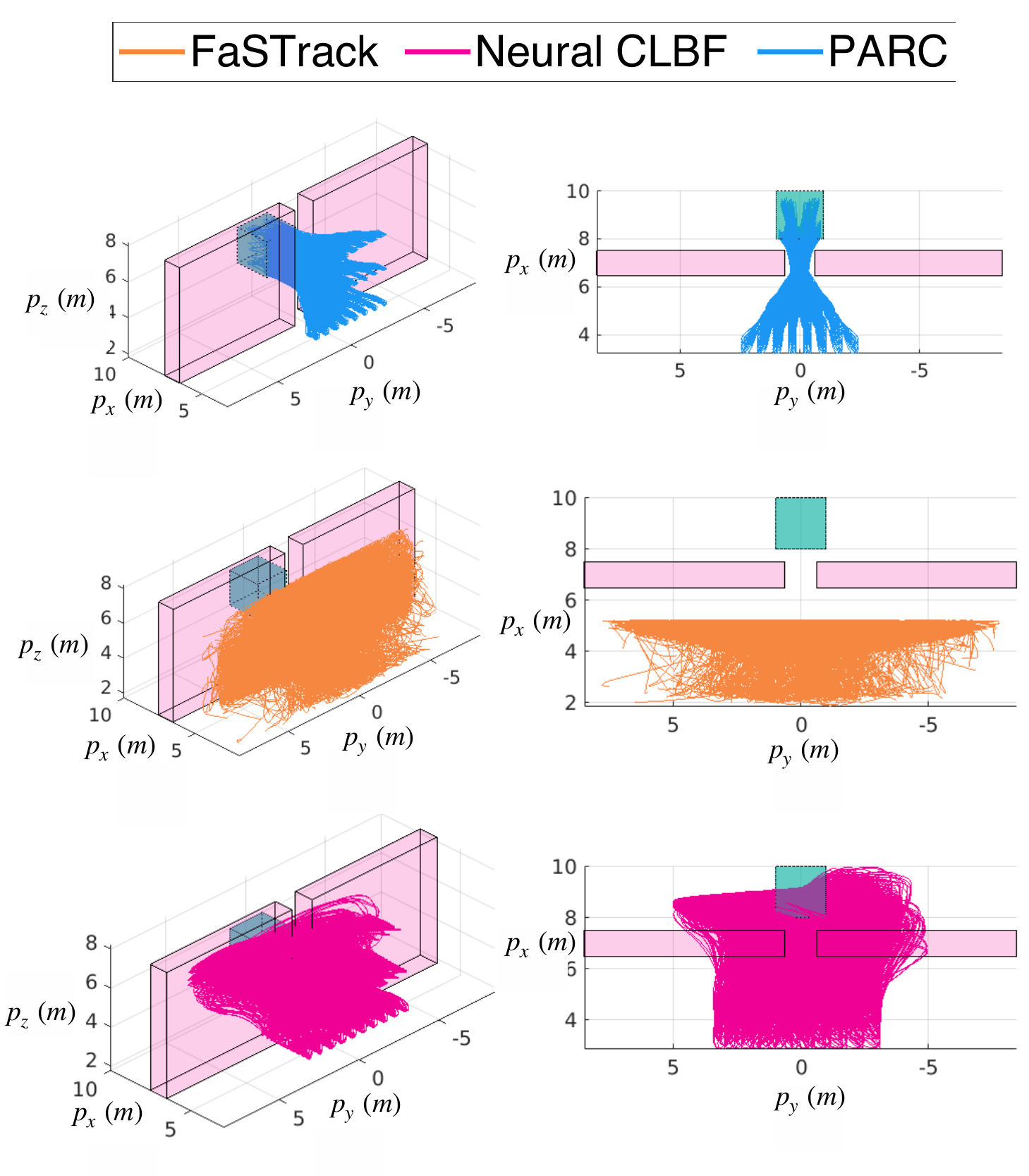}
\vspace{-0.2em}
\caption{Simulated trajectories of 10-D quadrotor for 8,100 initial states using three methods: PARC, FaSTrack \cite{chen2021fastrack}, and Neural CLBF \cite{dawson2022safe}. The green box is the goal set and the pink boxes are the obstacles.}
\label{fig:nearhover-multi-comparison}
\vspace{-0.5em}
\end{figure}

\begin{figure}[!htb]
\centering
\includegraphics[width=1\columnwidth]{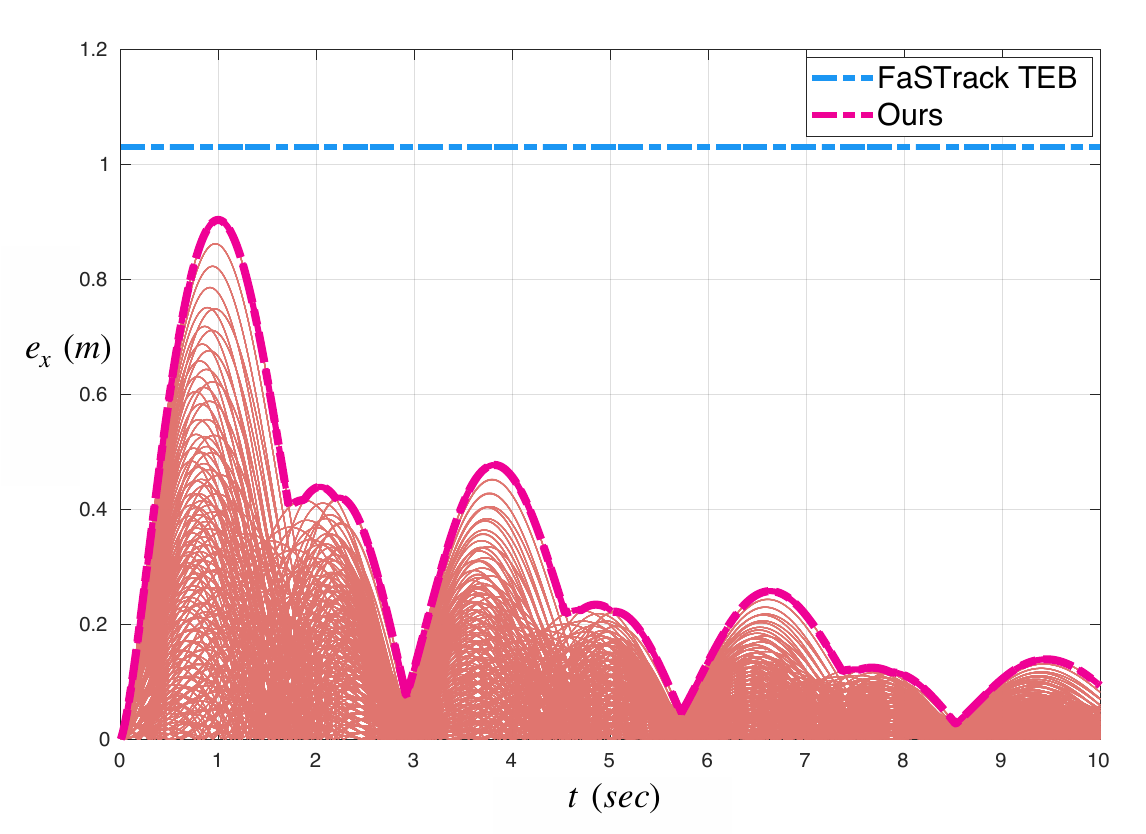}
\caption{Comparison of tracking error function between FaSTrack and PARC for planning horizon $\tf=10$ \unit{s}.
FaSTrack computes a fixed, worst-case tracking error bound for all time because it assumes an adversarial planner, whereas PARC takes advantage of the smaller, time-varying tracking error that results from a planner and controller working together.
The pale pink trajectories are samples used to compute the error as in Section \ref{subsec:tracking_error}.
}
\label{fig:nh_tracking_error}
\vspace{-1em}
\end{figure}

\subsubsection{Discussion}
The performance of FaSTrack is notably compromised by the large, conservatively computed TEB, specified as a $2.47 \times 2.47 \times 0.1$ box. 
This resulted in its inability to compute feasible plans in spaces tighter than the TEB.
However, by sampling tracking error under a parameterized plan, PARC's tracking error can be much smaller than TEB, as shown in Fig. \ref{fig:nh_tracking_error}. 
This enables PARC to find plans even in challenging scenarios.

Although Neural CLBF succeeded in capturing most of the safe, unsafe, and goal regions, the synthesized control policy would violate safety when starting at some of the initial states.
Thus, the learning-based method does not provide safety guarantees.

Since PARC has a strong safety guarantee, every plan it found reached the goal and did not collide with the obstacles.
We note that a better choice of planning model could increase PARC's success rate; the simplicity of the single-integrator planning model, employed mainly for fair comparison with FaSTrack, reduced the flexibility of possible plans.
In the next section, we will leverage a better choice of trajectory model.

\begin{table*}[!ht]
    \centering
    \setlength{\tabcolsep}{10pt} 
    \renewcommand{\arraystretch}{1.5} 
    \caption{
    \textbf{Performance Evaluation of PARC and Other Reach-Avoid Frameworks:} This table compares success and safety rates across various systems and scenarios. 
    The success rate is the percentage of simulations achieving their goals without collisions, while the safety rate measures that of obstacle avoidance. 
    Computation times are categorized into offline and online phases.
    The goals, obstacles, and initial states were known at the beginning of the online phase but not during the offline phase.
    We report the mean and the standard deviation for computation time.}
    \label{table:experiments}
    \begin{tabular}{l|l|r|r|r|r|r}
    \textbf{Scenario} & \textbf{Methodology} & \textbf{Success Rate} & \textbf{Safety Rate} & \textbf{Time (Online) (\unit{s})} & \textbf{Time (Offline) (\unit{s})} & \textbf{Section} \\
    \hline
    \multirow{3}{*}{\textbf{10-D Quadrotor}}
    & PARC & 0.32 & 1.00 & $15.34 \pm 1.44$ & 115 
    & \multirow{3}{*}{\ref{exp:coop_impact}}\\
    & FaSTrack & 0.00 & 1.00 & Timeout & 3,600 \\
    & Neural CLBF & 0.07 & 0.07 & 18,000 & 0 \\
    \hline
    \multirow{1}{*}{}
    
    & PARC-TEF & 0.07 & 1.00 & $65.42 \pm 9.73$ & 2,880
    & \multirow{2}{*}{\ref{exp:exact_plan_reach_impact}} \\
    \multirow{1}{*}{\textbf{13-D Quadrotor}}
    
    & RTD & 0.00 & 1.00 & Timeout & 2,880\\
    \cline{2-7}
    \multirow{1}{*}{\textbf{(Narrow Gap)}}
    & PARC-Funnel & 0.08 & 1.00 & $12.20 \pm 1.70$ & 0
    & \multirow{2}{*}{\ref{exp:time-varying}}\\
    \multirow{1}{*}{}
    & KDF & 0.00 & 1.00 & Timeout & 0\\

    \hline
    \multirow{1}{*}{\textbf{13-D Quadrotor}}
    & PARC-Funnel & 1.00 & 1.00 & $3.07 \pm 0.06$ & $0$
    & \multirow{2}{*}{\ref{exp:time-varying}} \\
    \multirow{1}{*}{\textbf{(Wide Gap)}}
    & KDF & 1.00 & 1.00 & $0.44 \pm 0.27$ & 0\\
    \end{tabular}
\vspace{-1em}
\end{table*}

\subsection{Impact of Tight Approximation for Planning Reachability}\label{exp:exact_plan_reach_impact}

We seek to understand the importance of computing a tighter approximation of the reachable set for the planning model.
To do this, we compare against RTD implemented on a general 3-D quadrotor \cite{kousik2019safe} using zonotope reachability \cite{althoff2015introduction}.
This allows us to use the same tracking error model but employs a different approach to planning model reachability analysis.

\subsubsection{Experiment Setup}
Our experimental setup is similar to that of Section \ref{exp:coop_impact} but with a narrower gap of 0.85 \unit{m} (the drone width is 0.54 \unit{m}).
See Appendix \ref{subsec:3dquad_env} for details.

We use the same planning model as quadrotor RTD \cite{kousik2019safe}, which generates time-varying polynomial positions in 3-D \cite{mueller2015computationally} with $\Delta\ts = 0.02$ \unit{s} with $\tf = 3$ \unit{s}; this formulation, detailed in Appendix \ref{subsec:3dquad_model}, is affine time-varying in the trajectory parameters with no non-ETI states, so reachability can be tightly approximated by PARC.
Both PARC and RTD use the same tracking error that is precomputed in \cite{kousik2019safe}.

We initialized the quadrotor in 675 uniformly sampled positions in front of the wall with zero initial velocity.
It only receives information about the obstacles when each of the experiments starts (i.e., PARC cannot precompute the BRAS).
We gave both methods 120 \unit{s} to attempt to find a trajectory to the goal through the gap.
Note that RTD is focused on fast, safe replanning, so we let it attempt to replan as often as possible.

We declare a trial safe if the quadrotor does not crash, and success if the quadrotor reaches the goal region without colliding with any obstacles.

\subsubsection{Hypothesis}
We anticipated that PARC would safely find ways to reach the goal more often, as its primary source of conservativeness is its tracking error estimation (which is identical to that of RTD).
We anticipated that RTD would reach the goal much less often because it suffers an additional source of conservativeness from using a polynomial planning model with zonotopic reachable sets.

\subsubsection{Results}
A visual comparison of RTD and PARC for one of the initial conditions is shown in Fig. \ref{fig:polyquad}.
The overall results are reported in Table \ref{table:experiments}.
On average, PARC took $65.42 \pm 9.73$ \unit{s} to compute a BRAS (i.e, a continuum of safe motion plans).
After BRAS computation, PARC was able to sample a safe motion plan (i.e., the BRAS is non-empty) in 45 out of 675 scenarios, all of which were successfully executed to reach the goal.
On the other hand, RTD was not able to reach the goal at all, despite frequently replanning an average of every 51 \unit{ms}.
RTD would eventually be stuck after flying too close to an obstacle (but not colliding).
Neither method had any collisions.

\begin{figure}
    \centering
    \begin{subfigure}[t]{0.9\columnwidth} 
        \centering
        \includegraphics[width=0.9\columnwidth]{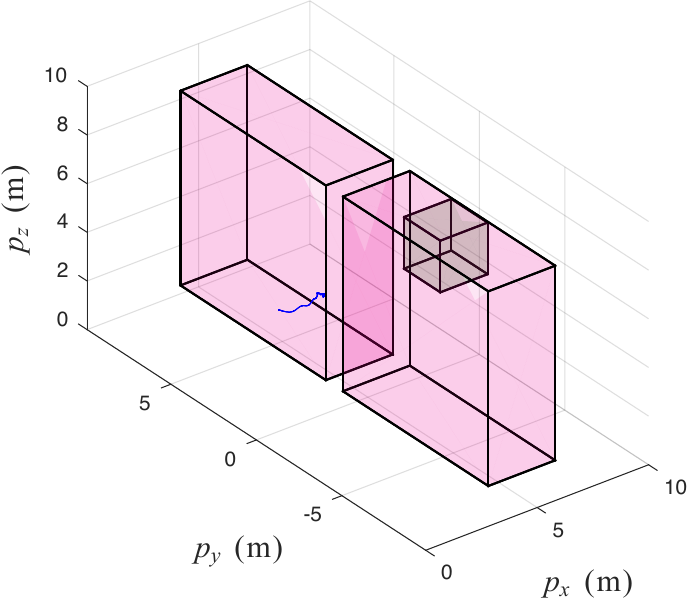}
        \caption{}
    \end{subfigure}
    \begin{subfigure}[t]{0.9\columnwidth}
        \centering
        \includegraphics[width=0.9\columnwidth]{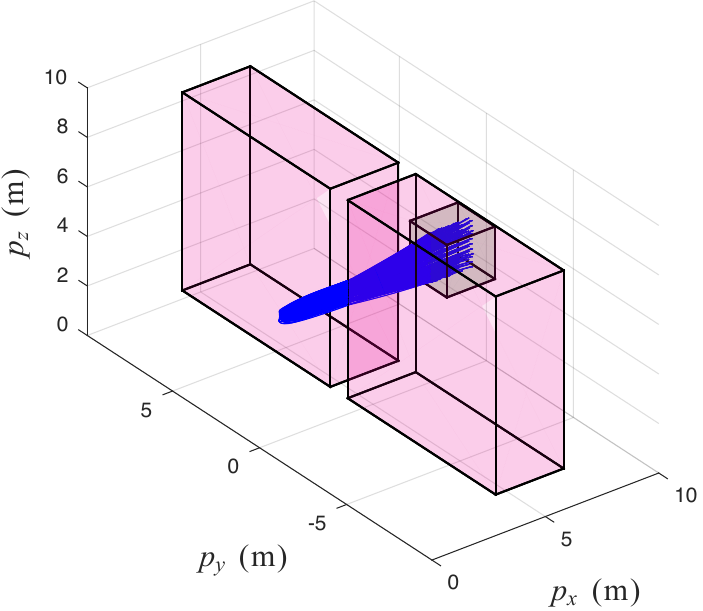}   \caption{}
    \end{subfigure}
    \caption{One run of the experiment with general quadrotor model in 3-D using RTD in (a) and PARC in (b).
    The pink boxes are the obstacles and the green cube is the goal set.
    RTD's overall trajectory after 120 \unit{s} of replanning is shown in blue, unable to reach the goal.
    PARC generates the set of initial conditions that all produce \textit{guaranteed} safe plans that reach the goal and avoids the obstacles.
    We sample PARC's BRAS to find 308 safe trajectories (including tracking error), shown in blue.
    }
    \label{fig:polyquad}
    \vspace*{-0.5cm}
\end{figure}

\subsubsection{Discussion}
As expected, both RTD and PARC guaranteed safety.
However, only PARC was able to traverse through the narrow gap.
Thus, this example clearly illustrates the utility of PARC's tightly approximating reachable set computation in comparison to RTD's approximation.
Although PARC is significantly slower in planning when compared to RTD, once a BRAS is computed, the robot obtains a continuum of safe plans.

The main slowdown of PARC's computation comes from the avoid set computation in \eqref{eq:int_avoid_set} and \eqref{eq:avoid_set_chain}, whose computation time increases linearly with the number of time steps.
In this example, we chose the small timesteps of 0.02 \unit{s} over the planning horizon of 3 \unit{s} for a fair comparison with RTD.
If a faster computation time is preferred over a finer trajectory model with potentially less tracking error, one can decrease the number of timesteps when defining the PWA system.
Furthermore, our PARC implementation is not yet parallelized, which may lead to further speedups.


\subsection{Impact of Time-Varying Tracking Error} \label{exp:time-varying}

\begin{figure}
\centering
\includegraphics[width=0.9\columnwidth]{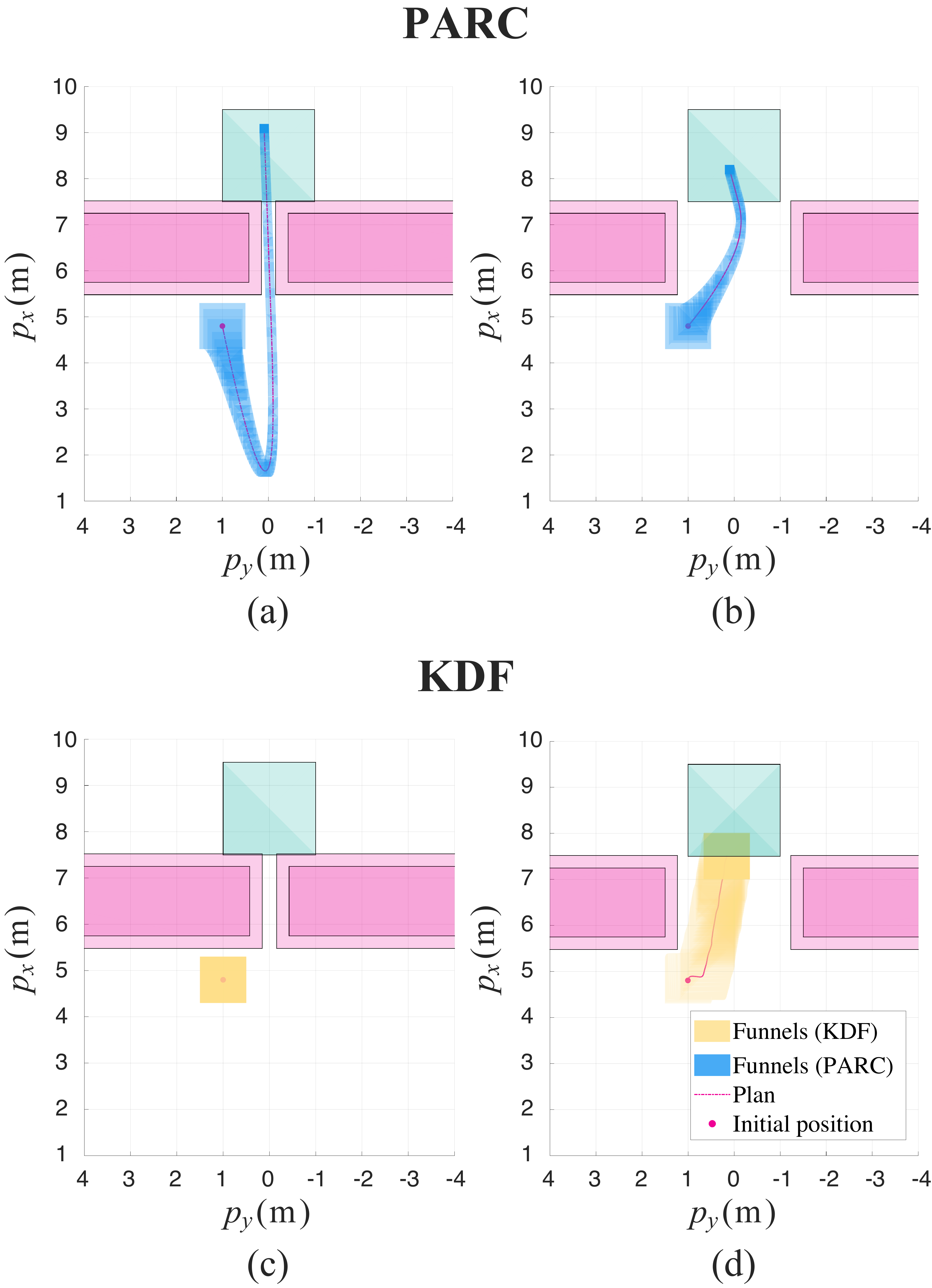}
\caption{
Comparison of PARC (a, b) and KDF (c, d) motion planning for a 3-D quadrotor across two scenarios: a narrow gap of $0.85 \unit{m}$ (left) and a wide gap of $3 \unit{m}$ (right), projected onto the $p_x$-$p_y$ plane. 
The initial planning state is $\planv_0 = [4.8, 1, 4]^{\trans}$ with a planning horizon of $\tf=15 \unit{s}$. 
The goal set is the green box, the actual obstacles are the dense pink boxes, and the lighter pink boxes show the obstacles buffered by the drone's volume. 
Pink trajectories within blue or yellow volumes represent the computed motion plans and bounds of realized trajectories when tracking the plan (i.e. funnels), illustrating how each method handles tracking errors at the motion-planning level.
}\label{fig:parc_kdf_compare}
\vspace{-2em}
\end{figure}

Finally, we explore the importance of accounting for the temporal correlation of tracking errors.
In this experiment, we compare our approach with the KDF framework that combines sampling-based planning with funnel-based feedback control \cite{verginis2022kdf}.
This control mechanism prescribes the tracking error to remain in funnels described by a performance function, $\rho(t)$, which decays exponentially over time \cite{bechlioulis2014low}.
Employing the same funnel-based control across both the KDF and PARC methodologies allows us to maintain a consistent tracking error model.
This setup confirms that any observed differences in performance are solely attributed to the planner's strategy of \textit{using} tracking errors.

\textbf{Key result:}
By considering time-varying as opposed to worst-case tracking error, PARC significantly reduces the conservativeness of motion planning.
Furthermore, PARC's use of parameterized trajectories enabled the design of a funnel controller that maintained within the prescribed tracking error bounds, whereas the same controller could not reliably track the smoothed output of a sampling-based motion planner.

\begin{rem}
As shown in the other experiments, PARC can leverage a data-driven tracking error function that is potentially less conservative than the prescribed $\rho(t)$.
In this experiment, we use $\rho(t)$ to ensure a fair comparison between PARC and KDF.
\end{rem}

\subsubsection{Experiment Setup}
We extend the setup from Section \ref{exp:coop_impact} to compare PARC and KDF using gap widths of 0.85 \unit{m} and 3 \unit{m}, relative to the quadrotor's 0.54 \unit{m} of body width.
We evaluate success and safety ratios, along with computation times, by simulating the quadrotor from zero velocity at 3,700 and 2,400 uniformly distributed initial positions across each scenario.
See Appendix \ref{subsec:kdf_env} for details.

For the planner, PARC employs the polynomial planning model from Section \ref{exp:exact_plan_reach_impact}, but with $\tf=15 \sec$. 
$\Delta \ts$ is $0.2 \sec$ for the narrow gap and $0.5 \sec$ for the wide gap.
KDF employs RRT \cite{lavalle1998rapidly} as its motion planner.

For the tracker, both PARC and KDF use a specialized funnel controller adapted for an underactuated 3-D quadrotor proposed in \cite{lapandic2022robust}, as the conventional KDF funnel-based feedback controller is unsuitable for the system \cite{lapandic2023kinodynamic}. 
This controller conforms to the performance function $\rho(t) = \rho_0 + (\rho_0 - \rho_\infty)e^{-\lambda t}$ where we choose $\rho_0=0.5, \rho_\infty=0.1, \lambda=0.5$.
We note that this controller does \textit{not} respect input limits.
The 48 hyperparameters for the funnel controllers are reported in Appendix \ref{subsec:kdf_param}.

Despite extensive tuning efforts, a universal hyperparameter set for KDF could not be determined that respected the prescribed tracking error bounds; thus, we assume the ideal performance of the funnel controller, and base our statistics on reference trajectories ignoring tracking error.
\textit{In other words, the comparison is biased towards benefiting KDF and disadvantaging PARC.}
We note that we designed controllers that function for a selected few initial states with KDF, the results of which are shown in Appendix \ref{subsec:kdf_param} and Fig. \ref{fig:parc_kdf_compare}.
In contrast, since PARC uses parameterized reference trajectories, we were able to succesfully identify a universal set of hyperparameters that enabled us to base the statistics on realized trajectories that included tracking error.

\subsubsection{Hypothesis}
In the wide-gap scenario, we expected both PARC and KDF to successfully find and execute reach-avoid plans. 
We anticipated KDF to demonstrate faster computation times, benefiting from its sampling-based planning approach. 
In the narrow-gap scenario, KDF is expected to struggle due to its reliance on maximum funnel values for free space construction in RRT, whereas PARC is expected to be more successful by leveraging the time-varying nature of tracking errors.
We expected no safety violations for all experiments.

\subsubsection{Results}
In the wide-gap scenario, both PARC and KDF successfully computed plans from all 2,400 initial positions to reach the goal safely. 
On average, PARC computed the BRAS in 2.86 \unit{s} and sampled safe plans from the BRAS in 0.21 \unit{s}, slower than KDF’s online planning time of 0.46 \unit{s}. 
However, in the narrow-gap scenario, where the clearance is only 0.31 \unit{m}, KDF could not generate any plans, while PARC managed to create valid plans for 341 out of 3,700 positions, taking 9.5 \unit{s} to compute BRAS and 2.7 \unit{s} to sample plans on average.
Results are detailed in Table \ref{table:experiments}.
\subsubsection{Discussion}
As anticipated, PARC demonstrates superior performance over KDF in narrow-gap scenarios due to its ability to utilize the time-varying nature of tracking errors effectively; thus taking advantage of a low tracking-error regime.
This advantage is visually depicted in Fig. \ref{fig:parc_kdf_compare}a, where PARC waits for the funnel controller to reduce the tracking error before navigating through narrow gaps.
In contrast, for wider gaps shown in Fig. \ref{fig:parc_kdf_compare}b, the quadrotor proceeds through the gap immediately.
Conversely, KDF struggles in narrow spaces, as depicted in Fig. \ref{fig:parc_kdf_compare}c since its planner computes paths based on a static maximum tracking error that exceeds the actual gap width (i.e., $\rho_0 = 0.5 > 0.31$), but performs well in wider spaces where this limitation does not impede planning, as depicted in Fig. \ref{fig:parc_kdf_compare}d.

Notably, the difficulty in tuning the funnel controller to follow the smoothed reference trajectory proposed by RRT has led us to assume perfect funnel control in reporting KDF’s statistics. 
This aligns with observations in Section \ref{exp:coop_impact} that parameterized trajectories foster better cooperation with controllers compared to trajectories derived from smoothing splines over RRT waypoints, which can create adversarial tracking conditions.

PARC achieves reach-avoid plans in narrow-gap scenarios but faces lengthy computation times due to detailed sampling from BRAS and smaller $\Delta t$ steps. 
In contrast, KDF uses sampling-based motion planning effectively in scenarios with adequate gap widths, swiftly creating viable plans. 
This indicates a potential research direction: integrating sampling-based motion planning within PARC could shorten computation times and facilitate the generation of expert trajectories that adeptly handle near-danger scenarios, surpassing conventional kinodynamic sampling-based motion planning methods such as KDF.

\subsection{Overall Discussion}
Through the above experiments, we have shown how our proposed PARC approach confers advantages in computing and using BRAS.
In near-danger scenarios, PARC is less conservative than FaSTrack and KDF since the former uses a time-variant tracking error function, whereas the latter two assume constant, worst-case tracking errors.
While RTD can similarly handle time-variant tracking error functions, PARC still outperforms it due to the lower numerical approximation error from representing the planning model as a PWA system instead of a polynomial and using H-polytopes instead of zonotopes.
Finally, Neural CLBF cannot provide hard reach-avoid guarantees, resulting in many crashes with obstacles.

However, PARC still has room for improvement.
Most obviously, the safe plan computation time is on the order of seconds for just a few obstacles; this is too slow for real-time operation.
In contrast, RTD \cite{kousik2019safe}, although more conservative, can compute a new plan about every 51 \unit{ms} in Section \ref{exp:exact_plan_reach_impact}.
Typically, we expect motion planners with real-time capabilities to compute new plans at least every 0.5 \unit{s} \cite{kousik2020bridging}.
To achieve this for PARC, parallelization needs to be implemented in future work.
Furthermore, the planning model requires careful hand-crafting as shown in our experiments.
\section{Drift Vehicle Demonstration}\label{sec:drift_demo}

We now demonstrate the utility of PARC in safely planning extreme near-danger vehicle drifting maneuvers, where the robot loses controllability.

\textbf{Key result:} PARC enables computation of provably safe drift parking trajectories for the first time, extending the state of the art.

\subsection{Background}

\subsubsection{Drifting}
Drifting is when a vehicle turns with a high sideslip angle, causing tire saturation and losing direct control of its heading \cite{goh2016simultaneous}. 
Stable drifting can be achieved by finding ``drift equilibria'' with constant yaw rate and sideslip angle of the highly nonlinear dynamics \cite{goh2016simultaneous, goh2020toward, goel2020opening, weber2023modeling}.
Drift parking, however, remains challenging due to the highly coupled inputs \cite{leng2023deep} and reduced controllability as the vehicle specifically must not maintain a drift equilibrium \cite{kolter2010probabilistic}. 

\subsubsection{Reach-Avoid Method Comparison}
To the best of our knowledge, no other method can guarantee safety and liveness for drift parking.

Our drifting tracking model dynamics (see Appendix \ref{app:drift_details}) are not control affine, so directly applying a control barrier function (CBF) or CLBF control approach is not applicable.\cite{ames2016control,dawson2023safe}.
Note that \cite{dawson2022safe} poses a control affine vehicle dynamics model but does not include tire saturation, so it cannot drift.
Furthermore, our planning and tracking models are too high-dimensional for standard Hamilton-Jacobi-Bellman (HJB) value function computation \cite{chen2021fastrack,mitchell2008flexible}.
We also leave a comparison against learning-based HJB \cite{bansal2021deepreach} approaches as future work.
It is not obvious to design a valid funnel controller for such an under-actuated, hybrid system as a drifting vehicle to compare with KDF \cite{verginis2022kdf}.
The most relevant work is RTD \cite{kousik2020bridging,liu2022refine}, which has not been applied to these extreme dynamics.
In future work, RTD's reachable set can use PARC's formulation to enable a direct drifting comparison, such that RTD's only difference with PARC is that it implicitly represents the avoid set.
Due to this similarity, we focus on demonstrating PARC's capabilities as opposed to a head-to-head comparison.

\subsection{Demo Details}

Extensive details are presented in Appendix \ref{app:drift_details}.
Here, we briefly present setup, simulation results, and a discussion.

\subsubsection{Planning and Tracking Models}
As we saw in Section \ref{exp:planning_model_design}, a careful planning model design is critical to PARC's reach-avoid guarantees.
In this case, we fit an affine time-varying model to drifting data collected by rolling out open-loop drifting maneuvers.
This not only guarantees a reasonable planning model (i.e., it successfully parameterizes drifting maneuvers) but also demonstrates the flexibility of using PWA models for PARC since they can readily be learned from expert demonstrations.

We use a tracking model based on \cite{jelavic2017autonomous,goh2020toward,weber2023modeling,goh2016simultaneous}, detailed in Appendix \ref{app:drift_details}.
Our tracking controller is based on \cite{jelavic2017autonomous}:
we first use nonlinear MPC to bring the car into a drifting state, then switch to open-loop control to complete the parking action (since the vehicle loses controllability when sliding sideways to a stop).

\subsubsection{Environment and Demo Setup}

The objective of this demo is to drift safely into a parallel parked state between two closely parked cars, as shown in Fig. \ref{fig:front_figure} and Fig. \ref{fig:drift_examples}.
Our environment is inspired by a real-world demo video \cite{guinness2014parallel}.

We assess PARC's ability to compute the BRAS for this maneuver.
We further evaluate safety trajectory rollouts using initial conditions and trajectory parameters sampled from the BRAS to check for collision.

\subsubsection{Results}
\begin{figure}[t]
\centering
\begin{subfigure}[t]{0.99\columnwidth}
    \includegraphics[width=0.99\columnwidth]{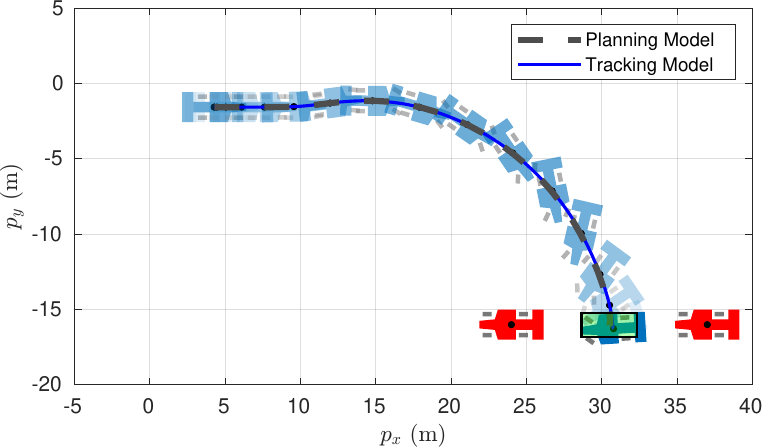}
    \caption{}
    \label{fig:best_drift}
\end{subfigure}
\begin{subfigure}[t]{0.99\columnwidth}
    \includegraphics[width=0.99\columnwidth]{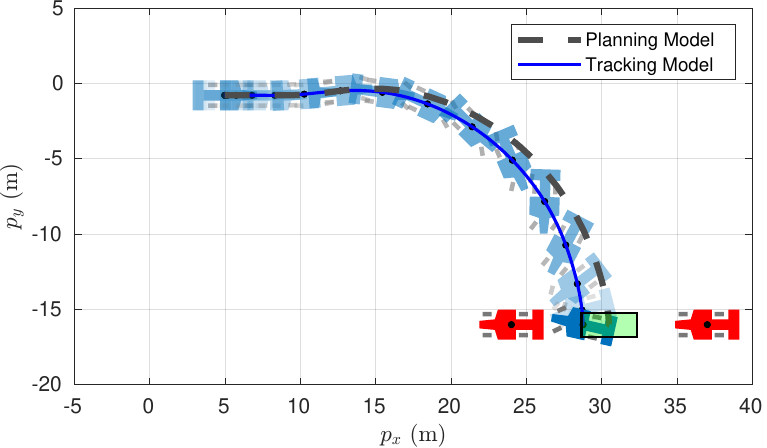}
    \caption{}
    \label{fig:worst_drift}
\end{subfigure}
\caption{Examples of planned drifting trajectories sampled from the Backward Reach-Avoid Set using PARC.
The two red cars are the obstacles. 
The goal set for the center of mass is shown in green.
The trajectory with a very low tracking error is shown in (a), whereas that with a high tracking error is shown in (b).
Critically, \textit{none} of the sampled trajectories from the BRAS collide with the obstacles when tracked.}
\label{fig:drift_examples}
\end{figure}

The BRAS computed by PARC for drift parking is shown in Fig. \ref{fig:drift_BRAS}. 
The BRAS took 5.52 \unit{s} to compute.
We see that with the chosen controller and learned planning model, every pair of initial planning state and trajectory parameter in the BRAS results in safe drift-parking of the car.
Fig. \ref{fig:best_drift} and Fig. \ref{fig:worst_drift} show the trajectories with the best and worst tracking error; in both cases, since the trajectories were sampled from the BRAS, the vehicle satisfied both safety and goal-reaching.

\subsubsection{Discussion}
This demo showcased (i) PARC's capability to generate safe and goal-reaching trajectories with challenging near-danger tasks such as reach-avoid drift-parking, and
(ii) the flexibility of PARC in being compatible with a planning model learned from data.
We notice that the BRAS is relatively small (see Fig. \ref{fig:drift_BRAS}) in the state space, indicating that drift parking is indeed a difficult maneuver.
This emphasizes the importance for PARC to minimize conservativeness in the BRAS computation.
However, it also suggests that this maneuver is brittle to large changes in initial conditions, and PARC will struggle to find drifting maneuvers for different goal sets or obstacle configurations.
In future work, we plan to test more robust drifting controllers to produce a larger BRAS.
With that in mind, our ultimate goal is physical hardware experiments on a platform such as the F1:10 car in Fig. \ref{fig:F1tenth}.

\section{Hardware Demonstration}\label{sec:hardware_demo}
Thus far, PARC has only been shown to work in simulation.
In reality, the tracking model will have model mismatch with the actual hardware due to disturbances and perception errors.
As such, we now present preliminary results for PARC's reach-avoid guarantees applied on mobile robot hardware navigating a narrow gap, as shown in Fig.~\ref{fig:tb_hw_demo}.
Specifically, we use a TurtleBot3 Burger Model \cite{grossimplementation} differential drive robot running Robot Operating System (ROS) 2 \cite{macenski2022robot}.

\subsection{Environment and Demo Setup}
In this demo, we do not consider object detection error, and instead assume full knowledge of the geometry and location of the obstacles.
The location of the goal set and the obstacles are shown in Fig. \ref{fig:tb_hw_demo}.
The robot, with a diameter of 0.21 \unit{m}, must navigate through two obstacles with width of 0.13 \unit{m} and length of 0.35 \unit{m}, placed 0.32 \unit{m} apart and 0.12 \unit{m} in front of the goal set, which is 0.6 \unit{m} $\times$ 0.6 \unit{m} in size.

\begin{figure}
\centering
\includegraphics[width=1\columnwidth]{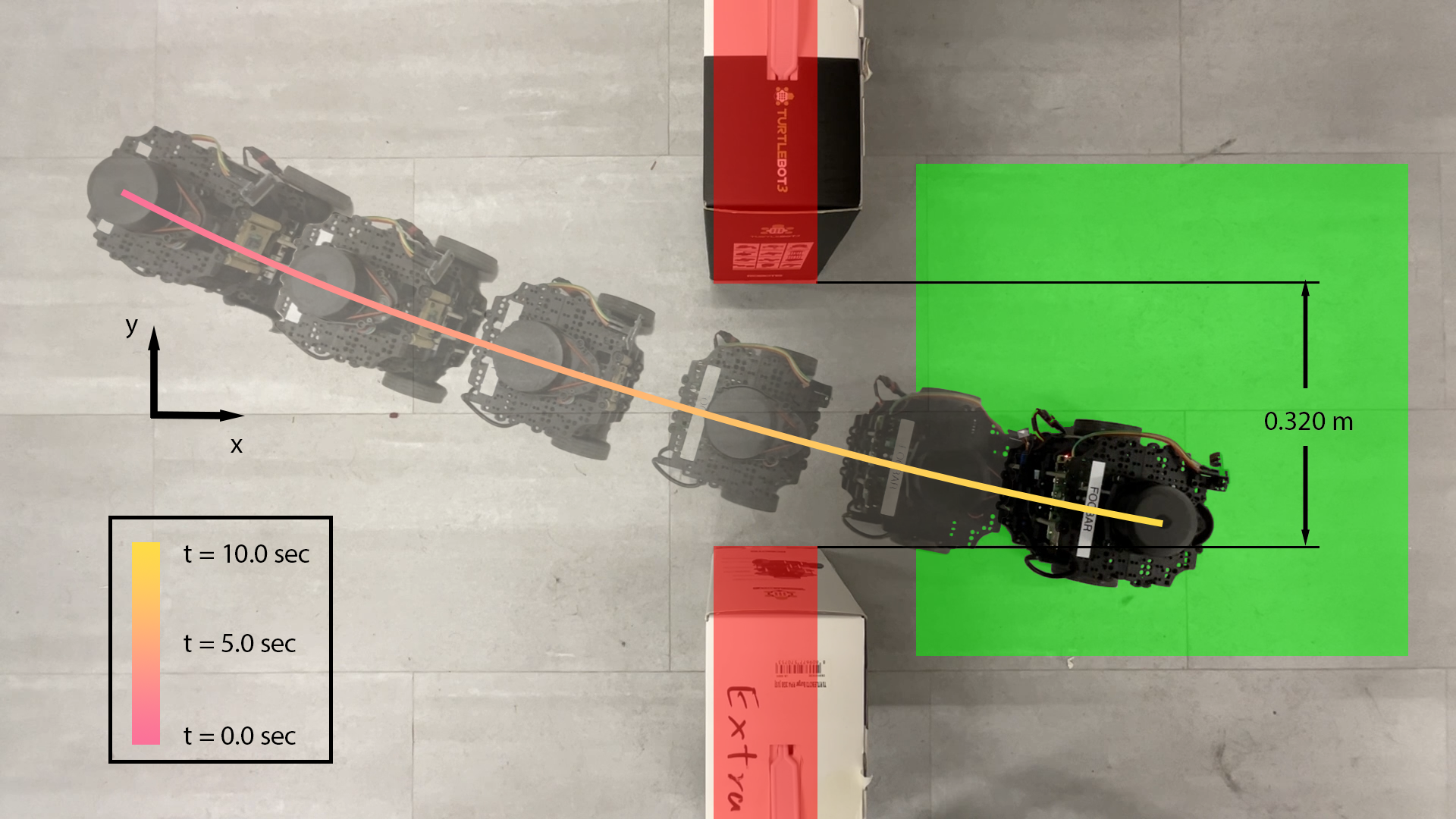}
\caption{
Problem setup of the hardware demo.
A snapshot of the robot's realized motion and the tracked trajectory (gradient line) from one run of the experiment was shown.
The robot successfully avoided the obstacles (red) and reached the goal set (green).
}
\label{fig:tb_hw_demo}
\vspace{-1em}
\end{figure}

The robot is equipped with IMU sensors and wheel encoders, which gives estimation of its planning states $\px$, $\py$, and $\theta$.
As future work, these states can be more accurately estimates using an onboard lidar sensor.

\subsection{Planning and Tracking Models}
We use the same planning and tracking model as Section \ref{subsec:turtlebot_example}, with $\K = [-0.4\ \unit{rad/s}, 0.4\ \unit{rad/s}]\times[0\ \unit{m/s}, 0.1\ \unit{m/s}]$, $\tf = 10$ \unit{s}, and $\Delta\ts = 0.5$ \unit{s}.
The robot uses a proportional-derivative (PD) controller to track the desired trajectories.

The tracking error was collected in Gazebo simulations \cite{koenig2004design} in ROS 2 \cite{macenski2022robot}.
The positional error from sim-to-real was experimentally determined to be less than 0.005 \unit{m}, mainly due to localization drift from IMU and encoders.
To account for this, we added 0.005 \unit{m} to the collected tracking error in the horizontal and vertical directions.
This is illustrated in Fig. \ref{fig:tb_tracking_error}.

\begin{figure}
\centering
\includegraphics[width=1\columnwidth]{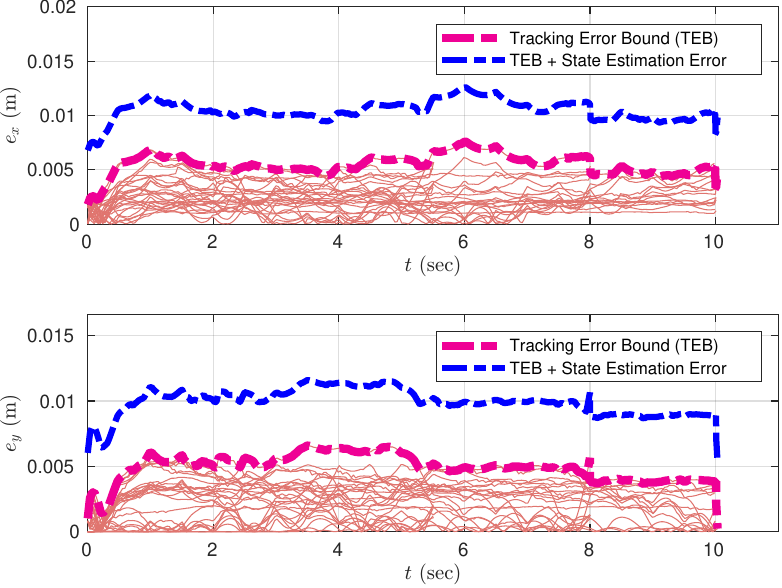}
\caption{
Tracking error function of the hardware demo.
The pale pink lines were tracking error in the workspace $\px$ and $\py$ collected across multiple Gazebo simulations \cite{koenig2004design} in ROS 2 \cite{macenski2022robot}.
The magenta line indicates TEB, the maximum tracking error recorded across the simulations.
To account for positional error from sim-to-real, 0.005 \unit{m} was added to the TEB, as shown in blue.
The blue line is tracking error ultimately used in BRAS computation.
}
\label{fig:tb_tracking_error}
\vspace{-1em}
\end{figure}
\subsection{Results}
We placed the robot at 17 different initial configurations.
On average, computing the BRAS took 1.65 \unit{s}.
In all experiments, the robot successfully avoided the obstacles and reached the goal, despite the gap being less than two times the diameter of the robot.
Snapshot of the realized motion and the tracked trajectory for one of the initial configuration is shown in Fig. \ref{fig:tb_hw_demo}.

\subsection{Discussion}
This demo presented preliminary results on how PARC can be extended to hardware and real robots.
We acknowledge that (i) the chosen robot is slow, with maximum velocity of 0.1 \unit{m/s}, so the tracking error is rather small, (ii) the model mismatch from sim-to-real is not challenging to account for, with little disturbance present and a bound on the state estimation error can be easily established, and (iii) the obstacles' exact position and shape are assumed to be known, with no object detection and estimation error.
We leave experiments with more complicated hardware and challenging environments as future work, since the focus of this paper is computing the BRAS in the first place.
That said, we are confident that PARC can be extended to more difficult hardware problem setups, as shown on other similar set-based planner-tracker reach-avoid methods \cite{kousik2020bridging, michaux2023can,vaskov2019towards,chen2021fastrack}.
\section{Conclusion}\label{sec:conclusion}

This paper has presented an approach to use Piecewise Affine Reach-avoid Computation (PARC) for safe goal-reaching robot trajectory planning near danger.
PARC enables goal-reaching through narrow gaps and in tight obstacle configurations, even with extreme dynamics.
The method outperforms a variety of state-of-the-art reach-avoid methods and establishes a new benchmark for safe, extreme vehicle motion planning.
Future work will improve PARC's computation time, consider more types of uncertainty as opposed to lumped tracking error, and extend beyond mobile robots.

\textit{Limitations:}
PARC has several key limitations.

First and foremost, it requires an assumption that the data coverage of sampled tracking error is large enough.
While this assumption holds in practice (over hundreds of collision-free trials planned in this paper), it means that further mathematical work is necessary for true safety guarantees.

Second, if the planning of the robot takes place at the joint level, which is common for manipulator and legged robots, augmentation of the obstacles with tracking error is not straight-forward.
Though this has been successfully done in other set-based planner-tracker reach-avoid methods such as Autonomous Reachability-based Manipulator Trajectory Design (ARMTD) \cite{holmes2020reachable} and Autonomous Robust Manipulation via Optimization with Uncertainty-aware Reachability (ARMOUR) \cite{michaux2023can} by expressing H-polytopes in configuration space as polynomial zonotopes \cite{kochdumper2020reachability} in workspace, we opted to leave its extension to PARC as future work, as the focus of this paper is just computing the BRAS in the first place.

Third, although PARC is faster than compared methods, it is still too slow for real-time; the present implementation has not been parallelized and may benefit from alternative representations to H-polytopes.

Fourth, depending on the choice of planning model, PARC could be difficult to implement on long-horizon problem settings.
This is because the assumption of having obtained an expert plan \textit{with constant trajectory parameters} that connects the initial position to the goal set (Assumption \ref{as:expert_traj}), if such plan exists at all, gets increasingly difficult to fulfill the longer the planning horizon is \cite{bonalli2019gusto, lofberg2004yalmip}.
Though a good choice of planning model (e.g. time-variant affine systems) would alleviate this problem, we restrict the problem setups in this paper to short-horizon (i.e. near danger) settings to avoid overclaiming the capabilities of PARC.

Lastly, PARC has only been compared on hand-crafted examples of goal-reaching near danger, and hardware validation has only been done on a simple robot. 
The experiments have only considered tracking error, with the hardware validation considering error from localization drift of the IMU and encoders in addition.
It is critical to validate the method across randomly-generated, realistic environments on a variety of hardware with multiple significant sources of uncertainties, such as perception error.
To this end, the authors are preparing a small robotic car to perform drift parking experiments.

\section*{Acknolwedgments}
We are grateful to the anonymous reviewers and editor's suggestions for additional comparisons and presentation.
We would also like to thank Trey Weber for assistance with the drifting dynamics, and Christos Verginis for assistance with implementing KDF.

\renewcommand{\bibfont}{\normalfont\footnotesize}
{\renewcommand{\markboth}[2]{}
\printbibliography}

\appendix
\subsection{Related Work}\label{app:related_work}

We now discuss methods that guarantee safety (avoiding dangerous states) and liveness (reaching a goal) for control systems and robot motion planning.
In particular, we discuss optimal control methods, safety filters, and planner-tracker frameworks.
We also note connections between our approach and sampling-based motion planning.
Finally, we discuss PWA system models, which we leverage to overcome limitations of conservativeness and computational tractability for providing safety and liveness guarantees with state-of-the-art methods.

\subsubsection{Optimal Control Methods}
Many safety methods focus on synthesizing or analyzing controllers.
There has been extensive development of techniques such as CBFs \cite{ames2016control, agrawal2017discrete, rosolia2022unified, castaneda2022probabilistic}, Hamilton-Jacobi (HJ) reachability \cite{herbert2021scalable, seo2022real}, sums-of-squares (SOS) reachability \cite{majumdar2017funnel,tedrake2010lqr}, robust MPC \cite{rakovic2012parameterized, villanueva2017robust, wabersich2021predictive}, and graphs of convex sets (GCS) \cite{marcucci2023motion, cohn2023non}.
Yet, only a minority of these methods address the dual objectives of liveness and safety \cite{hsu2021safety, so2023solving, borquez2023safety,fridovich2019safely}. 
Some methods seek a balance between performance and safety by implicitly integrating each constraint into the cost function of optimal control problem, solved through methods like MPC \cite{williams2016aggressive} or reinforcement learning (RL) \cite{srinivasan2020learning}, but can fail to offer hard satisfaction of such constraints.

The key challenge with most optimal control methods is to synthesize a controller that accounts for arbitrary bounded disturbances to a model of a robot.
By contrast, in this work, we seek to synthesize a planner that cooperates with, and leverages knowledge about, an imperfect controller.

\subsubsection{Safety Filters}
A related strategy prioritizes safety through the use of safety filters \cite{wabersich2021probabilistic, wabersich2023data}; indeed, many optimal control methods can also be cast as safety filters \cite{hsu2023safety,ames2016control, agrawal2017discrete}.
In this case, a performance-oriented controller is minimally altered by a safety-oriented controller when the system is \textit{about to} violate the safety constraints, mostly building on the safe invariant sets.
Since this approach prioritizes safety over performance, thereby enforcing the hard satisfaction of safety in any case, the safety filter might produce conservative behavior.
One way to mitigate conservativeness is to leverage reinforcement learning to exploit the safety filter \cite{thumm2022provably,shao2021reachability}.

The key challenge with safety filters is that they intervene at the control level, and typically treat upstream planning as a disturbance \cite{chen2021fastrack, singh2020robust}.
This can severely impact liveness, because the planner and controller are treated as adversarial, as opposed to cooperative.
By contrast, we adopt a cooperative framework.

Another notable challenge, which we do not address, is extending to systems with very high-dimensional states or input/observation spaces.
While some traditional approaches have tried a divide-and-conquer approach for high dimensions \cite{chen2016fast}, the most promising path forward appears to be learning-based methods \cite{bansal2021deepreach,michaux2023reachability,dawson2022safe,xiao2023barriernet,selim2022safe}.
However, by relying on deep or machine learning, these methods essentially lose all safety guarantees above 12-D.
We look forward to potentially attacking high-dimensional systems by fusing our proposed method with learning in the future.

\subsubsection{Planner-Tracker Methods}
The final category of safe planing and control methods that we discuss is the \textit{planner-tracker} framework \cite{chen2021fastrack,kousik2019safe,kousik2020bridging,michaux2023reachability}.
These methods use a simplified model to enable fast plan generation; plans are then tracked by a controller that uses a high-fidelity tracking model of a robot.

Two representative approaches are FaSTrack \cite{chen2021fastrack} and RTD \cite{kousik2019safe,kousik2020bridging}.
FaSTrack employs HJ reachability to precompute a worst-case error bound between the planning and tracking models, then dilates obstacles by this bound to ensure safe planning online.
RTD precomputes a forward reachable set (FRS) of a parameterized set of planning model trajectories and associated (time-varying) tracking error, then uses the FRS to find safe plans at runtime.
We compare our proposed method against both FaSTrack and RTD.

The primary challenge for planner-tracker methods is that the computational advantage of the planning model comes at the cost of model discrepancy \cite{chen2021fastrack, singh2020robust, kousik2017safe, kousik2019safe}.
An additional key challenge is that the numerical representation of safety can be overly conservative even when just representing the planning model and ignoring model error.
In this work, we design a planner-tracker framework that \textit{exactly} represents trajectories of the planning model using piecewise affine systems, meaning that we only need to account for model error.
We find that this significantly reduces conservativeness over related methods.

\subsubsection{Sampling-Based Motion Planning}
The challenge of planning near danger, or through a narrow gap, has been well-studied with classical sampling-based motion planning techniques \cite{elbanhawi2014sampling} and is still an active area of research (e.g., \cite{orthey2024multilevel,liu2023simultaneous,yu2023gaussian}).
When paths or trajectories \textit{must} pass near obstacles, it becomes critical to accurately represent dynamics and account for error in modeling and control.
To this end, methods such as kinodynamic RRT* \cite{webb2013kinodynamic} require repeatedly rolling out and collision-checking the trajectories, as well as solving two-point boundary value problems of nonlinear dynamical systems (i.e., steer function) online, which can be computationally expensive.
If in addition the dynamical system is high-dimensional and/or the set of true feasible reach-avoid set is small (which is the case for ``near danger'' problem setups), it could take a very long time for a solution to be found \cite{berenson2009manipulation}.

In this paper, instead of sampling to find paths or trajectories, we directly approximate a \textit{continuum} of all feasible trajectories through a narrow gap.
That said, we do not view our approach as \textit{competing} with sampling-based motion planning.
By contrast, our PARC planner is complementary to sampling in three ways.
First, PARC requires \textit{expert plans} that seed our reachability analysis; in this paper, we generate the plans by approximating the solution to a two-point boundary value problem, but we could leverage a sampling approach instead.
Second, as we show in our experiments, PARC is only necessary when a goal is specifically near obstacles; a sampling-based approach that ignores dynamics could be used to find goals and identify these scenarios (e.g., if the sampling-based approach fails, then one should switch to using PARC).
Third, our approach can be used to \textit{robustify} a sampling-based approach by computing safety funnels around a sampled plan, similar to linear quadratic regulator (LQR) trees \cite{tedrake2010lqr} or funnel libraries \cite{majumdar2017funnel}.

\subsubsection{Piecewise-Affine Systems}

In this paper, we propose a parameterized, time-variant PWA system as our planning model. 
PWA systems, which evolve through different affine models based on state regions (modes), adeptly capture nonlinear dynamics.
Their capacity for modeling non-smooth dynamics through hybridization is well-documented \cite{asarin2007hybridization, dang2010accurate} and they are prevalently used to approximate complex nonlinear hybrid systems, including legged locomotion \cite{marcucci2017approximate}, traffic systems \cite{mehr2017stochastic}, and neural network \cite{vincent2021reachable}.

However, planning with PWA systems, which involves solving their control problem, is recognized as challenging due to the necessity of determining both input and mode sequences \cite{sadraddini2019sampling}. 
Standard solutions, including mixed-integer convex programming (MICP) \cite{marcucci2019mixed} and multiparametric programming \cite{sakizlis2007linear,baotic2005optimal}, face computational challenges, particularly with increasing time steps exacerbating the number of integer decision variables. 

To handle this challenge, we take advantage of \textit{mode sequences}; it is well-established that a known mode sequence greatly simplifies computation \cite{thomas2006robust,kerrigan2002optimal,rakovic2005robust}.
By leveraging sampling to synthesize likely mode sequences, our proposed approach significantly improves computational efficiency to enable successful computation of reach-avoid motion plans.

\subsection{Operations on H-Polytopes}\label{app:h_poly_ops}
The intersection of H-polytopes is an H-polytope:
\begin{align}
\begin{split}\label{eq:intersection}
    \hpoly(\Acon_1, \bcon_1) \cap \hpoly(\Acon_2, \bcon_2) &= \left\{\xv\ |\ \Acon_1\xv\leq\bcon_1, \Acon_2\xv\leq\bcon_2\right\},\\
    &= \hpoly\left(\begin{bmatrix}
        \Acon_1 \\
        \Acon_2
    \end{bmatrix}, \begin{bmatrix}
        \bcon_1 \\
        \bcon_2
    \end{bmatrix}\right).
\end{split}
\end{align}

The Minkowski sum $\oplus$ is
\begin{align}\label{eq:minkowski_sum}
    \Poly_1 \oplus \Poly_2 &= \left\{\xv + \yv\ |\ \xv \in \Poly_1, \yv \in \Poly_2\right\},
\end{align}
which can be computed as one H-polytope by solving an LP \cite{herceg2013multi}.
The Minkowski sum essentially adds a ``buffer'' to the object being summed, which we use to pad the obstacles to account for tracking error in our reachability analysis.

Similarly, the Pontryagin Difference $\ominus$ is
\begin{align}\label{eq:pontryagin_diff}
    \Poly_1 \ominus \Poly_2 &= \left\{\xv \in \Poly_1\ |\ \xv + \yv \in \Poly_1\ \forall\ \yv \in \Poly_2\right\},
\end{align}
which can be expressed as one H-polytope by subtracting the support of $\Poly_2$ for each inequality of $\Poly_1$ \cite{herceg2013multi}.
This essentially inverts the Minkowski sum, reducing the volume of an H-polytope, which we use to shrink a goal set to account for tracking error.

The Cartesian product $\times$ is exactly an H-polytope:
\begin{subequations}
\begin{align}
    \Poly_1 \times \Poly_2 &= \left\{\begin{bmatrix}
        \xv\\\yv
    \end{bmatrix}\ \Bigr|\ \xv \in \hpoly(\Acon_1, \bcon_1), \yv \in \hpoly(\Acon_2, \bcon_2)\right\},\\
    &= \hpoly\left(\begin{bmatrix}
        \Acon_1 & \zeros\\
        \zeros & \Acon_2
    \end{bmatrix}, \begin{bmatrix}
        \bcon_1\\\bcon_2
    \end{bmatrix}\right).
\end{align}
\end{subequations}
We use this to combine low-dimensional reachable sets into higher dimensions, and to ensure that H-polytopes have equal dimensions for operations such as Minkowski sums and Pontryagin differences.

The convex hull is
\begin{align}\label{eq:conv_hull}
    \conv{\Poly_1, \Poly_2} &= \left\{\xv + \gams (\yv - \xv)\ |\ 0 \leq \gams \leq 1, \xv, \yv \in \Poly_1 \cup \Poly_2\right\},
\end{align}
which can be computed as one H-polytope by vertex enumeration, computing the convex hull of the vertices, and expressing the result in H-polytope form \cite{fukuda2003cddlib}.
The convex hull contains all straight lines between two sets, which we use to overapproximate trajectories between discrete time points to secure continuous time safety guarantees.

For an $\ndim$-dimensional H-polytope $\hpoly(\Acon, \bcon)$, the projection from the $\pdim^\regtext{th}$ dimension to the $\qdim^\regtext{th}$ dimension for some $\pdim \leq \qdim \leq \ndim$ is defined as:
\begin{align}
\begin{split}
\label{eq:projection}
    \proj[\pdim:\qdim]{\hpoly(\Acon, \bcon)} &= \left\{
        \begin{bmatrix}
        \zeros_{a},\
        \eye_{b}, \ 
        \zeros_{c}
    \end{bmatrix}\xv\ |\
    \Acon\xv\leq\bcon\right\},\quad\regtext{where} \\
    a &= (\qdim-\pdim+1)\times(\pdim-1), \\
    b &= (\qdim-\pdim+1)\ \regtext{and} \\
    c &= (\qdim-\pdim+1) \times (\ndim - \qdim)
\end{split}
\end{align}
The right-hand side of \eqref{eq:projection} is exactly an \textit{AH-polytope}, which can be converted to an H-polytope using block elimination, Fourier-Motzkin elimination, parametric linear programming, or vertex enumeration \cite{fukuda2003cddlib, marechal2017scalable}.
Somewhat the opposite of Cartesian product, the projection operation lowers the dimensions of the input polytope, which will be useful for retrieving low-dimensional information if only a subset of the original states is of interest.

Note that convex hull and projection are NP-hard for H-polytopes, exponentially scaling in computational time and complexity with the dimension of the polytope \cite{althoff2021set}.
However, since PARC performs polytope computation only for low-dimensional ($\leq$ 6-D) planning models, the computational time and complexity remain reasonable, as shown in Section \ref{sec:experiments}.

Finally, slicing \cite{herceg2013multi} an $\ndim$-dimensional H-polytope $\hpoly(\Acon, \bcon)$ from the $\pdim^\regtext{th}$ dimension to the $\qdim^\regtext{th}$ dimension with respect to a constant $\xv_0 \in \R^{\qdim-\pdim+1}$, where $\pdim \leq \qdim \leq \ndim$, is defined as:
\begin{subequations}
\begin{align}
    \slice[\pdim:\qdim]{\hpoly(\Acon, \bcon), \xv_0} &= \left\{ \begin{bmatrix}
        \yv_1 \\ \yv_2
    \end{bmatrix}\ |\ \Acon\begin{bmatrix}
        \yv_1 \\ \xv_0 \\ \yv_2
    \end{bmatrix} \leq \bcon
    \right\},\\
    &= \left\{\yv\ |\ \begin{bmatrix}
        \Acon_1,\
        \Acon_3
    \end{bmatrix}\yv \leq \bcon - \Acon_2 \xv_0
    \right\},\\
    &= \hpoly(\begin{bmatrix}
        \Acon_1,\
        \Acon_3
    \end{bmatrix}, \bcon - \Acon_2 \xv_0), 
\end{align}
\end{subequations}
where $\Acon = [\Acon_1, \Acon_2, \Acon_3]$, $\Acon_1 \in \R^{\nhp\times\ndim_{\pdim-1}}$, $\Acon_2 \in \R^{\nhp\times\ndim_{\qdim-\pdim+1}}$, $\Acon_3 \in \R^{\nhp\times\ndim_{\ndim-\qdim}}$.
Thus, slicing results in exactly an H-polytope.
We use slicing to aid in some visualization and mathematical proofs.
\subsection{Method for Affinizing a Nonlinear System}\label{app:voronoi_affinization}

We convert a nonlinear system into a PWA system via sampling in its state space, then computing Voronoi regions and linearizing the dynamics in each region, similar to in \cite{casselman2009new}.
In particular, we apply the following:

\begin{prop}\label{prop:voronoi affinization}
Express the domain $\Xpoly$ as a H-polytope $\hpoly(\Acon_X, \bcon_X)$.
Consider linearization points $\xv\lin_{\ts, \is} \in \Xpoly$, $\is = 1, \cdots, \ndim\lin_\ts$.
Then for $\is = 1, \cdots, \ndim\lin_\ts$, the Voronoi cells $\Voro_{\is,\ts}$ given by
\begin{subequations}
\begin{align}\label{eq:cell_def}
    \Voro_{\is,\ts} &= \Voror_{\is,\ts} \cap \Xpoly,\\
    \Voror_{\is,\ts} &= \left\{\xv\ |\ \norm{\xv - \xv\lin_{\ts, \is}}^2_2 \leq \norm{\xv - \xv\lin_{\ts, \js}}^2_2, \js = 1, \cdots, \ndim\lin_\ts\right\},
\end{align}
\end{subequations}
are compact H-polytopes that fulfill \eqref{eq:bound_con}, and can therefore be the PWA regions of a PWA system.
\end{prop}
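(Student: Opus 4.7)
The plan is to verify three things in turn: that each $\Voro_{\is,\ts}$ is an H-polytope, that it is compact, and that the collection satisfies the disjoint-interiors condition~\eqref{eq:bound_con}.

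First I would show that the unrestricted Voronoi cell $\Voror_{\is,\ts}$ is itself an H-polytope by expanding each defining inequality. For any $\js \neq \is$, the inequality $\norm{\xv - \xv\lin_{\ts,\is}}^2_2 \leq \norm{\xv - \xv\lin_{\ts,\js}}^2_2$ has its quadratic $\norm{\xv}^2_2$ terms cancel on both sides, leaving the linear half-space
\begin{align}
2\,(\xv\lin_{\ts,\js}-\xv\lin_{\ts,\is})\tp \xv \;\leq\; \norm{\xv\lin_{\ts,\js}}^2_2-\norm{\xv\lin_{\ts,\is}}^2_2.
\end{align}
Stacking these $\ndim\lin_\ts-1$ linear inequalities gives $\Voror_{\is,\ts} = \hpoly(\Acon_{\is,\ts}^{V},\bcon_{\is,\ts}^{V})$ for some matrix/vector pair that can be written out explicitly. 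Since the intersection of two H-polytopes is again an H-polytope (Appendix~\ref{app:h_poly_ops}), $\Voro_{\is,\ts} = \Voror_{\is,\ts} \cap \Xpoly$ is an H-polytope whose defining matrices $(\Acon_{\is,\ts},\bcon_{\is,\ts})$ are obtained by vertically stacking the Voronoi bisector rows with $(\Acon_X,\bcon_X)$.

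Next, compactness follows from the paper's standing assumption that $\Xpoly$ is compact: $\Voro_{\is,\ts} \subseteq \Xpoly$ is bounded, and it is closed as a finite intersection of closed half-spaces.

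The main substantive step is verifying the disjoint-interior condition~\eqref{eq:bound_con}. Fix $\is \neq \js$ and suppose $\xv$ lies in the open set $\{\xv \mid \Acon_{\is,\ts}\xv < \bcon_{\is,\ts}\}$. Among the rows of $\Acon_{\is,\ts}$ is the bisector inequality between $\xv\lin_{\ts,\is}$ and $\xv\lin_{\ts,\js}$; the strict version of that row is exactly $\norm{\xv - \xv\lin_{\ts,\is}}^2_2 < \norm{\xv - \xv\lin_{\ts,\js}}^2_2$. But membership of $\xv$ in $\hpoly(\Acon_{\js,\ts},\bcon_{\js,\ts})$ requires the reverse (non-strict) bisector inequality $\norm{\xv - \xv\lin_{\ts,\js}}^2_2 \leq \norm{\xv - \xv\lin_{\ts,\is}}^2_2$, a direct contradiction. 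Hence the intersection in~\eqref{eq:bound_con} is empty, and the cells are admissible PWA regions.

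The only subtlety I anticipate is the degenerate case in which a linearization point is repeated or lies outside $\Xpoly$, so that some $\Voro_{\is,\ts}$ is empty or lower-dimensional; emptiness is harmless because an empty H-polytope trivially satisfies every condition, and lower-dimensional cells still have empty strict-inequality sets, so~\eqref{eq:bound_con} continues to hold vacuously. I would note this explicitly to avoid confusion, but would not expect it to cause real trouble.
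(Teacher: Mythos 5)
Your proposal is correct and follows essentially the same route as the paper's proof: expand the squared-norm bisector inequalities into half-spaces so that $\Voror_{\is,\ts}$ is an H-polytope, get compactness from intersecting with the compact H-polytope $\Xpoly$, and derive \eqref{eq:bound_con} by the contradiction between the strict $\is$-vs-$\js$ bisector inequality and the reverse non-strict one required for membership in the $\js$th cell. The only cosmetic differences are that you argue directly on the strict-inequality set appearing in \eqref{eq:bound_con} (the paper phrases it via $\regtext{int}(\Voro_{\is,\ts})$) and you add a harmless remark about degenerate cells.
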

\begin{proof}
It was proven in \cite{boyd2004convex} that $\Voror_{\is,\ts}$ is a closed, but not necessarily bounded H-polytope.
In fact,
\begin{subequations}
\begin{align}
    \Voror_{\is,\ts} =& \Biggl\{\xv\ \Biggr|\ \begin{bmatrix}
        (\xv - \xv\lin_{\ts, \is})\cdot(\xv - \xv\lin_{\ts, \is})\\
        \vdots\\
        (\xv - \xv\lin_{\ts, \is})\cdot(\xv - \xv\lin_{\ts, \is})
    \end{bmatrix} \leq \notag\\
    &\begin{bmatrix}
        (\xv - \xv\lin_{\ts, 1})\cdot(\xv - \xv\lin_{\ts, 1})\\
        \vdots\\
        (\xv - \xv\lin_{\ts, \ndim\lin(\ts)})\cdot(\xv - \xv\lin_{\ts, \ndim\lin(\ts)})
    \end{bmatrix}\Biggr\},\\
    =& \Biggl\{\xv\ \Biggr|\ \begin{bmatrix}
        (2\xv\lin_{\ts, 1} - 2\xv\lin_{\ts, \is})\tp\\
        \vdots\\
        (2\xv\lin_{\ts, \ndim\lin(\ts)} - 2\xv\lin_{\ts, \is})\tp
    \end{bmatrix}\xv \leq \notag\\
    &\begin{bmatrix}
        \norm{\xv\lin_{\ts, 1}}_2^2 - \norm{\xv\lin_{\ts, \is}}_2^2\\
        \vdots\\
        \norm{\xv\lin_{\ts, \ndim\lin(\ts)}}_2^2 - \norm{\xv\lin_{\ts, \is}}_2^2
    \end{bmatrix}\Biggr\},\\
    =& \hpoly\left(\begin{bmatrix}
        (2\xv\lin_{\ts, 1} - 2\xv\lin_{\ts, \is})\tp\\
        \vdots\\
        (2\xv\lin_{\ts, \ndim\lin(\ts)} - 2\xv\lin_{\ts, \is})\tp
    \end{bmatrix}, \begin{bmatrix}
        \norm{\xv\lin_{\ts, 1}}_2^2 - \norm{\xv\lin_{\ts, \is}}_2^2\\
        \vdots\\
        \norm{\xv\lin_{\ts, \ndim\lin(\ts)}}_2^2 - \norm{\xv\lin_{\ts, \is}}_2^2
    \end{bmatrix}\right).
\end{align}
\end{subequations}
Since H-polytopes are closed under intersection (as shown in \eqref{eq:intersection}), and the intersection of a closed set and a compact set is compact, $\Voro_{\is,\ts}$ is a compact H-polytope from \eqref{eq:cell_def}.

Further, for any $\is, \js \in \{1, \cdots, \ndim\lin_\ts\}$, $\is \neq \js$, the intersection of the interior of $\Voro_{\is,\ts}$ with $\Voro_{\js,\ts}$ is given by:
\begin{align}
    &\interior{\Voro_{\is,\ts}}\cap\Voro_{\js,\ts} = \Bigl\{ \xv\ |\ \norm{\xv - \xv\lin_{\ts, \is}}^2_2 < \norm{\xv - \xv\lin_{\ts, \ks}}^2_2, \notag\\
    &\norm{\xv - \xv\lin_{\ts, \js}}^2_2 \leq \norm{\xv - \xv\lin_{\ts, \ks}}^2_2, \ks = 1, \cdots, \ndim\lin_\ts\Bigr\} \cap \interior{\Xpoly},
\end{align}
where $\interior{\cdot}$ is the interior of a set.
However, notice that $\norm{\xv - \xv\lin_{\ts, \is}}^2_2 < \norm{\xv - \xv\lin_{\ts, \ks}}^2_2$ and $\norm{\xv - \xv\lin_{\ts, \js}}^2_2 \leq \norm{\xv - \xv\lin_{\ts, \ks}}^2_2$ for $\ks = 1, \cdots, \ndim\lin(\ts)$ implies $\norm{\xv - \xv\lin_{\ts, \is}}^2_2 < \norm{\xv - \xv\lin_{\ts, \js}}^2_2$ and $\norm{\xv - \xv\lin_{\ts, \js}}^2_2 \leq \norm{\xv - \xv\lin_{\ts, \is}}^2_2$, which is impossible.
Thus,
\begin{align}
    \interior{\Voro_{\is,\ts}}\cap\Voro_{\js,\ts} = \emptyset,
\end{align}
which is exactly the condition \eqref{eq:bound_con}.

Therefore, $\Voro_{\is,\ts}$ defines the PWA regions of a PWA system, with each of the $\npwat = \ndim\lin_\ts$ regions given by:
\begin{align}
    \Acon_{\is,\ts} &= \begin{bmatrix}
        (2\xv\lin_{\ts, 1} - 2\xv\lin_{\ts, \is})\tp\\
        \vdots\\
        (2\xv\lin_{\ts, \ndim\lin(\ts)} - 2\xv\lin_{\ts, \is})\tp\\
        \Acon_X
    \end{bmatrix},\\ 
    \bcon_{\is,\ts} &= \begin{bmatrix}
        \norm{\xv\lin_{\ts, 1}}_2^2 - \norm{\xv\lin_{\ts, \is}}_2^2\\
        \vdots\\
        \norm{\xv\lin_{\ts, \ndim\lin(\ts)}}_2^2 - \norm{\xv\lin_{\ts, \is}}_2^2\\
        \bcon_X
    \end{bmatrix},
\end{align}
for $\is = 1, \cdots, \ndim\lin_\ts$.
\end{proof}

\subsection{Proof of Theorem \ref{thm:avoid_set}: Avoid Set without Tracking Error} \label{app:extended_proof}
Here, we prove Theorem \ref{thm:avoid_set}, which is our paper's main result.
We first introduce the notations used throughout the proof, then provide a concise proof sketch, before finally elaborating the detailed proof.
\begin{proof} \textbf{\textit{Notation.}} For clarity, we introduce the block matrix representation of $\Ccon_{s_t, t} - \eye_{\nlow}$ and $\dcon_{s_t, t}$:

\begin{equation}
\Ccon_{\sap_\ts, \ts} - \eye_{\nlow} \triangleq \left[
\begin{array}{c|c} \Ccon_1 & \Ccon_2 \\ \hline \Ccon_3 & \Ccon_4 \end{array}
\right], \quad \dcon_{\sap_\ts, \ts} \triangleq \left[
    \begin{array}{c} \dcon_1 \\ \hline \dcon_2 \end{array}
\right]
\end{equation}
\noindent where 
\begin{align*}
\Ccon_1 &\in \R^{{\nlow}\eti \times {\nlow}\eti}, & \Ccon_2 &\in \R^{{\nlow}\eti \times {\nlow}\noneti},\\
\Ccon_3 &\in \R^{{\nlow}\noneti \times {\nlow}\eti}, & \Ccon_4 &\in \R^{{\nlow}\noneti \times {\nlow}\noneti}, \\ 
\dcon_1 &\in \R^{{\nlow}\eti}, & \dcon_2 &\in \R^{{\nlow}\noneti}.
\end{align*}

\noindent We denote the projection of obstacle $\Obs_i$ and $(\tf-\ts)$-time BRS $\ol\Rbrs_{\ts}$ onto the subspace $\Xpoly\eti$ by $\Obs\eti \triangleq \proj[1:{\nlow}\eti]{\Obs_i}$ and $\ol\Rbrs\eti \triangleq \proj[1:{\nlow}\eti]{\ol\Rbrs_{\ts}}$, respectively. Similarly, the projection of $\ol\Rbrs_{\ts}$ onto the subspace $\Xpoly\noneti$ is denoted as $\ol\Rbrs\noneti$.
We also denote $\Obs_{+} \triangleq \Obs\eti \times \R^{{\nlow}\noneti}$ for brevity.

\textbf{\textit{Proof sketch.}} We reinterpret the theorem as a set-inclusion problem between a \textit{true avoid set} $P$ and \textit{over-approximated avoid set} $\ol\Ravd_{\is, t, t}$.  
Then, a conservative avoid set $Q$ is defined, from which we derive the sufficient condition for set-inclusion. 
Using the Extended-Translation-Invariance (ETI) assumption (Assumption \ref{ass:eti}) and Lemma \ref{lem:pwa_eti}, we prove the satisfaction of the sufficient condition.

\textbf{\textit{Detailed Proof.}} We first define the true avoid set $P$, a set of states that satisfy the equation \eqref{eq:int_avoid_set} as follows:
\begin{equation}
\begin{split} P = \{
\xv \in \ol\Rbrs_{\ts} & \mid ~\exists (\hat \xv, \gams) \in \Obs_i \times [0, 1] \quad \\
&\regtext{s.t.} \quad \xv + \gams(\Ccon_{\sap_{\ts}, \ts}\xv + \dcon_{\sap_{\ts}, \ts} - \xv)=\hat \xv 
\} \end{split}
\end{equation}
\noindent Note that the theorem holds if and only if $P \subset \ol\Ravd_{\is, \ts, \ts}$. 

Next, to derive the sufficient condition for $P \subset \ol\Ravd_{\is, \ts, \ts}$, we define a \textit{conservative} avoid set $Q$:
\begin{equation}
\begin{split} Q = &\{\xv\eti 
\in \ol\Rbrs\eti \mid ~\exists (\xv\noneti, \hat\xv\eti, \gamma) \in \ol\Rbrs\noneti \times \Obs\eti \\ 
&\times [0,1] \ \regtext{s.t.}\  \xv\eti + \gams(\Ccon_{1}\xv\eti + \Ccon_{2}\xv\noneti + \dcon_{1})=\hat \xv\eti 
\} \end{split}
\end{equation}
\noindent Note that $\xv=[\xv\eti\tp, \xv\noneti\tp]\tp \in P$ implies $\xv\eti \in Q$, $\xv\noneti \in \Xpoly\noneti$, $\xv \in \ol\Rbrs_{\ts}$, which leads to:

\begin{equation} \label{eq:z-agnostic}
P \subset (Q \times \Xpoly\noneti) \cap \ol\Rbrs_t
\end{equation}
\noindent Building on this relation, we derive the sufficient condition for the theorem as follows:
\begin{equation}\label{eq:avoid_suff_condition}
Q \subset \conv{\proj[1:{\nlow}\eti]{A}, \Obs\eti}
\end{equation}
\noindent since satisfaction of \eqref{eq:avoid_suff_condition} and \eqref{eq:z-agnostic} implies $P \subset \ol\Ravd_{\is, \ts, \ts}$.

Now we prove the sufficient condition \eqref{eq:avoid_suff_condition} using the ETI assumption.
Fix $\tilde{\xv}\eti \in Q$.
Then by definition of $Q$, there exists $(\tilde{\xv}\noneti, \hat{\xv}\eti, \tilde\gams) \in \ol\Rbrs\noneti \times \Obs\eti \times [0, 1]$ that satisfies:
\begin{equation}
    \tilde{\xv}\eti + \tilde\gams(\Ccon_{1} \tilde{\xv}\eti + \Ccon_{2}\tilde\xv\noneti + \dcon_{1})=\hat\xv\eti
\end{equation}
By rearranging the equation, we get:
\begin{equation} \label{eq:avoid_convhull}
\begin{split}
    \tilde\xv\eti &= \tilde\gams \left( \hat\xv\eti - (\Ccon_1\tilde\xv\eti+\Ccon_2\tilde\xv\noneti+\dcon_1)
    \right)+ (1-\tilde\gams)\hat\xv\eti \\
    &= \tilde\gams\vc{y} + (1-\tilde\gams)\hat\xv\eti
\end{split}
\end{equation}
\noindent where noting $\hat\xv\eti \in \Obs\eti$, it suffices to prove $\vc{y} \in \proj[1:{\nlow}\eti]{A}=\proj[1:{\nlow}\eti]{\brs{\Obs_+, \Ccon_{s_t, t}, \dcon_{s_t, t}}}$ to prove \eqref{eq:avoid_suff_condition}. This is shown by proving $[\vc{y}\tp, \tilde\xv\noneti\tp]\tp \in \brs{\Obs_+, \Ccon_{s_t, t}, \dcon_{s_t, t}}$. 
Noting $\Obs_+ = \Obs\eti \times \R^{{\nlow}\noneti}$, we have to show
\begin{align}
\vc{y} + \Ccon_{1} \vc{y} + \Ccon_{2} \tilde\xv\noneti + \dcon_{1} &\in \Obs\eti \label{eq:eti_avoid}\\ 
\tilde\xv\noneti + \Ccon_{3} \vc{y} + \Ccon_{4} \tilde\xv\noneti + \dcon_{2} &\in \R^{{\nlow}\noneti}.\label{eq:other_avoid}
\end{align}
Equation \eqref{eq:other_avoid} is immediate, while \eqref{eq:eti_avoid} is true by 
\begin{subequations}
\begin{align}
&\vc{y} + \Ccon_{1} \vc{y} + \Ccon_{2} \tilde\xv\noneti + \dcon_{1}\notag\\
=&\hat\xv\eti - (\Ccon_{1} \hat\xv\eti + \Ccon_{2} \tilde\xv\noneti + \dcon_{1}) \notag\\
    &+ \Ccon_{1}(\hat\xv\eti - (\Ccon_{1} \hat\xv\eti + \Ccon_{2} \tilde\xv\noneti + \dcon_{1}))+ \Ccon_2\tilde\xv\noneti+\dcon_1 \\
=& \hat\xv\eti - \Ccon_{1}(\Ccon_{1}\hat\xv\eti+\Ccon_{2}\tilde\xv\noneti+\dcon_{1})\label{eq:before_eti} \\ 
=& \hat\xv\eti \in \Obs\eti \label{eq:after_eti}
\end{align}
\end{subequations}
\noindent The equation \eqref{eq:after_eti} follows from \eqref{eq:before_eti} using ETI Assumption \ref{ass:eti} and the result of Lemma \ref{lem:pwa_eti} (i.e. \eqref{eq:eti_pwa_reorder}) namely:
\begin{equation}
\begin{split}
    &\norm{\Ccon_1(\Ccon_1 \hat\xv\eti + \Ccon_2 \tilde\xv\noneti + \dcon_1)}_2 \\
    =& 
    \norm{\hat{\Ccon}_{s_t, t}^{\Xpoly\eti}{(\hat{\Ccon}_{s_t, t}\xv + \dcon_{s_t, t})}_{1:{\nlow}\eti}}_2 \\  
    =& 0 
\end{split}
\end{equation}
Since $\vc{y} \in \proj[1:{\nlow}\eti]{A}$ and $\hat\xv\eti \in \Obs\eti$, by \eqref{eq:avoid_convhull}, $\tilde\xv\eti \in \conv{\proj[1:{\nlow}\eti]{A}, \Obs\eti}$.
Hence the sufficient condition \eqref{eq:avoid_suff_condition} is proved, and the theorem is proved accordingly.
\end{proof}

\subsection{Alternate Methods for Computing the Avoid Set}

Here, we present two methods to compute the avoid set without tracking error as alternatives to Corollary \ref{cor:avoid_set_computation} and Theorem \ref{thm:avoid_set}.
To account for tracking error, the goal set and the obstacles can be augmented or subtracted with the corresponding error sets as in Section \ref{subsec:parc_avoid_real}.

\subsubsection{Grid-Based Avoid Set}\label{subsec:grid_avoid_set}
Recall that, per \eqref{eq:cont_time_approx}, we model the continuous-time behavior as straight line segments between the discrete states of the PWA system.
Intuitively, the avoid set should include all trajectories where the straight lines connecting the PWA planning model's discrete states collide with any obstacles.
Thus, we can establish a necessary condition for collision to occur between two discrete timesteps with the following Proposition:

\begin{prop}[Unsafe Plans w/o Tracking Error]
\label{prop:obs_filter}
Consider the obstacle $\Obs_\is$, some time $\ts \in \{0, \Delta\ts, \cdots, \tf - \Delta\ts\}$, and the mode sequence $\Sap = (\sap_0, \cdots, \sap_\ts, \sap_{\ts+\Delta\ts}, \cdots, \sap_{\tf-\Delta\ts})$.
Suppose we compute  the $(\tf-\ts)$-time BRS $\ol{\Rbrs}_\ts$ and the $(\tf-\ts-\Delta\ts)$-time BRS $\ol{\Rbrs}_{\ts+\Delta\ts}$ as in \eqref{eq:brs_no_error}.
If any trajectories between $\ol{\Rbrs}_\ts$ and $\ol{\Rbrs}_{\ts+\Delta\ts}$ collide with $\Obs_\is$, that is, if $\exists\ \xv(\ts) \in \ol{\Rbrs}_\ts, \xv(\ts+\Delta\ts) = \Ccon_{\sap_\ts, \ts}\xv(\ts) + \dcon_{\sap_\ts, \ts} \in \ol{\Rbrs}_{\ts+\Delta\ts}$ such that
\begin{align}\label{eq:obs_filter_if}
    \Obs_\is \cap \left\{\xv(\ts) + \gams(\xv(\ts+\Delta\ts) - \xv(\ts))\ |\ 0\leq\gams\leq1\right\}\neq\emptyset,
\end{align}
then
\begin{align}\label{eq:obs_filter}
    \Obs_\is \cap \conv{\ol{\Rbrs}_\ts, \ol{\Rbrs}_{\ts+\Delta\ts}}\neq\emptyset.
\end{align}
\end{prop}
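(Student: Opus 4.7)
The plan is to prove the proposition by a direct set-containment argument based on the definition of the convex hull of two sets. The hypothesis \eqref{eq:obs_filter_if} supplies us with some point $\xv\lbl{col} = \xv(\ts) + \gams(\xv(\ts+\Delta\ts) - \xv(\ts))$ with $\gams \in [0,1]$ that lies in the obstacle $\Obs_\is$. To reach the conclusion \eqref{eq:obs_filter}, it suffices to exhibit this same $\xv\lbl{col}$ as an element of $\conv{\ol{\Rbrs}_\ts, \ol{\Rbrs}_{\ts+\Delta\ts}}$, since then it is in both sets and the intersection is nonempty.

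First I would rewrite the convex combination in the standard form $\xv\lbl{col} = (1-\gams)\xv(\ts) + \gams\,\xv(\ts+\Delta\ts)$ with $\xv(\ts) \in \ol{\Rbrs}_\ts$ and $\xv(\ts+\Delta\ts) \in \ol{\Rbrs}_{\ts+\Delta\ts}$ (the latter guaranteed by the hypothesis, which explicitly asserts $\Ccon_{\sap_\ts,\ts}\xv(\ts)+\dcon_{\sap_\ts,\ts} \in \ol{\Rbrs}_{\ts+\Delta\ts}$). Next I would invoke the definition of the convex hull of a union of two sets, which contains all convex combinations of pairs of points drawn one from each set. This immediately places $\xv\lbl{col} \in \conv{\ol{\Rbrs}_\ts, \ol{\Rbrs}_{\ts+\Delta\ts}}$, and therefore $\xv\lbl{col} \in \Obs_\is \cap \conv{\ol{\Rbrs}_\ts, \ol{\Rbrs}_{\ts+\Delta\ts}}$, proving \eqref{eq:obs_filter}.

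There is essentially no hard step here; the proposition is a one-line consequence of the fact that the line segment joining a point in $\ol{\Rbrs}_\ts$ to a point in $\ol{\Rbrs}_{\ts+\Delta\ts}$ is always contained in $\conv{\ol{\Rbrs}_\ts, \ol{\Rbrs}_{\ts+\Delta\ts}}$. The only subtle point worth noting is that the implication is one-directional (the converse would fail, since the convex hull generally contains many segments that do not correspond to an actual one-step PWA transition), which is exactly what makes checking \eqref{eq:obs_filter} a sound overapproximation of \eqref{eq:obs_filter_if} for identifying unsafe plans. I would flag this asymmetry in a short remark after the proof to justify why using the convex-hull condition as a filter is conservative but correct.
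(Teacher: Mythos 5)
Your argument is correct and is essentially the same as the paper's: both reduce \eqref{eq:obs_filter} to the fact that every segment joining a point of $\ol{\Rbrs}_\ts$ to a point of $\ol{\Rbrs}_{\ts+\Delta\ts}$ lies in $\conv{\ol{\Rbrs}_\ts, \ol{\Rbrs}_{\ts+\Delta\ts}}$, the paper stating this as a set inclusion and you instancing it at the single collision point. No gap; your closing remark on the one-directionality of the implication matches the paper's intended use of \eqref{eq:obs_filter} as a conservative filter.
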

\begin{proof}
By the definition of convex-hull, we have
\begin{subequations}
\begin{align}
    \conv{\ol{\Rbrs}_\ts, \ol{\Rbrs}_{\ts+\Delta\ts}} = &\{\xv(\ts) + \gams(\xv(\ts + \Delta\ts) - \xv(\ts))\ | \ 0\leq\gams\leq1, \notag\\
    &\xv(\ts), \xv(\ts+\Delta\ts)\in\ol{\Rbrs}_\ts \cup \ol{\Rbrs}_{\ts+\Delta\ts}\},\\
    \supset &\{\xv(\ts) + \gams(\xv(\ts + \Delta\ts) - \xv(\ts))\ |\ 0\leq\gams\leq1, \notag\\
    &\xv(\ts) \in \ol{\Rbrs}_\ts, \xv(\ts+\Delta\ts)\in \ol{\Rbrs}_{\ts+\Delta\ts}\}.
\end{align}
\end{subequations}
Thus \eqref{eq:obs_filter_if} necessarily implies \eqref{eq:obs_filter}.
\end{proof}

One can exploit Proposition \ref{prop:obs_filter} to compute the avoid set by gridding up the goal set and checking for each grid if any of their intermediate BRSs satisfies \eqref{eq:obs_filter} with any of the obstacles.
The union of all grids that fulfill such condition can then be taken as an over-approximation of the true avoid set without tracking error.
Alternatively, one can also grid up the reach set and check if their FRS \cite{herceg2013multi} intersects with any of the obstacles instead.

Since this method does not require ETI assumptions on the PWA planning model, one can show that its approximation of the true avoid set gets tighter as the grid gets smaller.
In practice, we found that computing avoid set in this manner is significantly slower than Theorem \ref{thm:avoid_set} for a desirable grid size.
Thus, we have only used it in Fig. \ref{fig:avoid_approx} to show the extend of overapproximation of Theorem \ref{thm:avoid_set}.

\subsubsection{Avoid Set w/o Convex Hull or Projection}\label{subsec:avoid_set_nohull}

In PWA planning models with specific structures, the intermediate avoid sets without tracking error $\ol{\Ravd}_{\is, \ts, \ts}$, and hence the avoid set $\Ravd$ can sometimes be computed without using convex hull or projection, significantly speeding up and reducing the conservativeness of the approximation:

\begin{thm}[Intermediate Avoid Set w/o Convex Hull or Projection]\label{thm:avoid_set_nohull}
    Consider a planning model where the augmented planning state $\xv(\ts)$ is:
    \begin{align}
        \xv(\ts) = \begin{bmatrix}
            \wv(\ts)\\
            \paramv
        \end{bmatrix},
    \end{align}
    where $\wv(\ts) \in \Workspace \subset \R$ and $\paramv = [\ks_1, \cdots, \ks_{\nparam}] \in \K \subset \R^{\nparam}$.
    If 
    \begin{align}\label{eq:slicedTraj}
        \slicedInt{\Obs_\is, A, \paramv\lin} =& \slice[2:(\nparam+1)]{\Obs_\is, \paramv\lin}\cup\notag\\
        &\slice[2:(\nparam+1)]{A, \paramv\lin}
    \end{align}
    is connected for all $\paramv\lin \in \{\ks_{1, \mini}, \ks_{1, \maxi}\}\times\cdots\times\{\ks_{\nparam, \mini}, \ks_{\nparam, \maxi}\}$, where
    \begin{align}
        \ks_{\is, \mini} &= \min_\paramv \{ \paramv_{\is}\ |\ \paramv\in\K\},\\
        \ks_{\is, \maxi} &= \max_\paramv \{ \paramv_{\is}\ |\ \paramv\in\K\},
    \end{align}
    for $\is = 1, \cdots, \nparam$, where $\paramv_{\is}$ is the $\is^\regtext{th}$ element of $\paramv$, then
    \begin{align}
        \ol{\Ravd}_{\is, \ts, \ts} = (\Obs_\is \cup A)\cap\ol{\Rbrs}_\ts,
    \end{align}
    can replace \eqref{eq:int_avoid_set} to fulfill the condition in \eqref{eq:int_avoid_con}.
\end{thm}
\begin{proof}
    We follow the proof strategy outlined in Appendix \ref{app:extended_proof}.
    Since all states in $\xv(\ts)$ are ETI, $\wv(\ts)$ is 1-D, and all $\paramv\in\K$ are constant, Theorem \ref{thm:avoid_set_nohull} is proven if $\yv \in A$ and $\hat\xv = \hat\xv\eti \in \Obs_\is$ implies $\tilde\xv = \tilde\xv\eti \in \Obs_\is \cup A$ by \eqref{eq:avoid_convhull}.
    This is true if $\slicedInt{\Obs_\is, A, \paramv}$ are connected for all $\paramv\in\K$.
    But if \eqref{eq:slicedTraj} is connected for all $\paramv\lin$, by convexity of $\Obs_\is$ and $A$, $\slicedInt{\Obs_\is, A, \paramv}$ must be connected for all $\paramv\in\K$, since $[\ks_{1, \mini}, \cdots, \ks_{\nparam, \mini}]\tp \leq \paramv \leq [\ks_{1, \maxi}, \cdots, \ks_{\nparam, \maxi}]\tp\ \forall \paramv\in\K$.
\end{proof}

In this paper, the PWA planning models in Section \ref{exp:exact_plan_reach_impact}, Section \ref{exp:time-varying}, and Appendix \ref{app:additional_exp} fulfill this condition at all timesteps.

\subsection{Experiment Details}\label{app:experiment}

\subsubsection{Near-Hover Quadrotor in 3-D}
For the planning model, we employ the 3D single integrator, described as:
\begin{align}
    \begin{split}
        \dot\px &= \hat{\vx} \\
        \dot\py &= \hat{\vy} \\
        \dot\pz &= \hat{\vz} \\
    \end{split}
\end{align}
where the planning states $\planv=[\px, \py, \pz]\tp$ represents the position of the robot, and trajectory parameter $\paramv=[\hat{\vx}, \hat{\vy}, \hat{\vz}]\tp$ denotes the desired velocity of the robot. The trajectory parameter domain is chosen to be $\K = [-0.5,0.5]^3$.

For the tracking model, we adopt a simplified 10-D quadrotor assuming near-hover conditions (i.e. small pitch and roll) \cite{chen2021fastrack}, expressed by:
\begin{align}
    \begin{split}
        \dot\px &= \vx \\
        \dot\py &= \vy \\
        \dot\pz &= \vz \\
        \dot\vx &= g \tan(\pthx) \\
        \dot\vy &= g \tan(\pthy) \\
        \dot\vz &= \nhkt \nhaz - g \\
        \dot\pthx &= - \nhdd \pthx + \vthx\\
        \dot\pthy &= - \nhdd \pthy + \vthy\\
        \dot\vthx &= -\nhd \pthx + \nhn \nhax \\
        \dot\vthy &= -\nhd \pthy + \nhn \nhay
    \end{split}
\end{align}
where the states $(\px, \py, \pz)$ denote the position, $(\vx, \vy, \vz)$ are velocities, $(\pthx, \pthy)$ are pitch and roll, and $(\vthx, \vthy)$ are pitch and roll rates. 
The control input $\uv = (\nhax, \nhay, \nhaz)$ is the desired pitch and roll $(\nhax, \nhay)$ and the vertical thrust $\nhaz$.
The model parameters $(\nhd, \nhdd, \nhn, \nhkt)$ and control input bound $\U$ are referenced from FaSTrack \cite{chen2021fastrack}.
A simple LQR linearized around the tracking error $\es=0$ is employed for the feedback controller.

\subsubsection{General Quadrotor in 3-D}\label{subsec:3dquad_model}

We consider the time-switched polynomial planning model defined in \cite{kousik2019safe, mueller2015computationally}, where, for $0\leq\ts<\tpk$,
\begin{subequations}\label{eq:polyquad_accel}
\begin{align}
    &\dot\px = \frac{\cf_1(\kvx, \kax, \kpkx)}{6}\ts^3 + \frac{\cf_2(\kvx, \kax, \kpkx)}{2}\ts^2 + \kax\ts + \kvx,\label{eq:accel_x}\\
    &\dot\py = \frac{\cf_1(\kvy, \kay, \kpky)}{6}\ts^3 + \frac{\cf_2(\kvy, \kay, \kpky)}{2}\ts^2 + \kay\ts + \kvy,\label{eq:accel_y}\\
    &\dot\pz = \frac{\cf_1(\kvz, \kaz, \kpkz)}{6}\ts^3 + \frac{\cf_2(\kvz, \kaz, \kpkz)}{2}\ts^2 + \kaz\ts + \kvz,\label{eq:accel_z}\\
    &\cf_1(\kv, \ka, \kpk) = \frac{12}{\tpk^3}\kv + \frac{6}{\tpk^2}\ka - \frac{12}{\tpk^3}\kpk,\\
    &\cf_2(\kv, \ka, \kpk) = - \frac{6}{\tpk^2}\kv - \frac{4}{\tpk}\ka + \frac{6}{\tpk^2}\kpk,
\end{align}
\end{subequations}
and for $\tpk\leq\ts\leq\tf$,
\begin{subequations}\label{eq:polyquad_decel}
\begin{align}
    \dot\px &= \frac{\cf_3(\kpkx)}{6}(\ts-\tpk)^3 + \frac{\cf_4(\kpkx)}{2}(\ts-\tpk)^2 + \kpkx,\label{eq:decel_x}\\
    \dot\py &= \frac{\cf_3(\kpky)}{6}(\ts-\tpk)^3 + \frac{\cf_4(\kpky)}{2}(\ts-\tpk)^2 + \kpky,\label{eq:decel_y}\\
    \dot\pz &= \frac{\cf_3(\kpkz)}{6}(\ts-\tpk)^3 + \frac{\cf_4(\kpkz)}{2}(\ts-\tpk)^2 + \kpkz,\label{eq:decel_z}\\
    \cf_3(\kpk) &= \frac{12}{(\tf - \tpk)^3}\kpk,\\
    \cf_4(\kpk) &= \frac{-6}{(\tf - \tpk)^2}\kpk,
\end{align}
\end{subequations}
where $\tf = 3$, $\tpk = 1$, the planning states and workspace $\planv = \wv = [\px, \py, \pz]\tp$ that represents the position of the quadrotor, and trajectory parameters $\paramv = [\kvx, \kax, \kpkx, \kvy, \kay, \kpky, \kvz, \kaz, \kpkz]\tp$, where $\kvx, \kvy, \kvz$ represents the desired initial speed, $\kax, \kay, \kaz$ represents the desired initial acceleration, and $\kpkx, \kpky, \kpkz$ represents the desired speed at $\ts = \tpk$.

This planning model can be separated into three low-dimensional models for each of the three workspace dimensions.
That is, we can consider the planning states and workspace $\planv_x = \wv_x = \px$ and trajectory parameters $\paramv_x = [\kvx, \kax, \kpkx]$, defined by the dynamics \eqref{eq:accel_x} and \eqref{eq:decel_x}, compute the reach set $\Rbrs_{0, x}\subset\R^4$ and avoid set ${\Ravd}_{\is, \ts, 0, x} \subset \R^4$ for each $\ts = 0, \Delta\ts, \cdots, \tf-\Delta\ts$ and each $\is = 1,\cdots,\nobs$ using PARC, and repeat for $\py$ and $\pz$.

We recover the 12-D reach set $\Rbrs_0$ and avoid set $\Ravd$ by:
\begin{align}
    \Rbrs_0 &= \Rbrs_{0, x} \times \Rbrs_{0, y} \times \Rbrs_{0, z},\label{eq:decouple_reach}\\
    \Ravd &= \union_{\is=1}^{\nobs} \union_{\ts=0}^{\tf-\Delta\ts} {\Ravd}_{\is, \ts, 0, x}\times{\Ravd}_{\is, \ts, 0, y}\times{\Ravd}_{\is, \ts, 0, z}.\label{eq:decouple_avoid}
\end{align}

Further, we note that for each of the low-dimensional model, the intermediate avoid sets without tracking error $\ol{\Ravd}_{\is, \ts, \ts, x}$, $\ol{\Ravd}_{\is, \ts, \ts, y}$, and $\ol{\Ravd}_{\is, \ts, \ts, z}$ and hence the avoid set $\Ravd$ can be computed without using convex hull or projection according to Theorem \ref{thm:avoid_set_nohull}.

The tracking model for the system is from \cite{kousik2019safe, lee2010control}, defined as:
\begin{subequations}
\begin{align}
    \begin{bmatrix}
        \dot\px\\
        \dot\py\\
        \dot\pz
    \end{bmatrix} &= \vv,\\
    \dot\vv &= \thrust \Rm \eunit - \mass\grav\eunit,\\
    \dot\angv &= \moi^{-1}\left(\begin{bmatrix}
        \mu_x\\
        \mu_y\\
        \mu_z
    \end{bmatrix}-\angv\times\moi\angv\right),\\
    \dot\Rm &= \Rm\ \hatmap(\angv),
\end{align}
\end{subequations}
where $\vv\in\R^3$ is velocity, $\Rm\in\regtext{SO}(3)$ is the attitude, $\eunit\in\R^3$ is the unit vector in the inertial frame that points ``up'' relative to the ground, $\mass = 0.547$, $\grav = 9.81$, $\angv\in\R^3$ is the angular velocity, $\moi = \regtext{diag}(0.0033, 0.0033, 0.0058)$, $\hatmap:\R^3 \to \regtext{SO}(3)$ is the hat map operator \cite{lee2010control}, and the control inputs $\uv = [\thrust, \mu_x, \mu_y, \mu_z]\tp$ are net thrust and body moments.

\subsubsection{Comparison Methods}
\paragraph{FaSTrack}
FaSTrack is a planner-tracker framework that precomputes a uniform TEB (maximum possible tracking error) using HJ reachability analysis. 
Obstacles are padded by the TEB; then, as long as the planner only plans outside the padded obstacles the robot is safe.
During the TEB computation, FaSTrack also synthesizes a controller to guarantee the bound.
We use RRT as the planning model as per \cite{chen2021fastrack}.

\paragraph{Neural CLBF}
Neural CLBF approximates a safety filter value function as a neural network, thereby segmenting the environment into goal, safe, and unsafe regions.
The value function, trained with data from the entire state space, enables the generation of safe, goal-directed control inputs through Quadratic Programming (QP).

\paragraph{Reachability-based Trajectory Design}
RTD is a planner-tracker framework that is very similar to PARC in that it uses a planner-tracker framework and compensates for tracking errors in a time-varying way.
However, RTD leverages forward reachable sets (FRS).
RTD numerically approximates the FRS either with sums-of-squares \cite{kousik2020bridging} or zonotope \cite{kousik2019safe,althoff2015introduction} reachability.
At runtime, RTD performs receding-horizon trajectory planning with nonlinear, nonconvex constraints derived from the FRS intersecting with obstacles to eliminate unsafe motion plans.
Each parameterized plan contains a failsafe maneuver such that, if replanning fails, the previous (safe) maneuver can be executed.

\paragraph{KDF}
KDF combines sampling-based motion planning with funnel-based feedback control \cite{verginis2022kdf}.
This method keeps the tracking error within funnels defined by a performance function which decreases exponentially over time \cite{bechlioulis2014low}.
These funnels help higher-level motion planners by incorporating maximum funnel values to expand the free space for planning.
The implementation details are described in Appendix \ref{subsec:kdf_param}.

\subsection{Additional Experiment Details} \label{app:exp_add_details}


\subsubsection{Impact of Planner-Tracker Cooperation}

\paragraph{Environment}
The environment is a $[0, 10] \times [-10, 10] \times [0, 10]$ cuboid (all lengths are in meters).
The goal $\ol\Xgoal$ is a cube centered at $(\px, \py, \pz)=(9, 0, 5)$ with a side of $1$ \unit{m}. 
Obstacles $\ol\Obs$ include the floor, ceiling, and walls, and two additional cuboid obstacles of $[6.5, 7.5] \times [-8.2, -\Delta w/2] \times [0, 10]$ and $[6.5, 7.5] \times [\Delta w/2, 8.2] \times [0, 10]$ with varying $\Delta w$ that denotes the gap between two obstacles.
These are shown in Fig. \ref{fig:nearhover-single-comparison} with a gap width $\Delta w = 1.8$.
The obstacles are padded to accommodate the quadrotor's body, approximated by a cuboid of $0.54 \times 0.54 \times 0.05$ using the specifications of Hummingbird \cite{dong2015development}.
The planning horizon is $\tf=10$ \unit{s} with time discretization $\Delta\ts=0.1$ \unit{s} each. 
The frequency of the tracking controller is $1,000$ \unit{Hz} for all methods.
\subsubsection{Impact of Tighter Planning Reachability}

\paragraph{Environment} \label{subsec:3dquad_env}
The general 3-D quadrotor environment is similar to that of the 10-D near-hover quadrotor environment but with a narrower gap to the goal.
The obstacles are defined as $\ol{\Obs}_1 = [3.23, 6.77]\times[0.23, 9.27]\times[0.73, 9.27]$, $\ol{\Obs}_2 = [3.23, 6.77]\times[-9.27, -0.23]\times[0.73, 9.27]$ (after accounting for an overapproximated volume of the quadrotor), that is, two walls with a clearance of 0.46, the goal region is defined as $\ol{\Xgoal} = [7.44, 9.56]\times[-1.06, 1.06]\times[3.94, 6.06]$, and the domains are defined as $\Xplan = [0, 10]\times[-10, 10]\times[0, 10]$ and $\K = [-5.25, 5.25]\times[-10, 10]\times[-5.25, 5.25]\times[-5.25, 5.25]\times[-10, 10]\times[-5.25, 5.25]\times[-5.25, 5.25]\times[-10, 10]\times[-5.25, 5.25]$.
Initial positions for simulations are sampled from a $(15, 15, 3)$ grid spanning $[0.1, 4.8] \times [-9.9, 9.9] \times [3, 7]$, all with zero initial velocity. 

\subsubsection{Impact of Proper Use of Time-varying Tracking Error}

\paragraph{Environment}\label{subsec:kdf_env}
In the 3-D quadrotor environment, similar to the one described in Section \ref{exp:exact_plan_reach_impact}, we evaluate two scenarios differentiated by obstacle gap widths. 
The obstacles are specified for both scenarios as $\overline{\Obs}_1 = [5.75, 7.25]\times[\triangle w/2, 10]\times[1, 9]$ and $\overline{\Obs}_2 = [5.75, 7.25]\times[-10, -\triangle w/2]\times[1, 9]$, using $\triangle w = 0.85$ for the narrow gap and $\triangle w = 3.0$ for the wide gap. 
The goal region is set as $\overline{\Xgoal} = [7, 9]\times[-1, 1]\times[4, 6]$, with planning domains defined by $\Xplan = [0, 10]\times[-10, 10]\times[0, 10]$. 
The trajectory parameter space, $\K = [-1.25, 1.25]\times[-1.5, 1.5]\times[-2, 2]\times[-1.25, 1.25]\times[-1.5, 1.5]\times[-2, 2]\times[-1.25, 1.25]\times[-1.5, 1.5]\times[-2, 2]$, was selected to simplify the tuning of a funnel-based feedback controller. 
Initial positions for simulations are sampled from a $(25, 25, 3)$ grid spanning $[0.1, 4.8] \times [-9.9, 9.9] \times [3, 7]$. 
Due to uniform obstacle distribution in the $\pz$ direction, sampling is denser in the $\px, \py$ plane, since we expect varying initial states in $\pz$ direction would not be a significant influence.
Each simulation starts with the quadrotor stationary, following reference trajectories determined by each methodology.

\paragraph{Implementation of Comparison Methods}\label{subsec:kdf_param}
In this section, we detail the implementation of KDF \cite{verginis2022kdf}, and the tracking controller we used to compare PARC and KDF.
The originally proposed controller from KDF \cite{verginis2022kdf} was unsuitable for under-actuated quadrotors; thus we adopted the Prescribed Performance Controller (PPC) from \cite{lapandic2022robust}.
This PPC ensures that position and yaw angle tracking errors stay within prescribed performance funnels, allowing it to be compatible with the KDF framework where it uses maximal tracking error in motion planning level to define \textit{extended free space}.

For PARC, we successfully tuned 48 hyper-parameters and defined 12 funnel functions for each quadrotor state, with corresponding control gains as follows:
\begin{subequations}
\begin{align}
\rho_{p_i}(t) &= 0.1+(0.5-0.1)e^{-0.5t},\quad i=\{x, y, z\},\\
    \rho_{\vx}(t) &= 0.5+(5-0.5)e^{-0.2t},\\
    \rho_{\vy}(t) &= 0.5+(5-0.5)e^{-0.2t},\\
    \rho_{\vz}(t) &= 0.2+(5-0.2)e^{-1.5t},\\
    \rho_{\phi\theta_1}(t) &= 0.1+(0.5-0.1)e^{-0.5t},\\
    \rho_{\phi\theta_2}(t) &= 0.1+(0.5-0.1)e^{-0.5t},\\
    \rho_{\psi}(t) &= 0.1+(0.4-0.1)e^{-0.05t},\\
\rho_{w_i}(t)  &= 0.3+(0.5-0.3)e^{-0.5t},\quad i=\{x, y, z\},\\
K_p &= \diag{1.25, 1.25, 12.5},\\
K_v&= \diag{20, 20, 5},\\
    K_{\phi\theta} &= \diag{3, 1.5},\\
    K_{\psi} &= 1,\\
K_w &= 10\cdot\eye_3.
\end{align}
\end{subequations}
We refer to \cite{lapandic2022robust} for details of the role of each hyper-parameter.
This hyper-parameter effectively tracked 2,400 reference trajectories from PARC in the wide gap scenario and all computed reach-avoid plans in the narrow gap scenario, as detailed in Table \ref{table:experiments} and Fig. \ref{fig:parc_narrow_realized}.

However, despite extensive efforts, we could not find a universal hyper-parameter that effectively tracked KDF's trajectories, which may be due to RRT-generated adversarial trajectories being difficult for the controller to follow. 
To address this, we used smoothing splines to fit the RRT plans to adhere to the twice-differentiability requirement of funnel controllers but still encountered challenges with universal hyper-parameter settings. 
Therefore, assuming the existence of an effective funnel controller for KDF, we evaluated its performance based on planned trajectories, and padding obstacles by $\rho_{p}(0)$ as per the MATLAB Navigation Toolbox's basic RRT implementation.
We also present a selection of successfully tracked trajectories by KDF in Fig. \ref{fig:parc_kdf_compare} to validate our implementation of KDF.

\begin{figure}
\centering
\includegraphics[width=\columnwidth]{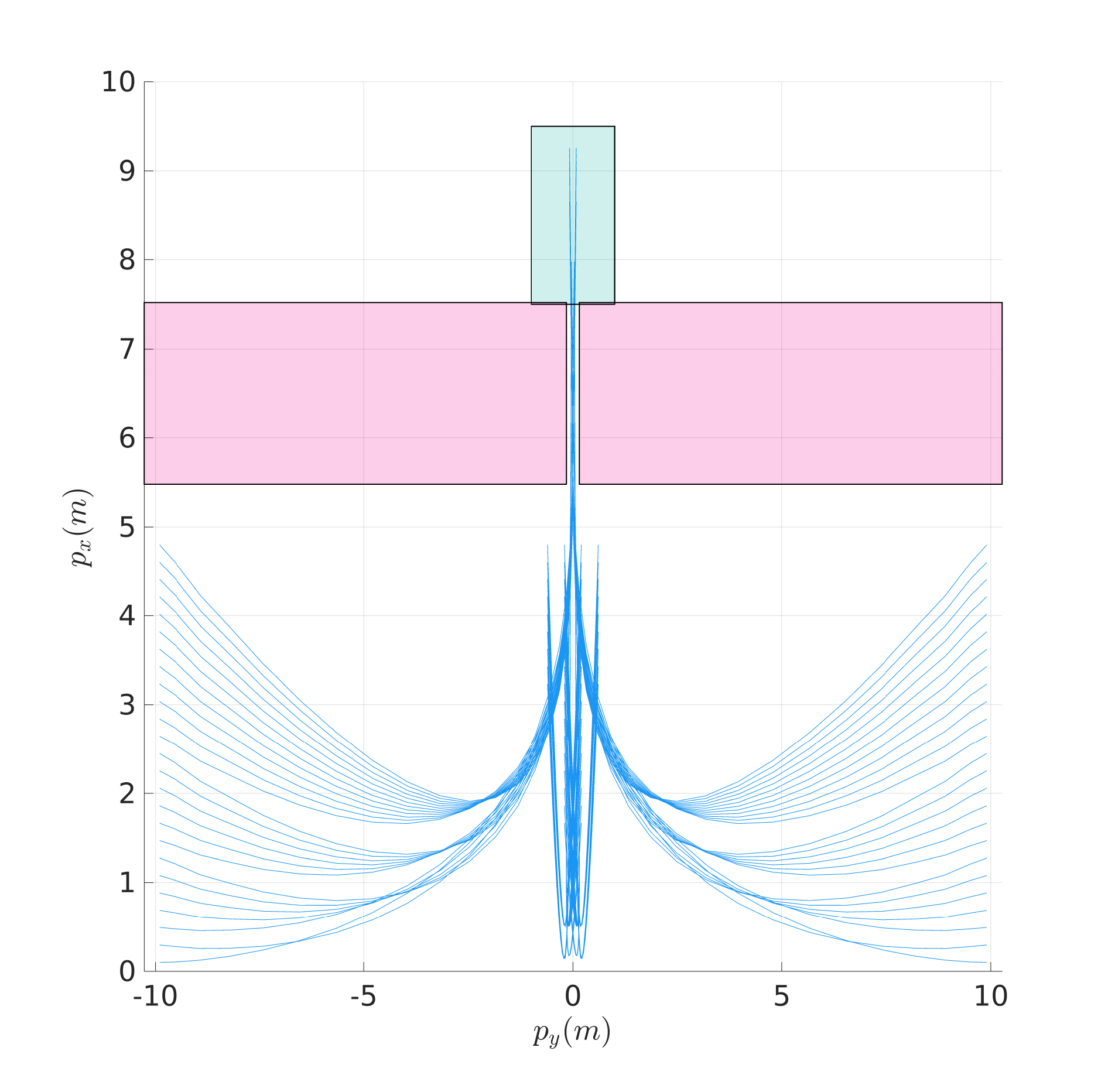}
\vspace{-2em}
\caption{The plot of realized trajectory in narrow-gap scenario, clearance of $0.31 \unit{m}$ for PARC.
The 306 reach-avoid plans were computed out of 3,750 initial states.
}
\vspace*{-0.5cm}
\label{fig:parc_narrow_realized}
\end{figure}
\subsection{Additional Experiments}\label{app:additional_exp}

We provide a further comparison to the experiment in \ref{exp:planning_model_design} by using a different planning model in the same environmental setting.
The experiment seeks to show how PARC could improve using a better planning model under the same experimental conditions.

\subsubsection{Planning and Tracking Model}\label{subsec:2dquad_poly_model}

We consider the time-switched polynomial model defined by \eqref{eq:accel_x}, \eqref{eq:accel_z}, \eqref{eq:decel_x}, and \eqref{eq:decel_z}, with $\tf = 2$, $\tpk = 1.5$, the planning states and workspace $\planv = \wv = [\px, \pz]\tp$, and trajectory parameters $\paramv = [\kvx, \kax, \kpkx, \kvz, \kaz, \kpkz]\tp$.
Just like in Section \ref{subsec:3dquad_model}, we separated the planning model into low-dimensional models for each of the two workspace dimensions, and recovered the high-dimensional reach set, avoid set, and BRAS by \eqref{eq:decouple_reach} and \eqref{eq:decouple_avoid}.

We used the same tracking model defined in \eqref{eq:2dhover_dyn} for a fair comparison.

\subsubsection{Experimental Setup}
Our experimental setup for the obstacles, goal region, and workspace domain are identical to that in Section \ref{exp:planning_model_design} and \cite{dawson2022safe}.
We also use the same iLQR controlling scheme as in Section \ref{exp:planning_model_design} but with the gains tuned to the new planning model.
We define the domain for trajectory parameters with $\K = [-1.5, 1.5]\times\{0\}\times[-0.1, 0.1]\times\{0\}\times[-1, 1]\times[-1.5, 1.5]$.

\subsubsection{Hypothesis}
We anticipated PARC to be able to compute a less conservative, but still guaranteed safe BRAS compared to Section \ref{exp:planning_model_design} due to the reduced number (three vs. zero) of non-ETI states in the planning model.
Moreover, we also expected PARC's computation time to be faster than the na\"ive model choice, as the dimension for polytopic computation has been reduced from 8 to 4.

Since the planning model in Appendix \ref{subsec:2dquad_poly_model} also incorporates stabilizing behavior for the quadrotor near the goal region \cite{kousik2019safe, mueller2015computationally}, we expected to produce similar reach-avoid results to \cite{dawson2022safe}, that is, the quadrotor would be able to safely navigate between the two obstacles while stopping at the goal region.

\subsubsection{Results}
Fig. \ref{fig:quad2dpoly_safe_set} shows the BRAS constructed by PARC under the planning model in Appendix \ref{subsec:2dquad_poly_model}.
The BRAS computation took $1.47  \unit{s}$, more than 3 times faster than that of the na\"ive model.

Trajectories were generated by uniformly sampling the BRAS.
No crashes were reported, with the agent successfully \textit{stopping} at the goal region in all simulations.
Comparing Fig. \ref{fig:quad2dpoly_safe_set} with Fig. \ref{fig:quad2d_yeserror}, safe trajectories that travel in-between two obstacles can be generated, reproducing similar results to \cite{dawson2022safe}.

\subsubsection{Discussion}
The time-switched polynomial planning model has comparable tracking error with the na\"ive planing model in Section \ref{exp:planning_model_design}.
Thus, the reduced conservativness in the computed BRAS for the polynomial model is largely due to the reduced number of non-ETI states in the PWA system, leading to a tighter approximation of the avoid sets.
Note that the BRAS in Fig. \ref{fig:quad2d_yeserror} extends more to the right compared with Fig. \ref{fig:quad2dpoly_safe_set} because the na\"ive planning model does not encode any stabilizing behavior, with the trajectory leading the agent to continuously accelerating towards the goal.
Thus, within the same $\tf$, it was able to reach the goal from a further distance compared with the stabilizing trajectories.

We further note that, while the chosen domain for the time-switched polynomial planning model was not able to generate very parabolic trajectories that enable the agent to reach the goal from behind the left obstacle, this can be easily fixed by receding-horizon planning into the BRAS, or by performing PARC across multiple domains of trajectory parameters, both of which remains as future work.

In conclusion, these experiments show the importance of choosing a good planning model for PARC with minimal dimensions and non-ETI states without losing representation power in the trajectories.
Should good planning models be unavailable, Section \ref{sec:drift_demo} shows how one can construct planning models from data while using PARC to compute their corresponding BRAS.

\begin{figure}
\centering
\includegraphics[width=0.8\columnwidth]{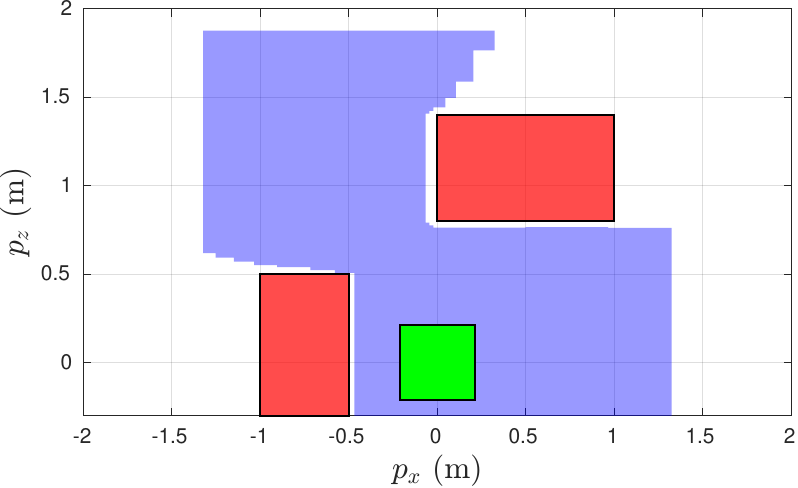}
\caption{Results of PARC on a time-switched polynomial planning model with stabilizing end-behavior, after accounting for tracking error. Green indicates the goal region, red indicates the obstacles with the volume of the drone accounted for, and blue shows the computed BRAS.
Safe trajectories that pass between the two obstacles can now be generated compared to the results in Section \ref{exp:planning_model_design}.
}
\label{fig:quad2dpoly_safe_set}
\vspace*{-0.5cm}
\end{figure}
\subsection{Drift Vehicle Demo Details}\label{app:drift_details}

We now define our tracking model, drift parking controller, and the time-variant affine planning model. 
We then present results and a brief discussion.

\subsubsection{Tracking Model}
We use a tracking model based on \cite{jelavic2017autonomous,goh2020toward,weber2023modeling,goh2016simultaneous}:   
\begin{equation}
\trackv = 
[
p_x, p_y, \theta, \vth, \vel, \sideslip
] \tp
\end{equation}
with its dynamics defined as:
\begin{align}\label{eq:drift tracking dyn}
\begin{split}
    \dot{p}_x =& \vel\cos(\theta+\sideslip)\\
    \dot{p}_y =& \vel\sin(\theta+\sideslip)\\
    \dot\theta =& \vth \\
    \dot\vth =& \frac{1}{\Iz}(\La \fyf \cos(\dl) + \La \fxf \sin(\dl)- \Lb \fyr) \\
    \dot\vel =& \frac{1}{\mc}(\fxf \cos(\dl - \sideslip) - \fyf \sin(\dl-\sideslip) \\
        &+ \fxr \cos(\sideslip) + \fyr \sin(\sideslip))  \\
    \dot\sideslip &= \frac{1}{\mc \vel} (\fxf \sin(\dl - \sideslip) - \fyf \sin(\dl - \sideslip) \\
        &+ \fxr \cos(\sideslip) + \fyr \sin(\sideslip)) - \vth 
\end{split}
\end{align}

In this single-track bicycle dynamics model, the first three states $p_x (\unit{m}), p_y (\unit{m}), \theta(\unit{rad})$ describe the location of the car's center of mass and its orientation or yaw angle with respect to the stationary world frame. 
$\vth (\unit{rad/s})$ is the yaw rate of the car; $\vel (\unit{m/s})$ is the magnitude of the car's velocity measured from its center of mass. 
$\sideslip (\unit{rad})$ is the sideslip angle between the car's traveling direction and its orientation.

The tracking model's control inputs are $\uv = [\fxr,\dl, \fxf_\regtext{max}]\tp$ which correspond to throttle (\unit{N}), steering (\unit{rad}), and brake (\unit{N}).
Note that $\fxr$ only provides forces in the longitudinal direction of the rear tire. 
The brake only provides forces along the direction of the front tire.
While its maximum value $\fxf_\regtext{max}$ is set by input, its actual magnitude $\fxf$ is inversely proportional to the front tire frame longitudinal velocity. 
Such design ensures that the braking force goes to zero when the car stops moving.

The lateral forces on both the front and the rear tires,
\begin{align}\label{eq:fiala}
    \fyf &= f_{\text{front}}(\vth, \vel, \sideslip, \fxf, \dl),
    \quad\regtext{and} \\
    \fyr &= f_{\text{rear}}(\vth, \vel, \sideslip, \fxr),
\end{align}
are formulated using the Fiala brush tire model \cite{fiala1954lateral}.
Lastly, the car's physical parameters ($\Iz, \mc, \La, \Lb$) as well as other constants used in the Fiala tire model are adopted from the DeLorean car specification from \cite{goh2016simultaneous}.

\subsubsection{Feedback Controller}\label{sec:drifting_controller}
Literature in drift parking controllers falls into two categories.
\cite{lau2011learning, leng2023deep} approached this problem with learning-based controllers.
\cite{kolter2010probabilistic, jelavic2017autonomous} designed mixed open and closed-loop control strategies with the former using the probabilistic method and the latter deterministic. 

In this demo, we adopt the control scheme described in \cite{jelavic2017autonomous}.
We first used a nonlinear MPC tracking controller to bring the car into a drift state then switched to an open-loop control scheme to complete the drift parking action. 

Specifically, given some initial state $\trackv_0$ and $\paramv = [\vel, \sideslip]\tp$, the controller provides input sequence $\uv\feedback=[\fxr, \dl, \fxf_\regtext{max}]\tp$ for the tracking model such that the vehicle would be at the appropriate sideslip angle to enter drifting regime at $\vel$.
Next, as an open-loop PD controller continuously adjusts $\dl$ to maintain an increasing sideslip angle $\sideslip$, a one-time control input of $\fxr =0 \unit{N}, \fxf_\regtext{max} = 5,163 \unit{N}$  (i.e., hard braking) is executed at the moment when $\sideslip$ exceeds the planned $\beta$ value (see $\paramv$ in the planning model below). 

\subsubsection{Planning Model}\label{subsubsec: drift_planning}
Different from the experiment examples, the mixed open and closed loop drifting controller cannot track explicit trajectories when the car enters the drifting regime. 
Therefore, the construction of an analytical model that describes the entire drift parking trajectory is difficult or even impossible just like many robotics applications \cite{nguyen2011model}.

In this demo, we showcase the flexibility of PARC by constructing the planning model using data.
We are directly constructing the planning model as a time-variant affine system with 3-D planning space and 2-D trajectory parameters, so PARC can be directly applied without Section \ref{subsec:PWA_convert}.
Specifically, we wish to design the affine dynamics in the form of:
\begin{align}\label{eq:braking_pwa}
    \begin{bmatrix}
        p_{x,\ts+\Delta\ts}\\
        p_{y,\ts+\Delta\ts}\\
        \theta_{\ts+\Delta\ts}\\
        {\vel}_{\ts+\Delta\ts}\\
        {\sideslip}_{\ts+\Delta\ts}
    \end{bmatrix} = \begin{bmatrix}
        1&0&0&\cparam_{\vel x, \ts}&\cparam_{\sideslip x, \ts}\\
        0&1&0&\cparam_{\vel y, \ts}&\cparam_{\sideslip y, \ts}\\
        0&0&1&\cparam_{\vel \theta, \ts}&\cparam_{\sideslip \theta, \ts}\\
        0&0&0&1&0\\
        0&0&0&0&1
    \end{bmatrix}\begin{bmatrix}
        p_{x,\ts}\\
        p_{y,\ts}\\
        \theta_{\ts}\\
        {\vel}_{\ts}\\
        {\sideslip}_{\ts}
    \end{bmatrix} + \begin{bmatrix}
        \dparam_{x, \ts}\\
        \dparam_{y, \ts}\\
        \dparam_{\theta, \ts}\\
        0\\
        0
    \end{bmatrix}
\end{align}
where the planning states $\planv = [p_x, p_y, \theta]\tp$ are the world-frame coordinates and heading angle of the car, $\wv = [p_x, p_y]\tp$ are the workspace states, trajectory parameters $\paramv = [\vel, \sideslip]\tp$ are the drifting velocity and the side-slip angle for controller switching, the domain for all timesteps of the PWA function is $\Xplan\times\K$, and we wish to design $\cparam_{\vel x, \ts}, \cparam_{\sideslip x, \ts}, \dparam_{x, \ts}, \cparam_{\vel y, \ts}, \cparam_{\sideslip y, \ts}, \dparam_{y, \ts}, \cparam_{\vel \theta, \ts}, \cparam_{\sideslip \theta, \ts},$ and $\dparam_{\theta, \ts}$ for all $\ts = 0, \Delta\ts, \cdots, \tf - \Delta\ts$ to fit \eqref{eq:braking_pwa} to the collected data.

For this example, as we mentioned in Remark \ref{rem:goal set expansion}, we are extending the dimensions of the goal region to all of $\planv$ instead of just $\wv$, so $\ol{\Obs} \subset \Xplan$ and $\Obs_\is = \ol{\Obs}_\is\times \K$ for all $\is = 1, \cdots, \nobs.$
This is such that we can enforce the $\theta$ of the car at $\tf$ so the parking is ``parallel''.
The extension of the goal dimension does not change the other parts of PARC's formulation.

Now we design the parameters in \eqref{eq:braking_pwa}.
Unlike our other system models, due to the lack of controllability during drifting, the behavior of the realized drift parking trajectories are ``open-loop'' in that they are uniquely and entirely determined given $\trackv_0$ and $\paramv$ without needing to provide the planning model, as illustrated in Section \ref{sec:drifting_controller}.
As such, we fit a planning model to the data collected in open-loop simulations and use PARC to compute the BRAS of the initial states.

We first collect data by sampling from the pairs $(\trackv_{0, \is}, \paramv_\is) \in \Xtrack_{\data}\times\K_{\data}\subset \Xtrack\times\K$ for $\is = 1, \cdots, \ndim_\data$.
For the $\is^{\regtext{th}}$ data, we denote the realized trajectories $\trackv_\is(\ts; \trackv_{0, \is}, \paramv_\is) = [{p_x}_\is(\ts), {p_y}_\is(\ts), {\theta}_\is(\ts), \cdots]\tp$ and the sampled trajectory parameters $\paramv_\is = [{\vel}_\is, {\sideslip}_\is]\tp$.
Then, we can fit the parameters in \eqref{eq:braking_pwa} by solving the least squares problems:
\begin{subequations} \label{eq:ltsq_drift}
\begin{align}
    \min_{\cparam_{\vel x, \ts}, \cparam_{\sideslip x, \ts}, \dparam_{x, \ts}}&\Bigg\|\begin{bmatrix}
        {\vel}_1 & {\sideslip}_1 & 1\\
        \vdots&\vdots&\vdots\\
        {\vel}_{\ndim_\data} & {\sideslip}_{\ndim_\data} & 1
    \end{bmatrix}\begin{bmatrix}
        \cparam_{\vel x, \ts}\\ \cparam_{\sideslip x, \ts}\\ \dparam_{x, \ts}
    \end{bmatrix}\notag\\
    &-\begin{bmatrix}
        {p_x}_1(\ts+\Delta\ts) - {p_x}_1(\ts)\\
        \vdots\\
        {p_x}_{\ndim_\data}(\ts+\Delta\ts) - {p_x}_{\ndim_\data}(\ts)
    \end{bmatrix}\Bigg\|^2_2,\\
    \min_{\cparam_{\vel y, \ts}, \cparam_{\sideslip y, \ts}, \dparam_{y, \ts}}&\Bigg\|\begin{bmatrix}
        {\vel}_1 & {\sideslip}_1 & 1\\
        \vdots&\vdots&\vdots\\
        {\vel}_{\ndim_\data} & {\sideslip}_{\ndim_\data} & 1
    \end{bmatrix}\begin{bmatrix}
        \cparam_{\vel y, \ts}\\ \cparam_{\sideslip y, \ts}\\ \dparam_{y, \ts}
    \end{bmatrix}\notag\\
    &-\begin{bmatrix}
        {p_y}_1(\ts+\Delta\ts) - {p_y}_1(\ts)\\
        \vdots\\
        {p_y}_{\ndim_\data}(\ts+\Delta\ts) - {p_y}_{\ndim_\data}(\ts)
    \end{bmatrix}\Bigg\|^2_2,\\
    \min_{\cparam_{\vel \theta, \ts}, \cparam_{\sideslip \theta, \ts}, \dparam_{\theta, \ts}}&\Bigg\|\begin{bmatrix}
        {\vel}_1 & {\sideslip}_1 & 1\\
        \vdots&\vdots&\vdots\\
        {\vel}_{\ndim_\data} & {\sideslip}_{\ndim_\data} & 1
    \end{bmatrix}\begin{bmatrix}
        \cparam_{\vel \theta, \ts}\\ \cparam_{\sideslip \theta, \ts}\\ \dparam_{\theta, \ts}
    \end{bmatrix}\notag\\
    &-\begin{bmatrix}
        {\theta}_1(\ts+\Delta\ts) - {\theta}_1(\ts)\\
        \vdots\\
        {\theta}_{\ndim_\data}(\ts+\Delta\ts) - {\theta}_{\ndim_\data}(\ts)
    \end{bmatrix}\Bigg\|^2_2,
\end{align}
\end{subequations}
repeated for each $\ts = 0, \Delta\ts, \cdots, \tf - \Delta\ts$.
\eqref{eq:ltsq_drift} can be solved by any least squares solver.
Once \eqref{eq:ltsq_drift} is computed, \eqref{eq:braking_pwa} is now a fully defined time-variant affine function that can be used in the PARC framework.

Note that, since \eqref{eq:braking_pwa} is generated only from data with initial conditions in $\Xtrack_{\data}\times\K_{\data}$, the BRAS is also only valid from within that region.
Specifically, when sampling for tracking error \`a la Section \ref{subsec:tracking_error}, one should have $\itv{\Xtrack}\times\itv{\K} \subset \Xtrack_{\data}\times\K_{\data}$.

For the planning model, we set the timestep $\Delta\ts = 0.1$ \unit{s} and final time $\tf = 7.8$ \unit{s}.

\subsubsection{Environment Details}
In our environment, we define the tracking model state and input bounds to be:
\begin{equation}
    \trackv\bound =
    \begin{bmatrix}
        -5, & 40 \\
        -20, & 5 \\
        -1.5\pi, & 0.5\pi \\
        -2, & 1 \\
        -0.5, & 20 \\
        -\pi, & \pi
    \end{bmatrix}\quad
    \uv\bound = 
    \begin{bmatrix}
        0, & 7684 \\
        -0.6632, & 0.6632 \\
        -5163, & 5163
    \end{bmatrix}
\end{equation}

We use the geometry specification of the 1981 DeLorean ($l=4.267, w=1.988$ \unit{m}) \cite{goh2020toward,goh2016simultaneous} to represent the car agent as a 2D rectangular H-Polytope $\ol{B}$. 
The obstacles ${\ol{\Obs}_{\regtext{body}}}_1, {\ol{\Obs}_{\regtext{body}}}_2$ are two parked car obstacles centered at $[24, -16,-\pi]\tp$ and $[37, -16, -\pi]\tp$ each with the same geometry as $\ol{B}$.

Next, to ensure that the agent doesn't collide with the obstacles, we create a circular over approximation of $\ol{B}$ as $\Bc$ with a radius of $r=\sqrt{l^2+w^2}/2$. 
We then dilate the obstacle into:
\begin{align*}
    {\ol{\Obs}_1} &= {\ol{\Obs}_{\regtext{body}}}_1 \oplus \Bc \\
    {\ol{\Obs}_2} &= {\ol{\Obs}_{\regtext{body}}}_2 \oplus \Bc 
\end{align*}
so that as long as the geometrical center of the car agent $\wv$ does not enter $\ol{\Obs}_1$ and $\ol{\Obs}_2$ as formulated in Problem \ref{prob:reach-avoid-set}, the trajectory is safe.
Note that in this demo, the geometric center and the center of mass coincide.

In this demo, our desired state for the car includes position and orientation states. 
Therefore we expand the goal set dimension from the previously used workspace dimension $\R^{\ndim_w}$ to the planning state dimension $\R^{\ndim_p}$ by including a range of  $\theta$ as a goal state.
So, we slightly modify Problem \ref{prob:reach-avoid-set}'s goal reaching condition to
\begin{equation*}
    \proj[1:\ndim_p]{\trackv(\ts; \trackv_0, 
    \paramv)} \in \ol{\Xgoal}
\end{equation*}
We place the center of the goal in between $\ol{\Obs}_1$ and $\ol{\Obs}_2$ at \[\planv_\goal =[
    30.5 , -16 , -\pi
]\tp\] 
The goal set $\ol{\Xgoal}$ is then defined as a 3D cuboid H-Polytope centered at $\planv_\goal$ with its length in $[p_x, p_y, \theta]$ as $[3.7, 1.6, \pi/3]$.

Using the drift parking test environment, we collect 10,000 expert trajectories generated by feeding samples of $\paramv \in \K = [9, 11]\times[0.6109, 0.7854]$ into the drift parking controller described above.
The initial condition of all trajectories $\trackv_0$ is all zeros except for the velocity state $\vel_0 = 0.01$, as the tracking model experiences numerical instability at low velocities and sideslip angles.
The entire simulation process lasted for 2 \unit{h}. 
With the expert trajectories ready, we created the time-variant affine planning model using the formulation above.

\begin{figure}[ht]
    \centering
    \includegraphics[width=0.8\columnwidth]{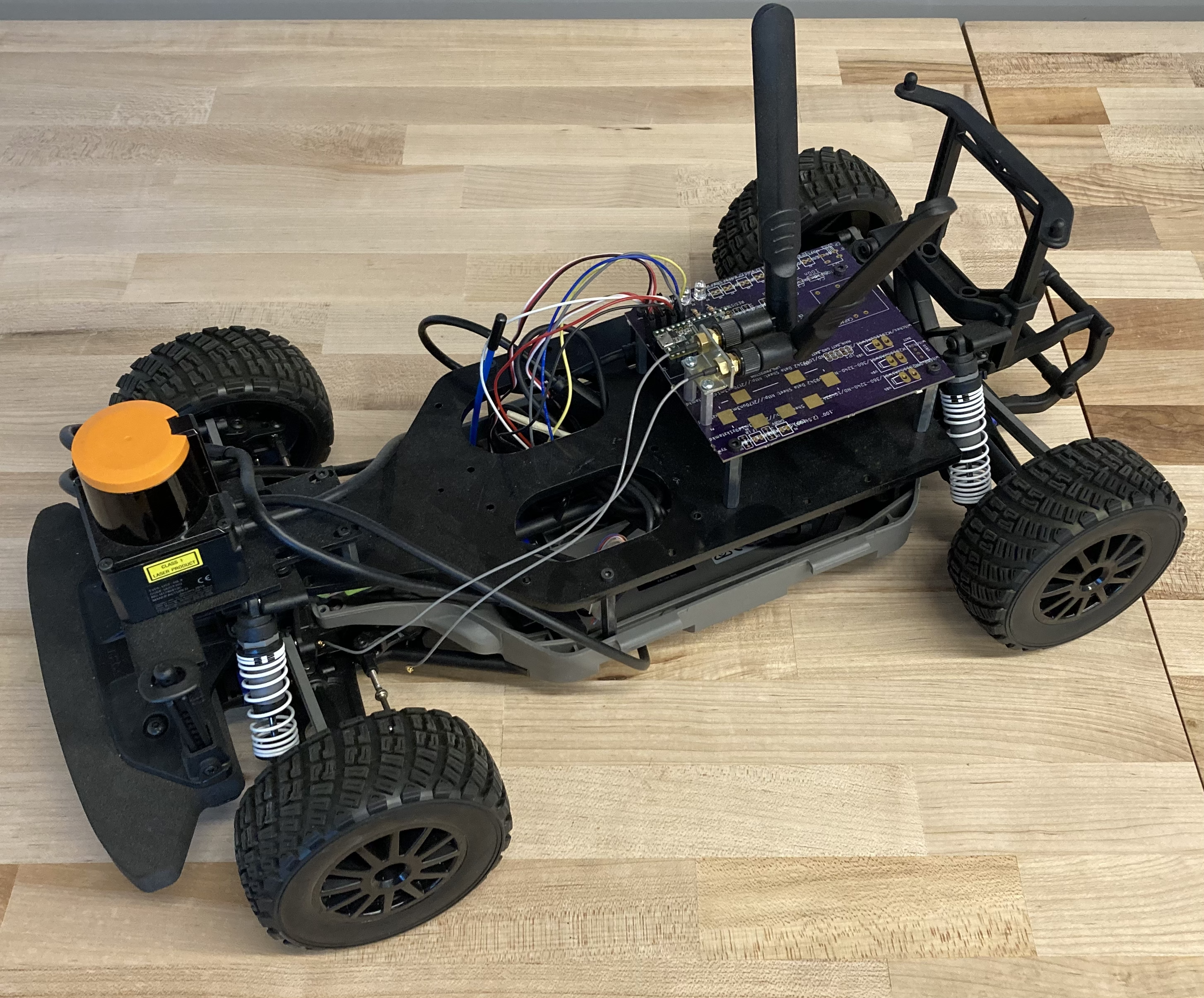}
    \caption{We are preparing an F1:10 class robotic vehicle to implement and deploy our safe drifting maneuvers.}
    \label{fig:F1tenth}
\end{figure}

\end{document}